\documentclass{article}

\usepackage[acronym]{glossaries}
\makeglossaries
\setacronymstyle{long-sc-short}
\newacronym{lsd}{lsd}{Lagrangian self-distillation}
\newacronym{esd}{esd}{Eulerian self-distillation}
\newacronym{pde}{pde}{partial differential equation}
\newacronym{ode}{ode}{ordinary differential equation}
\newacronym{fid}{fid}{Fréchet inception distance}
\newacronym{rl}{rl}{reinforcement learning}
\newacronym{imf}{imf}{Improved MeanFlow}

\usepackage{amsmath}
\usepackage{amssymb}
\usepackage{amsthm}
\usepackage{dsfont}
\usepackage{mathtools}
\allowdisplaybreaks

\usepackage{graphicx}
\usepackage{wrapfig}

\usepackage{booktabs}
\usepackage{multirow}
\usepackage{caption}

\usepackage[table]{xcolor}
\usepackage{parskip} % helps with indenting and paragraph layout
\usepackage{soul} % use \ul to underline, fixes wrapping issues

\usepackage{caption}
\usepackage{subcaption}
\usepackage[square,numbers]{natbib}
\usepackage{hyperref}
\hypersetup{
  colorlinks=true,
  linkcolor=teal,
  citecolor=gray,
  urlcolor=blue
}
\usepackage{cleveref}

\definecolor{MidnightBlue}{rgb}{0.1, 0.1, 0.44}
\definecolor{mahogany}{rgb}{0.75, 0.25, 0.0}

\usepackage{listings}
\usepackage{fancyvrb}
\fvset{fontsize=\normalsize}

\lstdefinestyle{mystyle}{
    commentstyle=\color{OliveGreen},
    keywordstyle=\color{BurntOrange},
    numberstyle=\tiny\color{black!60},
    stringstyle=\color{MidnightBlue},
    basicstyle=\ttfamily,
    breakatwhitespace=false,
    breaklines=true,
    captionpos=b,
    keepspaces=true,
    numbers=left,
    numbersep=5pt,
    showspaces=false,
    showstringspaces=false,
    showtabs=false,
    tabsize=2
}
\usepackage{tikz}
\usetikzlibrary{shapes,decorations,arrows,calc,arrows.meta,fit,positioning}
\tikzset{
    -Latex,auto,node distance =1 cm and 1 cm,semithick,
    state/.style ={circle, draw, minimum width = 0.7 cm},
    detstate/.style ={rectangle, draw, minimum width = 0.7 cm, minimum height = 0.7 cm},
    point/.style = {circle, draw, inner sep=0.04cm,fill,node contents={}},
    bidirected/.style={Latex-Latex,dashed},
    el/.style = {inner sep=2pt, align=left, sloped}
}
\usepackage[algoruled, algo2e]{algorithm2e}
\usepackage{algorithm}
\usepackage{algorithmic}

\usepackage{tcolorbox}
\usepackage{stmaryrd}
\usepackage{enumitem}

\usepackage{amsmath}
\usepackage{accents}
\usepackage{graphicx} % Required for \scalebox

\usepackage{tabularx}
\usepackage{booktabs}
\usepackage{array}
\usepackage{ragged2e}
\usepackage{colortbl}

\tcbuselibrary{breakable}

\usepackage{pifont}

\definecolor{darkgreen}{RGB}{0, 128, 0}
\definecolor{darkred}{RGB}{178, 34, 34}  % firebrick
\newcommand{\cmark}{\textcolor{green!60!black}{\ding{51}}}
\newcommand{\xmark}{\textcolor{red!80!black}{\ding{55}}}

\newcommand{\tr}{\operatorname{tr}}

\newtheorem*{assumption*}{Assumption}
\newtheorem{corollary}{Corollary}
\newtheorem*{corollary*}{Corollary}
\newtheorem{definition}{Definition}
\newtheorem*{definition*}{Definition}
\newtheorem{lemma}{Lemma}
\newtheorem*{lemma*}{Lemma}
\newtheorem{proposition}{Proposition}
\newtheorem*{proposition*}{Proposition}
\newtheorem{theorem}{Theorem}
\newtheorem*{theorem*}{Theorem}

\renewcommand{\mid}{~\vert~}
\newcommand{\mbx}{\mathbf{x}}
\newcommand{\mby}{\mathbf{y}}

\newcommand{\cL}{\mathcal{L}}
\newcommand{\cN}{\mathcal{N}}
\newcommand{\cO}{\mathcal{O}}

\newcommand{\E}{\mathop{\mathbb{E}}} % need mathop so it works with /limits

\newcommand{\bbR}{\mathbb{R}}
\newcommand{\R}{\mathbb{R}}

\newcommand{\Var}{\mathbb{V}\textrm{ar}}
\newcommand{\Cov}{\mathbb{C}\textrm{ov}}

\usepackage[preprint]{neurips_2026}
\usepackage[utf8]{inputenc} % allow utf-8 input
\usepackage[T1]{fontenc}    % use 8-bit T1 fonts
\usepackage{hyperref}       % hyperlinks
\usepackage{url}            % simple URL typesetting
\usepackage{booktabs}       % professional-quality tables
\usepackage{amsfonts}       % blackboard math symbols
\usepackage{nicefrac}       % compact symbols for 1/2, etc.
\usepackage{microtype}      % microtypography
\usepackage{xcolor}         % colors
\usepackage{amsmath}

\usepackage[utf8]{inputenc} % allow utf-8 input
\usepackage[T1]{fontenc}    % use 8-bit T1 fonts
\usepackage{hyperref}       % hyperlinks
\usepackage{url}            % simple URL typesetting
\usepackage{booktabs}       % professional-quality tables
\usepackage{amsfonts}       % blackboard math symbols
\usepackage{nicefrac}       % compact symbols for 1/2, etc.
\usepackage{microtype}      % microtypography
\usepackage{xcolor}         % colors

\newcommand{\slim}{{\color{magenta}\text{slim}}} % colored "simp" label — change name later
\usepackage{pifont}

\newtheorem{remark}{Remark}

\newcommand{\SG}{\mathrm{\textsc{sg}}}
\newcommand{\sgb}[1]{\SG\{#1\}}

\definecolor{dgreen}{rgb}{0.0, 0.52, 0.34}

\definecolor{dteal}{rgb}{0.0, 0.45, 0.50}

\usepackage{hyperref}
\usepackage{url}

\title{ How I learned to stop worrying and love StopGrads: Stationarity, Convergence, and a \\ case study on Flow Map Learning }

\author{%
    Max W. Shen$^{*}$\\
    Frontiers Research, \\ Prescient Design, Genentech \\
    \And 
    Mark Goldstein$^{*}$\\
    Flatiron Institute\\
    \And
    Zichu Wang\\
    MIT\\
    \AND
    Aahlad Puli\\
    NYU\\
    \And
    Rajesh Ranganath\\
    NYU
}

\begin{document}

\maketitle

\begin{abstract}
Stopgrads are widely used in training machine learning models, but stopgrads can alter the gradient, stationary points and convergence guarantees of the original objective, which can make stopgrad training theoretically ungrounded. We introduce a \textit{stopgrad regression principle}, which identifies a general template for stopgrad objectives with a closed-form characterization of stationary points and their uniqueness, unifying stopgrad objectives for flow maps, reinforcement learning, and diffusion samplers.
We provide theoretical grounding for optimizing stopgrad flow map objectives by showing their unique stationary point is the true flow map, and showing positive convergence results for Eulerian and Lagrangian objectives, including MeanFlow and improved MeanFlow. 
Remarkably, we show that under functional semi-gradient flow, the learned flow map has a closed-form expression composing the initial flow map and the true flow map.
We additionally use our stopgrad regression principle to propose modified stopgrad placements for flow map objectives which reduce training memory by $2\times$.
\end{abstract}

%%%%%%%%%%%%%%%%%%%%%%%%%%%%%%%%%%%%%%%%%%%%%%%%%%%%%%%%%%%%
\section{Introduction}
%%%%%%%%%%%%%%%%%%%%%%%%%%%%%%%%%%%%%%%%%%%%%%%%%%%%%%%%%%%%

Stop gradients (\textit{stopgrad}) are a basic primitive in automatic differentiation packages like PyTorch and JAX.
Stopgrads leave the forward pass unchanged, but in the backward pass, they return a zero gradient and block further gradient propagation. 
Stopgrads are widely used in machine learning, including in self-referential objectives where a model is supervised in terms of itself, as in
self-distillation, \gls{rl} \citep{tsitsiklis1997analysis}, consistency models and flow maps \citep{song2023consistency, geng2025meanflow}, representation learning \citep{ponce2026dual}, as well as variational inference \citep{roeder2017stickingthelanding, vaitl2024fast}.
For example, in \gls{rl}, popular algorithms such as temporal-difference learning and actor-critic methods rely on stopgrads in practice~\citep{mnih2015human,lillicrap2016continuous,silver2017mastering,sutton2018reinforcement}.

However, stopgrad objectives update models with \textit{semi-gradients}
(differentiating only some occurrences of the model), which are not the gradient of the original objective  \citep{sutton2018reinforcement}.
Semi-gradient update fields may be non-conservative, meaning they may not be the gradient of \textit{any} objective, and updates could follow a loop indefinitely without making progress. 
Whereas standard gradient descent is akin to marching through mountains and valleys, optimizing stopgrad objectives can be like wandering in Escher landscapes (Fig. \ref{fig:fields}).
These concerns motivate understanding stopgrad optimization theory, which seeks to understand the \textit{stationary points} and \textit{convergence} of stopgrad objectives.

\begin{figure}[t]
\centering
\begin{minipage}{.39\textwidth}
  \centering
  \includegraphics[width=\linewidth]{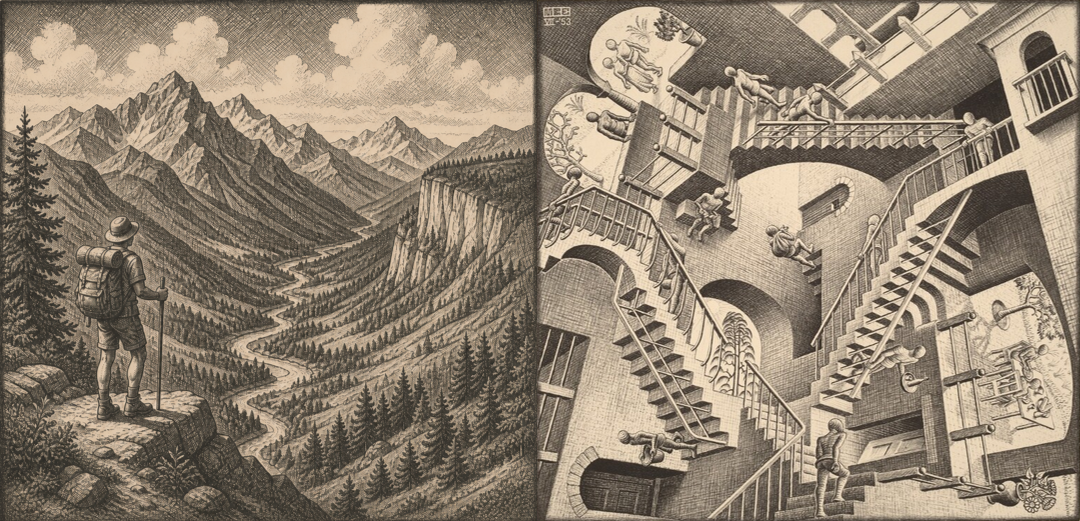}
\end{minipage}%
\hfill
\begin{minipage}{.59\textwidth}
\vspace{2.6em}
  \centering
  \includegraphics[width=\linewidth]{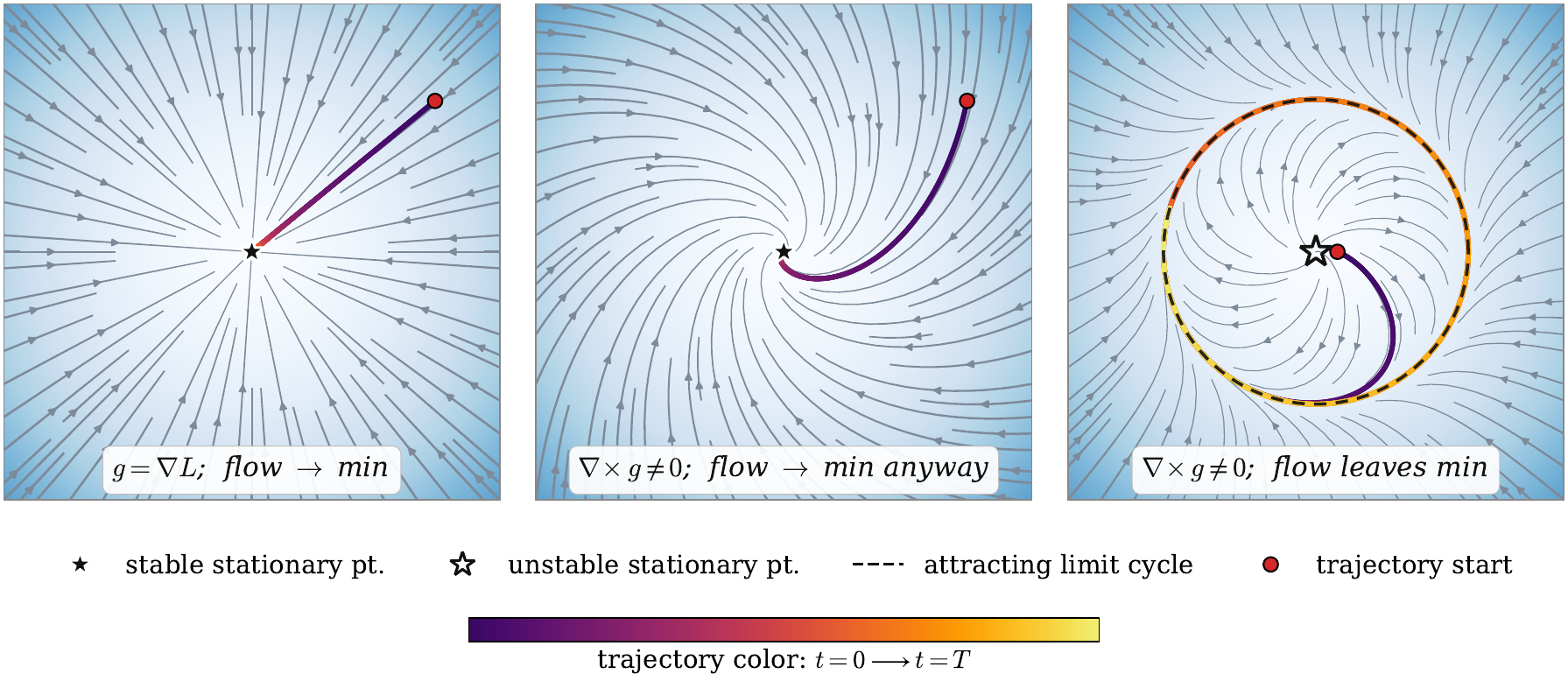}
\end{minipage}
\vspace{-5pt}
\caption{
\small{
    \textbf{Gradient and non-gradient update fields.}
    \textbf{Left:} Gradient fields are familiar valleys and mountains, while in stopgrad update fields, learning can make no progress and may not converge (\textit{Relativity, M.C. Escher}). 
    \textbf{Right:} 
      \textbf{(a)} A well-behaved gradient field.
      \textbf{(b, c)} Non-gradient fields can result from stopgrads.  While \textbf{(b)} still has a stable stationary point at the minimum, \textbf{(c)} has an \textit{unstable} stationary point; small perturbations move trajectories away from the minimum and toward the limit cycle. \textbf{In this work, we study a class of loss functions and stopgrad placements that yield 
     update fields that converge to desired stationary points.}
}}
\vspace{-10pt}
  \label{fig:fields}
\end{figure}
% \vspace{-10pt}

Recently, stopgrads have featured prominently in methods for training \textit{flow maps}, which learn to integrate probability-flow ODEs in few steps \citep{song2023consistency, song2023improved, lu2024simplifying, kim2023consistency, salimans2022progressive, boffi2024flowmapmatching,sabour2025align}.
MeanFlow \citep{geng2025meanflow} introduced a stopgrad objective but did not analyze its stationary points or convergence.
\citet{boffi2024flowmapmatching,boffi2025build} introduced the 
\gls{esd} and \gls{lsd} objectives
and proposed specific stopgrad placements;
they show the non-stopgrad loss has a unique minimum, but did not characterize the stationary points or convergence of the stopgrad variants.

Despite the ubiquity and empirical utility of stopgrads, 
no \textit{general} theoretical grounding exists for stopgrad objectives.
This paper formalizes stopgrads via the calculus of variations (\S\ref{sec:stopgrad_formalism}) and studies the \textit{stationary points} and \textit{convergence} of stopgrad objectives. Concretely, the paper makes the following contributions:

$\star$ \textbf{We introduce a \textit{stopgrad regression principle}} (\S\ref{sec:principle}), a general template for stopgrad objectives with a closed-form characterization of its stationary points as fixed points of the target operator.
The template describes objectives where the learned model appears once outside the stopgrad, and regresses against targets that are affine in labels and stopgrad terms of the learned model.
We also give a sufficient condition on the target operator under which a unique attainable minimizer of the ordinary (non-stopgrad) loss is the unique stationary point of the stopgrad loss, and we prove uniqueness directly for each flow map objective.
The stopgrad regression principle is general, encompassing stopgrad objectives for flow maps, \gls{rl}, and diffusion samplers \citep{havens2025adjointsampling}.
The principle's template is easy to check, so it assists in the design of new stopgrad objectives.

$\star$ 
\textbf{We apply the stopgrad regression principle to MeanFlow with stopgrad, and show that the true flow map is its unique stationary point}, as desired.
In contrast, removing the stopgrad removes the true flow map as a stationary point \citep{boffi2024flowmapmatching}, showing that the stopgrad in MeanFlow is necessary for principled optimization (\S\ref{sec:meanflow}).

$\star$
\textbf{We show that the true flow map is the unique stationary point of improved MeanFlow} (\S\ref{sec:imf}).

$\star$ \textbf{We derive two new stopgrad objectives by introducing stopgrads in placements prescribed by the regression principle template} in the ESD and LSD flow map objectives (\S\ref{sec:lsd}-\ref{sec:esd}), without changing the unique stationarity of the true flow map.
We call these \textit{slim} \gls{esd} and \gls{lsd}, as our stopgrads reduce training memory by $2\times$.
Concurrent work has proposed equivalent objectives \citep{kim2026stabilizing, potaptchik2026discreteflowmaps}, whereas we derive them in a principled manner from stopgrad optimization perspectives.
We train flow maps and show that slim \gls{esd} and \gls{lsd} match or improve on \gls{fid} on \textsc{cifar-10} and \textsc{ImageNet}.

$\star$ \textbf{We show positive convergence results for MeanFlow, \gls{imf}, slim \gls{esd}, slim \gls{lsd}} (\S\ref{sec:convergence}). We verify that their semi-gradients are not gradients of any objective, which raises questions on convergence.
For slim \gls{esd}, \gls{lsd}, and \gls{imf}, we study convergence of distillation training following perfect flow-matching pretraining, whereas we study MeanFlow training from scratch.
Remarkably, we show that under functional semi-gradient flow, the \textit{learned flow map is a closed-form composition of the initial flow map and the true flow map}, with ``hand-off time'' varying with optimization time (Fig. \ref{fig:ftau_closed_form}).
Under standard assumptions, convergence occurs exponentially fast.

\begin{figure}[h!]
  \begin{center}
    \includegraphics[width=0.8\textwidth]{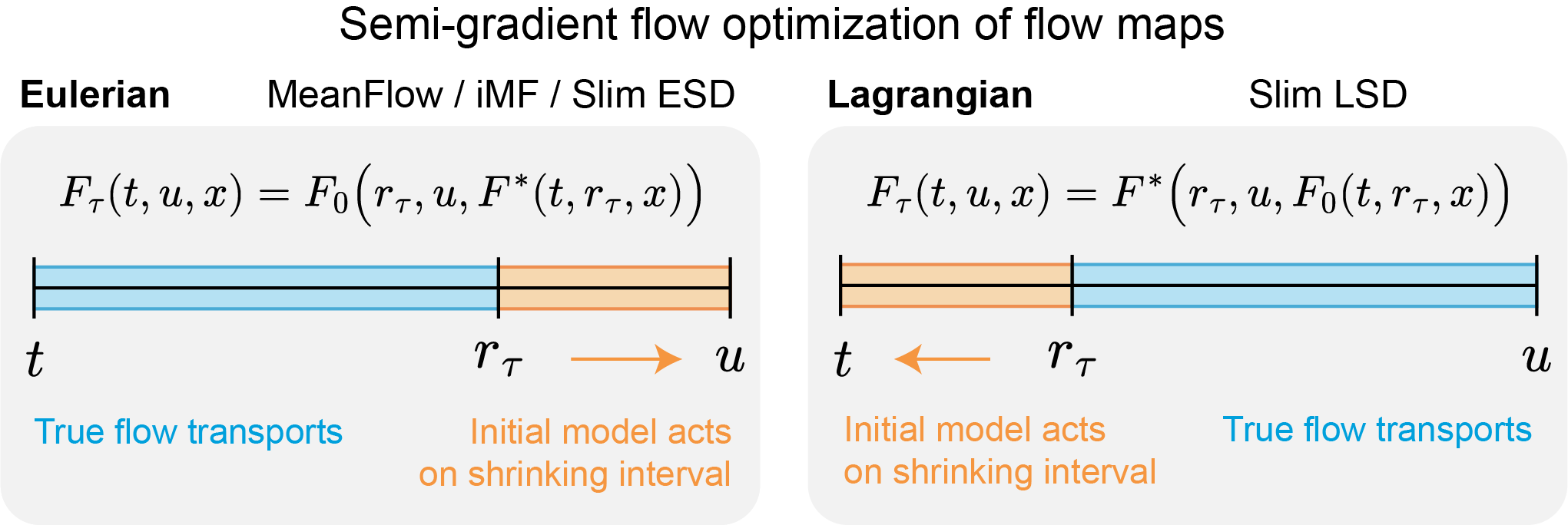}
  \end{center}
  \caption{ \small{ \textbf{Convergence via closed-form composition (\S\ref{sec:convergence}).} Under functional semi-gradient flow, the learned flow map $F_\tau$ at optimization time $\tau$ is a composition of the initial $F_0$ and the true flow map $F^*$}. As optimization time $\tau$ passes, the ``hand-off'' time $r_\tau$ interpolates towards $u$ for Eulerian and $t$ for Lagrangian, giving convergence to the true flow map.
  \vspace{-5pt}
  }
  \label{fig:ftau_closed_form}
\end{figure}

%%%%%%%%%%%%%%%%%%%%%%%%%%%%%%%%%%%%%%%%%%%%%%%%%%%%%%%%%%%%%%%%%%%%%%%%%%%%%%%%%
\section{Stopgrads in ML: from PyTorch to Proofs}
\label{sec:stopgrad_formalism}
%%%%%%%%%%%%%%%%%%%%%%%%%%%%%%%%%%%%%%%%%%%%%%%%%%%%%%%%%%%%%%%%%%%%%%%%%%%%%%%%%
We formalize a mathematical definition of the stopgrad operator that is consistent with what is computed in practice.
We start with stopgrads computed in programming libraries, move to directional derivatives, and then generalize to functional derivatives, which form the basis of our results.

%%%%%%%%%%%%%%%%%%%%%%%%%%%%%%%%%%%%%%%%%%%%%%%%%%%%%%%%%%%%%%%%%%%%%%%%%%%%%%%%%
\paragraph{Stopgrads in practice}
%%%%%%%%%%%%%%%%%%%%%%%%%%%%%%%%%%%%%%%%%%%%%%%%%%%%%%%%%%%%%%%%%%%%%%%%%%%%%%%%%
Autodifferentiation frameworks such as PyTorch 
\citep{paszke2019pytorch} and JAX \citep{jax2018github} evaluate expressions in a forward pass, then compute parameter gradients in a backward pass via the chain rule.
The \emph{stopgrad} (or \texttt{detach}) operation acts as the identity in the forward pass but returns a zero gradient in the backward pass.
Let $\SG$ denote stopgrad as a mathematical operator.
The rule is:
\begin{align}
 \SG\{f_\theta\}:= f_\theta, \quad  \nabla_\theta^{\text{torch}} \SG\{f_\theta\} := 0.
\end{align}
As an example, consider:
\begin{align}
 \mathcal{L}_{\text{ex}}(\theta) := \mathbb{E}[\| f_\theta(\mbx)  - \sgb{\nabla_x f_\theta(\mbx)}\mby \|^2]
\end{align}
Due to the stopgrad, machine learning libraries would compute the parameter update direction:
\begin{align}
    \label{eq:example_functional}
   \nabla^{\text{torch}}_\theta  \mathcal{L}_{\text{ex}}(\theta)
    &=2\mathbb{E}[ (f_\theta(\mbx)  - \sgb{\nabla_x f_\theta(\mbx)}\mby) \cdot  \nabla_\theta
    f_\theta(\mbx) ].
\end{align}
This update direction is not the gradient of $\cL_{\text{ex}}$: the full gradient would also differentiate the Jacobian, computing $\nabla_\theta \nabla_x f_\theta$. We give this update direction a formal name below. 

%%%%%%%%%%%%%%%%%%%%%%%%%%%%%%%%%%%%%%%%%%%%%%%%%%%%%%%%%%%%%%%%%%%%%%%%%%%%%%%%%
\paragraph{Stopgrad for directional derivatives}
%%%%%%%%%%%%%%%%%%%%%%%%%%%%%%%%%%%%%%%%%%%%%%%%%%%%%%%%%%%%%%%%%%%%%%%%%%%%%%%%%
We now formalize the PyTorch stopgrad
as a precise operation on directional derivatives.
The directional derivative of $\cL$ at $\theta$ in direction $h$ is:
\begin{align}
    D_h \mathcal{L}(\theta)
    = \lim_{\epsilon \to 0}\frac{\mathcal{L}(\theta + \epsilon h) - \mathcal{L}(\theta)}{\epsilon}
    = \frac{d}{d\epsilon}\, \mathcal{L}(\theta + \epsilon h)\Big|_{\epsilon=0}.
\end{align}
We start by decomposing $\cL$ into a function
$\cO$ of two copies of $\theta$, with $\Theta_1$ tagging non-stopgrad occurrences and $\Theta_2$ tagging stopgrad occurrences.
For the example in \eqref{eq:example_functional},
\begin{align}
 \mathcal{O}_{\text{ex}}[\Theta_1, \Theta_2] = \mathbb{E}[\| f_{\Theta_1}(\mbx)  - \nabla_x f_{\Theta_2}(\mbx)\mby\|^2].
\end{align}
We then write $\cL(\theta) = \cO[\theta, \sgb{\theta}]$ as shorthand for the two-step procedure: tag each occurrence of $\theta$ as non-stopgrad ($\Theta_1$) or stopgrad ($\Theta_2$), then evaluate at $\Theta_1 = \Theta_2 = \theta$. The stopgrad rule then corresponds to differentiating only with respect to $\Theta_1$, i.e., perturbing only non-stopgrad parameters:
\begin{align*}
 D_h \mathcal{L}_{\text{ex}}(\theta)
   &=
   \frac{d}{d\epsilon}\, \mathbb{E}[\|f_{\theta + \epsilon h}(\mbx) - \nabla_x f_\theta(\mbx)\mby \|^2]\big|_{\epsilon = 0}
   =
   2\mathbb{E}\big[
    \big(f_{\theta}(\mbx) - \nabla_x f_\theta(\mbx)\mby  \big)
    \big[\nabla_\theta f_\theta(\mbx)\big]^\top
    h
    \big],
\end{align*}
which has maximum magnitude for $h$ in the direction $\nabla_\theta^{\text{torch}}\mathcal{L}$, matching the PyTorch computation.

%%%%%%%%%%%%%%%%%%%%%%%%%%%%%%%%%%%%%%%%%%%%%%%%%%%%%%%%%%%%%%%%%%%%%%%%%%%%%%%%%
\paragraph{Functional derivatives}
%%%%%%%%%%%%%%%%%%%%%%%%%%%%%%%%%%%%%%%%%%%%%%%%%%%%%%%%%%%%%%%%%%%%%%%%%%%%%%%%%
\label{appsec:flowmap_first_variation_definition}
To focus on the properties of objectives with stopgrads,
we generalize from finite-dimensional parameters to function space.
Working in function space makes the analysis coordinate-free: the first variation depends only on $f$ and a perturbation $h$, not on any parameterization, and characterizes which solutions can be found in the idealized limit of infinite model capacity.

We take $H := L^2(\mu, \mathbb{R}^d)$ as the pairing space where first variations are represented and semi-gradient residuals are compared. We take a linear subspace $\mathcal{M} \subset C^1 \cap H$ as the function or model class. We take  $\mathcal{D} \subset \mathcal{M}$ as the admissible perturbation class.
\begin{tcolorbox}[boxrule=0pt, frame empty]
\begin{definition}[First variation]
\label{def:first-variation}
The \emph{first variation} of a functional $\cL : \mathcal{M} \to \bbR$ at $f \in \mathcal{M}$ in direction $h \in \mathcal{D}$ is
\begin{align}
  \delta \cL[f; h]
  := \frac{d}{d\epsilon}\, \cL[f + \epsilon h]\big|_{\epsilon=0}.
\end{align}
\end{definition}
\end{tcolorbox}
For the functionals studied in this paper, direct computation 
(\S\ref{appsec:proofs_mf}) shows that
$\delta \mathcal{L}$
can be written as an inner product against an element $g[f] \in H$:
\begin{align}
  \delta \cL[f;\, h]
  \;=\;
  \E_\mu\!\big[\,g[f](\mbx)\cdot h(\mbx)\,\big]
  \;=\;
  \langle g[f],\, h\rangle_H,
  \qquad
  \langle \phi, \psi \rangle_H := \E_\mu[\phi(\mbx) \cdot \psi(\mbx)].
\end{align}
$g[f]$ is the \emph{functional gradient} of $\cL$ at $f$. Whenever $-g[f]$ is itself an admissible direction, $-g[f] \in \mathcal{D}$, it is the steepest-descent direction in $\langle \cdot, \cdot \rangle_\mu$.
We write $\langle \cdot, \cdot \rangle_\mu$ for $\langle \cdot, \cdot \rangle_H$ when the measure needs emphasis.

%%%%%%%%%%%%%%%%%%%%%%%%%%%%%%%%%%%%%%%%%%%%%%%%%%%%%%%%%%%%%%%%%%%%%%%%%%%%%%%%%
\paragraph{Stopgrads for functional derivatives}
%%%%%%%%%%%%%%%%%%%%%%%%%%%%%%%%%%%%%%%%%%%%%%%%%%%%%%%%%%%%%%%%%%%%%%%%%%%%%%%%%
\label{sec:sg_def}
We now generalize the two-argument construction from directional derivatives to functionals. Let us decompose $\mathcal{L}$ into a functional $\cO$ of two functions $f_1$ and $f_2$.
We then write $\cL[f] = \cO[f, \sgb{f}]$ as shorthand for labeling all non-stopgrad occurrences as $f_1$ and stopgrad occurrences as $f_2$, evaluating at $f_1=f_2=f$ such that the value of $\mathcal{L}[f]$ is $\cO[f,f]$, and computing the first variation by perturbing only the first argument:
\begin{align}
\label{eq:functional_stopgrad_rule}
  \delta \cL[f; h]
    = \frac{d}{d\epsilon}\, \cO[f + \epsilon h,\, f]\big|_{\epsilon=0}.
\end{align}

When $\cO$ depends non-trivially on its second argument, the resulting functional gradient 
$g$ such that $\delta \mathcal{L} = \langle g[f], h \rangle$ generally differs from the usual functional gradient; this is because the stopgrad rule perturbs only the first slot of $\cO$.

We call this object $g$ a \textit{semi-gradient}, borrowing terminology from \citet{sutton2018reinforcement}, where the term refers to the special case arising in \textsc{td} learning.

%%%%%%%%%%%%%%%%%%%%%%%%%%%%%%%%%%%%%%%%%%%%%%%%%%%%%%%%%%%%%%%%%%%%%%%%%%%%%%%%%
\begin{tcolorbox}[boxrule=0pt, frame empty]
\begin{definition}[Semi-gradient]
\label{def:semigradient}
Let $\cL[f] = \cO[f, \sgb{f}]$. The \emph{semi-gradient} of $\cL$ at $f$ is the element $g[f] \in H$ representing the first variation \eqref{eq:functional_stopgrad_rule}: $\delta \mathcal{L} = \langle g[f], h \rangle$.
\end{definition}
\end{tcolorbox}

The semi-gradient flow $\partial_\tau f_\tau = -g[f_\tau]$ is the function space analogue of the autograd update direction; updates stop precisely where $g = 0$.

%%%%%%%%%%%%%%%%%%%%%%%%%%%%%%%%%%%%%%%%%%%%%%%%%%%%%%%%%%%%%%%%%%%%%%%%%%%%%%%%%
\begin{tcolorbox}[boxrule=0pt, frame empty]
\begin{definition}[Stationary point]
\label{def:stationary}
$f^*$ is a stationary point of a functional $\cL$ if $\delta \cL[f^*; h] = 0$ for all $h \in \mathcal{D}$.
\end{definition}
\end{tcolorbox}
As shown by the fundamental lemma of variations (Lemma~\ref{lem:fundamental}), $\delta \cL=0$ for all $h$ implies that $g=0$ on $\operatorname{supp}\mu$.
Our theoretical results thus characterize stationary points by computing $\delta \mathcal{L}$ using this stopgrad rule, and solving for $g=0$.

%%%%%%%%%%%%%%%%%%%%%%%%%%%%%%%%%%%%%%%%%%%%%%%%%%%%%%%%%%%%%%%%%%%%%%%%%%%%%%%%%
\section{ Stopgrad Regression Principle }
\label{sec:principle}
%%%%%%%%%%%%%%%%%%%%%%%%%%%%%%%%%%%%%%%%%%%%%%%%%%%%%%%%%%%%%%%%%%%%%%%%%%%%%%%%%

Here, we present a unifying stopgrad regression principle.
This principle identifies a general template for stopgrad objectives where the learned function appears in only one term outside of stopgrads, though it can appear within stopgrads in multiple terms.
When the stopgrad-ed operator takes a certain affine form, our stopgrad regression principle characterizes the stationary points as the fixed points of the target operator.
Whether that fixed point is unique is a separate question: we give a sufficient condition in Corollary~\ref{cor:uniqueQ}, and establish uniqueness for each flow map objective in \S\ref{sec:flowmap_stationary}.

\begin{tcolorbox}[boxrule=0pt, frame empty]
    \begin{theorem}[Stopgrad Regression Principle]
    \label{thm:principle}
    Let \(\mbx\) denote the inputs, with distribution \(\mu\),
    and let \(\mby\) denote the labels.
    Assume \(\mu\) has full support on an open set \(S \subseteq \bbR^n\) (that is, \(\mu(U) > 0\) for every nonempty open \(U \subseteq S\), and that \(\operatorname{supp}\mu \subseteq \bar S\), the closure of \(S\)).
    Let \(T[f,\mby]\) be an operator that is
    \emph{affine in \(\mby\)}, meaning
    \begin{align}
      \label{eq:affine-target}
      T[f,\mby](\mbx)
      =
      P[f](\mbx) + Q[f](\mbx)\,\mby + R(\mbx)\,\mby
    \end{align}
    for operators \(P,Q\) that depend on \(f\) but not on \(\mby\), and \(R\) independent of both \(f\) and \(\mby\).
    Consider the functional
    \begin{align}
      \label{eq:principle_loss}
      \cL[f]
      :=
      \frac12\,\E_{\mbx,\mby}\Big[
        \big\|
          f(\mbx) - \sgb{T[f,\mby](\mbx)}
        \big\|^2
      \Big].
    \end{align}
    Let \(f^* \in \mathcal{M}\) with \(\cL[f^*] < \infty\) and \(\mbx \mapsto T[f^*,\,\E[\mby \mid \mbx]](\mbx)\) continuous on \(\bar S\).
    Then \(f^*\) is a stationary point of \(\cL\)
    (Definition~\ref{def:stationary}, with \(\mathcal{D} \supseteq C^1_c(S)\), the \(C^1\) functions with compact support in \(S\))
    if and only if
    \begin{align}
      \label{eq:principle_condition}
      f^*(\mbx) = T[f^*,\, \E[\mby \mid \mbx]](\mbx)
      \quad \text{for all } \mbx \in \operatorname{supp}\mu.
    \end{align}
    \end{theorem}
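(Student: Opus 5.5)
The plan is to compute the semi-gradient explicitly with the stopgrad rule \eqref{eq:functional_stopgrad_rule}, reduce it to a conditional expectation, and then apply the fundamental lemma of variations (Lemma~\ref{lem:fundamental}). Write $\cO[f_1,f_2] = \tfrac12\E_{\mbx,\mby}\|f_1(\mbx) - T[f_2,\mby](\mbx)\|^2$, so that $\cL[f]=\cO[f,\sgb{f}]$. Perturbing only the first slot gives
\begin{align*}
  \delta\cL[f;h] = \E_{\mbx,\mby}\big[(f(\mbx) - T[f,\mby](\mbx))\cdot h(\mbx)\big].
\end{align*}
Because the second slot is frozen, $T$ is never differentiated. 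This is the only place the stopgrad enters. Exchanging derivative and expectation is justified by $\cL[f]<\infty$, together with $h\in C^1_c(S)$ being bounded: differentiating the quadratic in $\epsilon$ gives an exact expression, and Cauchy--Schwarz controls the cross term.

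Next, I condition on $\mbx$ via the tower property. Since $h(\mbx)$ is $\mbx$-measurable,
\begin{align*}
  \delta\cL[f;h] = \E_\mu\big[(f(\mbx) - \E[T[f,\mby](\mbx)\mid\mbx])\cdot h(\mbx)\big].
\end{align*}
Affineness \eqref{eq:affine-target} is used here. The quantities $P[f](\mbx)$, $Q[f](\mbx)$ and $R(\mbx)$ are $\mbx$-measurable and independent of $\mby$, so
\begin{align*}
  \E[T[f,\mby](\mbx)\mid\mbx] = P[f](\mbx)+(Q[f](\mbx)+R(\mbx))\E[\mby\mid\mbx] = T[f,\E[\mby\mid\mbx]](\mbx).
\end{align*}
The semi-gradient is therefore $g[f](\mbx) = f(\mbx) - T[f,\E[\mby\mid\mbx]](\mbx)$. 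Integrability of $\mby$ and of the conditional expectation follows from $\cL[f^*]<\infty$, which also puts $g[f^*]$ in $H$.

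\emph{If direction.} If \eqref{eq:principle_condition} holds, then $g[f^*]=0$ $\mu$-a.e., so $\delta\cL[f^*;h]=0$ for every $h$. Hence $f^*$ is stationary.

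\emph{Only-if direction.} Suppose $\E_\mu[g[f^*]\cdot h]=0$ for all $h\in C^1_c(S)$. I would invoke Lemma~\ref{lem:fundamental} to conclude $g[f^*]=0$ on $\operatorname{supp}\mu$. If proving it directly, I would argue as follows. The function $g[f^*]$ is continuous on $\bar S$, since $f^*\in C^1$ and $T[f^*,\E[\mby\mid\mbx]]$ is continuous by hypothesis. Suppose some component satisfies $g_i(\mbx_0)>0$ at a point $\mbx_0\in S$. Then $g_i>0$ on a ball $U\subset S$ around $\mbx_0$. Choosing $h = \phi e_i$ with $\phi\ge0$ a bump in $C^1_c(U)$ gives $\delta\cL>0$, because $\mu(U)>0$ by full support. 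This is a contradiction, so $g=0$ on $S$. Continuity extends this to $\bar S\supseteq\operatorname{supp}\mu$.

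The main obstacle is this last step. Stationarity only yields $g=0$ $\mu$-a.e., and upgrading that to the pointwise statement on $\operatorname{supp}\mu$ needs both the continuity hypothesis and the fact that $\operatorname{supp}\mu\subseteq\bar S$, since perturbations are supported only in the open set $S$. The hypotheses of the statement are tailored to exactly this. A second place to be careful is that $\E[\mby\mid\mbx]$ is defined only $\mu$-a.e. Continuity of $T[f^*,\E[\mby\mid\mbx]]$ should be read as asserting that a continuous version exists, and I would fix that version throughout.
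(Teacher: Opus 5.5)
Your proposal is correct and follows essentially the same route as the paper: perturb only the non-stopgrad slot so the exact quadratic expansion leaves the linear term $\E[(f-T[f,\mby])\cdot h]$; use the tower property and affineness to identify the semi-gradient $g[f] = f - T[f,\E[\mby\mid\mbx]]$, which lies in $H$ because $\cL<\infty$; then apply Lemma~\ref{lem:fundamental} on $S$ and extend to $\bar S\supseteq\operatorname{supp}\mu$ by continuity. One small overstatement: $\cL[f^*]<\infty$ does not by itself give integrability of $\mby$ (for example when $Q[f^*]+R$ is degenerate), so, as the paper implicitly does, you should take the existence of $\E[\mby\mid\mbx]$ as presupposed by the statement rather than derive it.
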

\end{tcolorbox}
The proof is in \S\ref{appsec:proofs_mf}.
The key idea is that since \(f\) appears exactly once outside \(\SG\),
the first variation is represented in $H$ as
\(\delta \cL[f;h] = \langle g[f], h\rangle_\mu\),
where the semi-gradient \(g[f] \in H\) is given pointwise by $g[f](\mbx) = f(\mbx) - T[f,\, \E[\mby \mid \mbx]](\mbx)$.
Since $g[f]$ is continuous, the fundamental lemma of variations (Lemma~\ref{lem:fundamental}) then shows that stationarity is equivalent to \(g[f] = 0\) on $\operatorname{supp}\mu$.
In Table \ref{tab:form1_six_examples} and \S\ref{appsec:unifying_stopgrad_objectives}, we show that the stopgrad regression principle unifies objectives for flow maps, \gls{rl}, and diffusion samplers.

In general, the stationary set for \eqref{eq:principle_condition} may have several solutions or none.
Here, we provide a simple condition, called \textit{additive label noise}, that characterizes uniqueness by Corollary \ref{cor:uniqueQ}. Additive label noise means the operator $T[f, \mby]$ does not multiplicatively ``mix'' $f$ and $\mby$.

\begin{tcolorbox}[boxrule=0pt, frame empty]
    \begin{definition}[Additive label noise]
    \label{def:additive}
    The target operator $T$ in \eqref{eq:affine-target} has \emph{additive label noise} if $Q[f] \equiv Q$ does not depend on $f$.
    \end{definition}
\end{tcolorbox}

This condition is common: it holds for slim \gls{lsd}, slim \gls{esd}, \gls{imf}, and adjoint RAM ($Q \equiv 0$ in Table~\ref{tab:form1_six_examples}), though it fails for MeanFlow and for \textsc{td(}{\small 0}\textsc{)}/Q-learning.
When the label noise is not additive, uniqueness can still be established directly, as we do for MeanFlow (\S\ref{sec:meanflow}) and for the \gls{rl} objectives (\S\ref{appsec:unifying_stopgrad_objectives}).

In general, the loss can be decomposed as $\cL[f] = \tfrac12\|g[f]\|_\mu^2 + V[f]$, where $V[f]$ is a conditional-variance ``floor'' (Proposition~\ref{prop:floor}, \S\ref{appsec:proofs_mf}).
In general $V[f]$ depends on $f$ through $Q[f]$, so comparing loss values across candidates says nothing about which are stationary.
Under additive noise, $V[f]$ is independent of $f$, allowing uniqueness to be read off the loss.

\begin{tcolorbox}[boxrule=0pt, frame empty]
\begin{corollary}[Uniqueness under additive label noise]
\label{cor:uniqueQ}
Assume the hypotheses of Theorem~\ref{thm:principle} hold for every $f \in \mathcal{M}$, suppose $T$ has additive label noise (Definition~\ref{def:additive}), and let $V := \tfrac12\,\E\big\|(Q + R)\,\xi\big\|^2$ with $\xi = \mby - \E[\mby \mid \mbx]$ as above.
Then $\cL[f] \ge V$ for all $f \in \mathcal{M}$, and $f \in \mathcal{M}$ is a stationary point of $\cL$ if and only if $\cL[f] = V$.
In particular, if $\cL$ attains the value $V$ at exactly one $f^* \in \mathcal{M}$, then $f^*$ is the unique stationary point of $\cL$ in $\mathcal{M}$.
\end{corollary}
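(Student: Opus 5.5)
The plan is to establish the bias--variance decomposition $\cL[f] = \tfrac12\|g[f]\|_\mu^2 + V[f]$ of Proposition~\ref{prop:floor} directly, then use additive label noise to make the floor $V[f]$ constant, and finally invoke Theorem~\ref{thm:principle} to identify stationarity with $g[f]=0$.

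Fix $f\in\mathcal{M}$ and write $\bar{\mby}(\mbx) := \E[\mby\mid\mbx]$ and $\xi := \mby - \bar{\mby}(\mbx)$. Since $T$ is affine in $\mby$, we have
\[
T[f,\mby](\mbx) = T[f,\bar{\mby}](\mbx) + (Q[f](\mbx)+R(\mbx))\,\xi .
\]
The residual therefore splits as
\[
f(\mbx)-T[f,\mby](\mbx) = g[f](\mbx) - (Q[f]+R)(\mbx)\,\xi,
\]
where $g[f](\mbx)=f(\mbx)-T[f,\bar{\mby}](\mbx)$ is the semi-gradient identified in the proof of Theorem~\ref{thm:principle}. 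Expanding the square gives three terms. The cross term $\E[g[f](\mbx)\cdot (Q[f]+R)(\mbx)\xi]$ vanishes after conditioning on $\mbx$, because $g[f]$, $Q[f]$ and $R$ are functions of $\mbx$ alone (the stopgrad affects only the variation, not the value) and $\E[\xi\mid\mbx]=0$. What remains is
\[
\cL[f]=\tfrac12\|g[f]\|_\mu^2+\tfrac12\E\|(Q[f]+R)\xi\|^2 .
\]

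The main obstacle is justifying these manipulations for an arbitrary $f\in\mathcal{M}$, since the statement gives only $\cL[f]<\infty$, not integrability of each piece. I would first handle this with a truncation argument. Conditioning on $\mbx$ as a conditional $L^2$ orthogonality shows $\E[\|\text{residual}\|^2\mid\mbx] = \|g[f](\mbx)\|^2 + \E[\|(Q+R)\xi\|^2\mid\mbx]$ pointwise, as long as the conditional second moment of the residual is finite. That moment is finite $\mu$-a.e. precisely because $\cL[f]<\infty$. Integrating this nonnegative identity with Tonelli then gives the decomposition without any separate integrability assumption on the cross term. If $V$ itself were infinite the claim would be vacuous, so I will assume $V<\infty$ as implicit.

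Under additive label noise $Q[f]\equiv Q$, so the floor equals the constant $V$ in the statement, independent of $f$. Hence $\cL[f]\ge V$, with equality if and only if $g[f]=0$ $\mu$-a.e. By continuity of $g[f]$ (which follows from the continuity hypothesis on $T$ together with $f\in C^1$) and full support of $\mu$ on $S$, vanishing $\mu$-a.e. is equivalent to $g[f]=0$ on $\operatorname{supp}\mu$. By Theorem~\ref{thm:principle}, whose hypotheses are assumed for every $f$, this condition holds exactly when $f$ is stationary. This proves the equivalence. Finally, if $\cL=V$ at exactly one $f^*$, then any stationary $f$ satisfies $\cL[f]=V$ and so $f=f^*$. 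Conversely $f^*$ itself is stationary, which gives uniqueness.
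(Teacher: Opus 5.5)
Your proposal is correct and follows the paper's route: the paper proves the same decomposition $\cL[f]=\tfrac12\|g[f]\|_\mu^2+V[f]$ (Proposition~\ref{prop:floor}), removes the cross term by the tower property, notes that $V[f]$ is constant under additive noise, and identifies $\|g[f]\|_\mu=0$ with stationarity via Theorem~\ref{thm:principle}, continuity, and full support. The only difference is the integrability bookkeeping: you use a conditional variance decomposition plus Tonelli, whereas the paper uses the $L^2$-contraction bound and Cauchy--Schwarz; your route also yields $V\le\cL[f]<\infty$ directly, so you do not need to assume $V<\infty$.
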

\end{tcolorbox}
The proof is in \S\ref{appsec:proofs_mf}.
Additive label noise excludes multiplying the noise by an $f$-dependent coefficient.

\paragraph{Connection to fixed-point iteration}
In reinforcement learning, the value function is the fixed point of the Bellman operator \citep{bellman1954theory, sutton2018reinforcement, bertsekas2025neuro} in the \textsc{td(}{\small 0}\textsc{)} row of Table \ref{tab:form1_six_examples}. How does this relate to stop-gradient regression? Taking the semi-gradient of the stop-gradient regression and applying a unit step yields
\[
f \;\leftarrow\; f - (f - T [f,\mathbb{E}[\mby|\mbx]]) \;=\; T[f,\mathbb{E}[\mby|\mbx]]
\]
which is exactly fixed-point iteration of $T$. The update vanishes precisely when $f=T[f]$, so the zeros of the semi-gradient coincide with 
the fixed-point of $T$ by construction. In the \gls{rl} case, 
this fixed point is the desired value function, and it is unique because the Bellman operator is a $\gamma$-contraction for $\gamma < 1$ \citep{shapley1953stochastic,blackwell1965discounted,denardo1967contraction} (\S\ref{appsec:unifying_stopgrad_objectives}).
Our perspective highlights that this equivalence is not specific to Bellman iteration: semi-gradient updates of stop-gradient regression implement fixed-point iteration of the underlying operator, including for problems not typically framed in fixed-point terms,
such as \gls{pde}-based flow-map loss functionals.

\begin{table*}[t]
    \centering
    \scriptsize
    \setlength{\tabcolsep}{3pt}
    \renewcommand{\arraystretch}{0.0}

    \arrayrulecolor{black}
    \begin{tabularx}{\textwidth}{
    >{\RaggedRight\arraybackslash}p{1.6cm}
    >{\RaggedRight\arraybackslash}p{0.9cm}
    >{\RaggedRight\arraybackslash}p{1.1cm}
    >{\RaggedRight\arraybackslash}X
    >{\RaggedRight\arraybackslash}p{3.0cm}
    % >{\RaggedRight\arraybackslash}X
    >{\RaggedLeft\arraybackslash}p{1.5cm}
    >{\Centering\arraybackslash}p{1.1cm}
    % >{\RaggedLeft\arraybackslash}p{1.0cm}
    % >{\RaggedRight\arraybackslash}p{1.0cm}
    }
    \toprule
    Objective & \(\mbx\) & \(\mby\) & \(P[f](\mbx)\) & \(Q[f](\mbx)\mby\) & \(R(\mbx)\mby\) & Additive noise \\
    \midrule
    
    \arrayrulecolor{black!15}
    MeanFlow
    &
    \((t,u,x)\)
    &
    \(\dot x_t\)
    % \(\dot{\mbx}_t\)
    &
    \((u-t)\,\partial_t f(\mbx)\)
    &
    \((u-t)\,\partial_x f(\mbx)\dot \mbx_t\) 
    &
    \(\dot x_t\)
    &
    \xmark
    \\
    \midrule

    Improved MeanFlow
    &
    \((t,u,x)\)
    &
    \(\dot x_t\)
    &
    \(
    (u-t)\big[
      \partial_t f(\mbx)
      +(\partial_x f)(\mbx)\,f(t,t,x)
    \big]
    \)
    &
    \(0\)
    &
    \(\dot x_t\)
    &
    \cmark
    \\
    \midrule
    
    Slim \gls{lsd} (ours)
    &
    \((t,u,x)\)
    &
    \(0\)
    &
    % \(\begin{aligned}
    % &-(u-t)\,\partial_u f(\mbx) \\[-1ex]
    % &\quad+\,f\!\big(u,u,\,
    % z+(u-t)f(\mbx)\big)
    % \end{aligned}\)
    \(
    -(u-t)\,\partial_u f(\mbx) +\,f\!\big(u,u,\,
    x+(u-t)f(\mbx)\big)
    \)
    &
    \(0\)
    &
    \(0\)
    &
    \cmark
    \\
    \midrule
    
    Slim \gls{esd} (ours)
    &
    \((t,u,x)\)
    &
    \(0\)
    &
    % \(\begin{aligned}
    % &(u-t)\,\partial_t f(\mbx) \\[-1ex]
    % &\quad+\,(\partial_x f)(\mbx)\,f(t,t,z)
    % \end{aligned}\)
    \((u-t)\,\partial_t f(\mbx) +\,(\partial_x F)(\mbx)\,f(t,t,x)
    \)
    &
    \(0\)
    &
    \(0\)
    &
    \cmark
    \\
    \midrule

    \textsc{td(0)}    
    &
    \(s\)
    &
    \((r,\delta_{S_{t+1}})\)
    &
    \(0\)
    &
    \(\gamma \int V(s')\,\delta_{S_{t+1}}(ds')\)
    &
    \(r\)
    &
    \xmark
    \\
    \midrule
    
    Q-learning
    &
    \((s,a)\)
    &
    \((r,\delta_{S_{t+1}})\)
    &
    \(0\)
    &
    \(\gamma \int \max_{a'} q(s',a')\,\delta_{S_{t+1}}(ds')\)
    &
    \(r\)
    &
    \xmark
    \\
    \midrule
    
    Adjoint RAM
    &
    \((x,t)\)
    &
    \(\nabla g(X_1)\)
    &
    \(0\)
    &
    \(0\)
    &
    \(-\sigma(t) \nabla g(X_1)\)
    &
    \cmark
    \\
    
    \arrayrulecolor{black}
    \bottomrule
    \end{tabularx}
    \caption{
        \small{
        Stopgrad objectives unified by our stopgrad regression principle (\S\ref{appsec:unifying_stopgrad_objectives}).
        We propose slim \gls{esd} and slim \gls{lsd} by modifying existing flow map objectives to be in line with our stopgrad regression principle.
        The last column records whether the label noise is additive (Definition~\ref{def:additive}), i.e.\ whether the coefficient $Q[f]$ of the label is independent of $f$.
        }
    }
    \label{tab:form1_six_examples}
\end{table*}

%%%%%%%%%%%%%%%%%%%%%%%%%%%%%%%%%%%%%%%%%%%%%%%%%%%%%%%%%%%%%%%%%%%%%%%%%%%%%%%%%
\section{ Stationary Points of Stopgrad Flow Map Objectives }
\label{sec:flowmap_stationary}
%%%%%%%%%%%%%%%%%%%%%%%%%%%%%%%%%%%%%%%%%%%%%%%%%%%%%%%%%%%%%%%%%%%%%%%%%%%%%%%%%

Flow maps learn to sample probability flow \glspl{ode} in few steps, and are trained with objectives involving stopgrad operators.
We now analyze the stationary points of these stopgrad flow map objectives. We begin by reviewing background.

%%%%%%%%%%%%%%%%%%%%%%%%%%%%%%%%%%%%%%%%%%%%%%%%%%%%%%%%%%%%
\paragraph{Stochastic Interpolants}
%%%%%%%%%%%%%%%%%%%%%%%%%%%%%%%%%%%%%%%%%%%%%%%%%%%%%%%%%%%%
Given  simple  $\mbx_0 \sim q_0$ and data $\mbx_1 \sim q_1$, 
stochastic interpolants \citep{lipman2022flow,albergo2023stochastic} define a density path by constructing  $\mbx_t = \alpha_t\,\mbx_0 + \sigma_t\,\mbx_1$, for $t \in [0,1]$.
Under this construction, there exists a velocity field $v(t,x) := \E[\dot \mbx_t \mid \mbx_t = x]$ with $\dot \mbx_t = \dot \alpha_t \mbx_0 + \dot \sigma_t \mbx_1$, such that the \gls{ode} $d\hat{\mbx}_t = v(t,\hat{\mbx}_t)\,dt$ has the same marginal densities as $\mbx_t$.
The \gls{ode} connection provides a way to sample the marginals of $\mbx_t$ while only having access to a sample of one endpoint and the velocity field $v$.
In practice, $v$ is learned by regressing a model $v_\theta$ against  $\dot \mbx_t$ via the so-called flow-matching loss $\E[\|v_\theta(t,\mbx_t) - \dot \mbx_t\|^2]$, and new samples are generated by discretizing the \gls{ode}.

%%%%%%%%%%%%%%%%%%%%%%%%%%%%%%%%%%%%%%%%%%%%%%%%%%%%%%%%%%%%
\paragraph{Flow maps and the transport equation}
%%%%%%%%%%%%%%%%%%%%%%%%%%%%%%%%%%%%%%%%%%%%%%%%%%%%%%%%%%%%
\label{sec:flow_maps}

Instead of learning $v_\theta$ and then integrating it in an \gls{ode}, one can sample by approximating the \emph{flow map} $F$: %$F(t,u,x)$:
\begin{align}
\label{eq:flow_map_defn}
    F(t,u,\hat{\mbx}_t) := \hat{\mbx}_t + \int_t^u v(s, \hat{\mbx}_s) ds = \hat{\mbx}_t + \int_t^u v(s, F(t,s,\hat{\mbx}_t)) ds.
\end{align}
The flow map 
$F$
%$F(t,u,x)$
describes where a particle at position $x$ at time $t$ reaches by time $u$ when integrated along the probability flow \gls{ode} trajectories. Learning $F$ directly enables few-step sampling \citep{song2023consistency,kim2023consistency,boffi2024flowmapmatching,boffi2025build,song2023improved,geng2025meanflow}.
The flow map is characterized by a \emph{transport equation}, which is
a \gls{pde}
acquired by using the Leibniz rule
to take the time derivative of \cref{eq:flow_map_defn}:
\begin{align}
  \label{eq:transport_intro}
  \partial_t F(t,u,x) + (\partial_x F)(t,u,x)\, v(t,x) = 0,
  \qquad F(u,u,x) = x.
\end{align}
The boundary condition 
%$F(u,u,x) = x$ 
says that a particle at position $x$ at time $u$ is already at its destination; it is satisfied automatically when parameterizing flow maps as
\begin{align}
  \label{eq:f_param}
  F(t,u,x) = x + (u-t)\,f(t,u,x).
\end{align}
Rewriting \eqref{eq:transport_intro} in terms of this $f$ parameterization gives:
\begin{align}
  \label{eq:ftilde_transport}
  f = v + (u-t)\big(\partial_t f + (\partial_x f)\, v\big).
\end{align}
Flow-map training procedures are built from this identity and related rewritings. In practice, these methods often use stopgrads to avoid differentiating through model derivatives.
In this work, we seek to understand what these stopgrads preserve and when their induced dynamics remain well behaved.
In our flow map analyses, we assume regularity conditions (Remark~\ref{rmk:regularity}), such as Lipschitz marginal velocities and square-integrable conditional velocities used in prior work \cite{lipman2024flowmatchingguidecode}.

\paragraph{Stationary points of flow map training with stopgrads.}
For MeanFlow and improved MeanFlow, we prove the true flow map is the unique stationary point with stopgrads.
For \gls{lsd} and \gls{esd}, the original stopgrad placements do not fit Theorem~\ref{thm:principle}, so it is not obvious what their stationary points are.
Theorem~\ref{thm:principle} inspires our \textit{slim} stopgrad placements that yield known stationary points and reduce training memory by avoiding second-order derivatives.
In each case Theorem~\ref{thm:principle} supplies the characterization of stationary points, and uniqueness of the true flow map is proved separately (\S\ref{app:corollaries}) by reducing the stationarity equation to the transport equation \eqref{eq:transport_intro} or the flow \gls{ode} and invoking uniqueness of their solutions.
Throughout this section the time sampler satisfies $t \le u$.
We write $\Omega := \{0 < t \le u < 1\} \times \bbR^d$ for the time triangle including the diagonal $\{t = u\}$.
The training distribution $\mu$ of $(t,u,\mbx_t)$ has full support on the interior $\{t < u\}$ of $\Omega$ and may also place mass on the diagonal; Remark~\ref{rmk:regularity} states the precise conditions, and a glossary of symbols is in the appendix.

%%%%%%%%%%%%%%%%%%%%%%%%%%%%%%%%%%%%%%%%%%%%%%%%%%%%%%%%%%%%%%%%%%%%%%%%%%%%%%%%%
\subsection{MeanFlow}
\label{sec:meanflow}
%%%%%%%%%%%%%%%%%%%%%%%%%%%%%%%%%%%%%%%%%%%%%%%%%%%%%%%%%%%%%%%%%%%%%%%%%%%%%%%%%

MeanFlow \citep{geng2025meanflow} learns the flow map by enforcing the transport \gls{pde}, using a stopgrad to avoid differentiating through Jacobian--vector product terms.
Using the parameterization \eqref{eq:f_param} and conditional velocity $v$ from \S\ref{sec:flow_maps}, the MeanFlow functional \eqref{eq:meanflow_func} (adapted to $\partial_t$ and forward-time; see \S\ref{appsec:t_vs_u}) is an instance of \eqref{eq:principle_loss} with $\mbx = (t,u,\mbx_t)$, $\mby = \dot \mbx_t$, and $P[f] = (u-t)\partial_t f$, $Q[f] = (u-t)\partial_x f$, and $R = I$.
\begin{align}
    \label{eq:meanflow_func}
    \cL_{\mathrm{MF}}[f]
    =  \frac 1 2 \E\Big[\big\|
    f(t,u,\mbx_t) - \dot \mbx_t
    - (u-t)\,\sgb{\partial_t f(t,u,\mbx_t)
      + \partial_x f(t,u,\mbx_t)\,\dot \mbx_t}
    \big\|^2\Big].
\end{align}
\begin{tcolorbox}[boxrule=0pt, frame empty]
\begin{corollary}[MeanFlow]
    \label{cor:meanflow}
    Under the standing conditions (Remark~\ref{rmk:regularity}), $f \in \mathcal{M}$ is a stationary point of $\cL_{\mathrm{MF}}$
    if and only if
    \begin{align*}
        f = v + (u-t)\big(\partial_t f + \partial_x f\, v\big) \quad \text{on } \Omega .
    \end{align*}
    This equation has exactly one solution in $\mathcal{M}$, namely $f^*$, defined by $x + (u-t)f^*(t,u,x) = F^*(t,u,x)$.
    Hence the true flow map is the unique stationary point of $\cL_{\mathrm{MF}}$ in $\mathcal{M}$.
\end{corollary}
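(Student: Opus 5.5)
The proof has two parts. First, characterize stationarity by applying Theorem~\ref{thm:principle}. Second, show that the resulting equation has exactly one solution by reducing it to the transport equation \eqref{eq:transport_intro} and invoking uniqueness along characteristics.

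\textbf{Characterization.} Cast $\cL_{\mathrm{MF}}$ in the form \eqref{eq:principle_loss} with $\mbx=(t,u,\mbx_t)$, $\mby=\dot\mbx_t$, $P[f]=(u-t)\partial_t f$, $Q[f]=(u-t)\partial_x f$ and $R=I$. The target $T[f,\mby]$ is affine in $\mby$, and $f$ appears once outside the stopgrad. Because $Q[f](\mbx)$ depends only on $\mbx$, we have
\begin{align*}
T[f,\E[\mby\mid\mbx]] = (u-t)\big(\partial_t f + \partial_x f\, v\big) + v, \qquad v(t,x)=\E[\dot\mbx_t\mid\mbx_t=x].
\end{align*}
The regularity conditions (Remark~\ref{rmk:regularity}) supply the remaining hypotheses: finiteness of the loss, continuity of this map (since $f\in C^1$ and $v$ is Lipschitz), and full support of $\mu$ on the open set $\{t<u\}$. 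Theorem~\ref{thm:principle} then says stationarity holds if and only if $f=v+(u-t)(\partial_t f+\partial_x f\,v)$ on $\operatorname{supp}\mu$. Since both sides are continuous, the identity extends to the closure, which gives it on $\Omega$. On the diagonal it reads $f(t,t,x)=v(t,x)$.

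\textbf{Existence.} For $f^*$ defined by $F^*=x+(u-t)f^*$, substitute $F^*$ into \eqref{eq:transport_intro}. This yields $-f^*+(u-t)\partial_t f^* + (u-t)\partial_x f^*\, v + v = 0$, which is exactly \eqref{eq:ftilde_transport}. On the diagonal, $f^*(t,t,x)=\partial_u F^*|_{u=t}=v(t,x)$, so the equation holds there as well.

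\textbf{Uniqueness (main obstacle).} Take any solution $f\in\mathcal{M}$ and set $F:=x+(u-t)f$. Reversing the computation above shows that $F$ satisfies $\partial_t F+\partial_x F\,v=0$, with $F(u,u,x)=x$ automatic from the parameterization. Now fix $u$ and $x_0$. Let $X_s$ be the ODE trajectory $\dot X_s=v(s,X_s)$ with $X_u=x_0$; it exists uniquely on $(0,u]$ because $v$ is Lipschitz. Along this trajectory,
\begin{align*}
\tfrac{d}{ds}F(s,u,X_s)=\partial_t F+\partial_x F\,v=0,
\end{align*}
so $F(s,u,X_s)=F(u,u,x_0)=x_0=F^*(s,u,X_s)$. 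The flow is a bijection of $\bbR^d$, so every $(s,x)$ lies on such a trajectory, and therefore $F=F^*$ on $\Omega$. For $t<u$ we can then divide by $u-t$ to get $f=f^*$. On the diagonal, both $f$ and $f^*$ equal $v$ by the diagonal identity (or by continuity).

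The delicate part is this characteristic argument. It needs $C^1$ regularity of $F$ up to the diagonal so the chain rule applies along the trajectory. It also needs to avoid the degeneracy that the factor $(u-t)$ introduces at $t=u$. I would handle this by working on $\{t<u\}$ and extending by continuity.
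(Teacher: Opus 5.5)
Your proposal is correct and follows the paper's proof: you characterize stationarity via Theorem~\ref{thm:principle} with $P=(u-t)\partial_t f$, $Q=(u-t)\partial_x f$, $R=I$; you get existence because $F^*$ solves \eqref{eq:transport_intro} (with $f^*\in\mathcal{M}$ supplied by Remark~\ref{rmk:regularity}(5), which you should cite); and you get uniqueness from the exact identity $f - v - (u-t)(\partial_t f + \partial_x f\,v) = -(\partial_t F + \partial_x F\,v)$ together with a characteristics argument. The differences are minor. The paper's Lemma~\ref{lem:char} launches the trajectory forward from $X_t = x$ and reads off $F(t,u,x)=X_u=F^*(t,u,x)$ directly, so it needs no covering step; your backward parametrization needs one, but it is immediate by taking $x_0 := F^*(s,u,x)$. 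Your explicit treatment of the diagonal, where $f=f^*=v$, fills in a point the paper leaves implicit when it passes from $F=F^*$ to $f=f^*$.
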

\end{tcolorbox}
The proof is in \S\ref{app:corollaries}. The characterization is Theorem~\ref{thm:principle}; the population stationarity equation for MeanFlow was also derived independently by \citet{kim2026stabilizing}.
Uniqueness does not follow from Theorem~\ref{thm:principle} or Corollary~\ref{cor:uniqueQ}, since the label noise is not additive for MeanFlow ($Q[f] = (u-t)\partial_x f$ depends on $f$); instead, the stationarity equation is the transport equation \eqref{eq:transport_intro} with the boundary condition $F(u,u,x) = x$, whose solution is unique (Lemma~\ref{lem:char}, \S\ref{app:corollaries}).

In \S\ref{appsec:proofs_mf_nosg}, we further prove that the true flow map is not a stationary point of the \emph{simplified} MeanFlow objective obtained by removing the stopgrad, i.e.\ the objective in which the sample velocity $\dot \mbx_t$ replaces the marginal velocity $v$ inside the Jacobian term \citep{boffi2024flowmapmatching}; the corresponding covariance correction also appears in \citet{kim2026stabilizing}.
Thus, the stopgrad in MeanFlow is what makes that simplified objective principled.

%%%%%%%%%%%%%%%%%%%%%%%%%%%%%%%%%%%%%%%%%%%%%%%%%%%%%%%%%%%%%%%%%%%%%%%%%%%%%%%%%
\subsection{Improved MeanFlow}
\label{sec:imf}
%%%%%%%%%%%%%%%%%%%%%%%%%%%%%%%%%%%%%%%%%%%%%%%%%%%%%%%%%%%%%%%%%%%%%%%%%%%%%%%%%

Improved MeanFlow \citep{geng2025improved} replaces the $\dot \mbx_t$ term in the JVP with the model's instantaneous velocity $f(t, t, \mbx_t)$:
\begin{equation}
    \label{eq:imf_func}
    \cL_{\mathrm{iMF}}[f]
    = \frac 1 2 \E\Big[\big\|
    f(t,u,\mbx_t) - \dot \mbx_t
    - (u-t)\,\sgb{\partial_t f(t,u,\mbx_t)
      + \partial_x f(t,u,\mbx_t)\, f(t, t, \mbx_t)}
    \big\|^2\Big].
\end{equation}
\gls{imf} is an instance of \eqref{eq:principle_loss} with $\mbx = (t,u,\mbx_t)$, $\mby = \dot \mbx_t$, $P[f] = (u-t)\big[\partial_t f + (\partial_x f)\, f(t,t,\cdot)\big]$, $Q[f] = 0$, and $R = I$.
\begin{tcolorbox}[boxrule=0pt, frame empty]
\begin{corollary}[Improved MeanFlow]
    \label{cor:imf}
    Under the standing conditions (Remark~\ref{rmk:regularity}), $f \in \mathcal{M}$ is a stationary point of $\cL_{\mathrm{iMF}}$
    if and only if
    \begin{align*}
        f = v + (u-t)\big(\partial_t f + (\partial_x f)\, f(t,t,\cdot)\big) \quad \text{on } \Omega .
    \end{align*}
    When $u=t$, this condition implies $f(t,t,\cdot) = v(t,\cdot)$. Substituting this into $u\neq t$, we get the transport equation \eqref{eq:transport_intro} with $F(u,u,x) = x$, so the true flow map is the unique stationary point of $\cL_{\mathrm{iMF}}$ in $\mathcal{M}$.
\end{corollary}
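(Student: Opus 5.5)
We obtain the characterization directly from Theorem~\ref{thm:principle}, and then prove uniqueness by reducing to the transport equation, as in Corollary~\ref{cor:meanflow}.

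\textbf{Step 1: characterization.} Take $\mbx = (t,u,\mbx_t)$, $\mby = \dot \mbx_t$, $P[f] = (u-t)\big[\partial_t f + (\partial_x f)\, f(t,t,\cdot)\big]$, $Q[f]=0$ and $R=I$. The target is affine in $\mby$, and $\E[\dot\mbx_t \mid \mbx_t = x] = v(t,x)$. Under Remark~\ref{rmk:regularity} we have $f\in C^1$, so $T[f,v]$ is continuous. Moreover $\cL_{\mathrm{iMF}}[f]<\infty$, using square-integrable conditional velocities. Theorem~\ref{thm:principle} then says stationarity is equivalent to
\begin{align*}
f = v + (u-t)\big(\partial_t f + (\partial_x f)\, f(t,t,\cdot)\big)
\end{align*}
on $\operatorname{supp}\mu$. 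Since $\mu$ has full support on $\{t<u\}$ and both sides are continuous, the identity extends to the closure, which includes the diagonal. This holds whether or not $\mu$ charges the diagonal, and gives the identity on all of $\Omega$.

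\textbf{Step 2: diagonal.} Evaluate the identity at $u=t$. The factor $(u-t)$ vanishes and the derivatives are bounded by the $C^1$ assumption, so $f(t,t,x)=v(t,x)$ for all $(t,x)$.

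\textbf{Step 3: off-diagonal.} Substitute $f(t,t,\cdot)=v$ back into the identity. This gives exactly \eqref{eq:ftilde_transport}, i.e.\ the MeanFlow stationarity equation. Writing $F = x+(u-t)f$ converts it into the transport equation \eqref{eq:transport_intro}, and the parameterization enforces $F(u,u,x)=x$ automatically.

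\textbf{Step 4: uniqueness.} Invoke the uniqueness result for the transport equation with this boundary condition (Lemma~\ref{lem:char}). Along characteristics $s\mapsto (s,u,F^*(t,s,x))$ the derivative $\tfrac{d}{ds}F$ vanishes, and the Lipschitz property of $v$ guarantees well-posed characteristics, so $F=F^*$. Conversely, $f^*$ satisfies the equation: on the diagonal $f^*(t,t,\cdot)=\partial_u F^*|_{u=t}=v$, and off the diagonal $F^*$ solves the transport equation. Hence $f^*$ is the unique stationary point in $\mathcal{M}$.

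The main obstacle is the passage from the full-support interior $\{t<u\}$ to the diagonal in Step 1. Theorem~\ref{thm:principle} only yields the identity on $\operatorname{supp}\mu$, so the extension requires continuity of $f$ and $\partial f$ up to $t=u$, and boundedness of $(u-t)\partial f$ near the diagonal, from the regularity conditions. Once that is secured, the remaining steps reduce to the existing MeanFlow argument.
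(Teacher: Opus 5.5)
Your proposal is correct and follows essentially the same route as the paper: apply Theorem~\ref{thm:principle} with $P[f]=(u-t)\big(\partial_t f+(\partial_x f)\,f(t,t,\cdot)\big)$, $Q=0$, $R=I$, extend the identity from $\Omega^\circ$ to $\bar\Omega$ by continuity, set $u=t$ to get $f(t,t,\cdot)=v(t,\cdot)$, substitute back to obtain the transport equation with $F(u,u,x)=x$, and conclude with Lemma~\ref{lem:char}. The only thing to state explicitly in the existence direction is that $f^*\in\mathcal{M}$, which the paper takes from Remark~\ref{rmk:regularity}(5).
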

\end{tcolorbox}
The proof is in \S\ref{app:corollaries}.
The characterization follows from Theorem~\ref{thm:principle} with $\E[\mby \mid \mbx] = v$.
At $u = t$, the stationarity equation gives $f(t,t,\cdot) = v(t,\cdot)$, and substituting this into $u \neq t$ recovers the MeanFlow equation of Corollary~\ref{cor:meanflow}.
Thus uniqueness follows exactly as for MeanFlow (Lemma~\ref{lem:char}).
Since $Q \equiv 0$, \gls{imf} also has additive label noise, so Corollary~\ref{cor:uniqueQ} applies: the true flow map is the unique point attaining the conditional-variance floor.
To our knowledge, our work is the first to show the true flow map is the unique stationary point for improved MeanFlow.

%%%%%%%%%%%%%%%%%%%%%%%%%%%%%%%%%%%%%%%%%%%%%%%%%%%%%%%%%%%%%%%%%%%%%%%%%%%%%%%%%
\subsection{Lagrangian Self-Distillation}
\label{sec:lsd}
%%%%%%%%%%%%%%%%%%%%%%%%%%%%%%%%%%%%%%%%%%%%%%%%%%%%%%%%%%%%%%%%%%%%%%%%%%%%%%%%%

The \gls{lsd} family of losses is based on the \emph{Lagrangian flow map identity} \citep{boffi2024flowmapmatching, boffi2025build}, a rewrite of \eqref{eq:transport_intro} that avoids spatial Jacobians.
For the true flow map $F^*$,
\begin{align}
  \label{eq:lagrangian_identity}
  \partial_u F^*(t,u,x) = v\big(u,\, F^*(t,u,x)\big).
\end{align}
Because the true velocity $v$ is unknown, \gls{lsd} replaces it with the model's prediction $f(u,u, \cdot)$, which equals $v(u,\cdot)$ at the true flow map.
The base \gls{lsd} functional without stopgrads penalizes violation of \eqref{eq:lagrangian_identity}.
The velocity condition is enforced by adding the flow matching term $\cL_{\mathrm{FM}}[f] := \tfrac12\,\E\|f(t,t,\mbx_t) - \dot \mbx_t\|^2$.

\gls{lsd} variants differ in where stopgrads are placed.
The original variant \citep{boffi2025build}---also used in Terminal Velocity Matching \citep{zhou2025terminal}---places the stopgrad only around the evaluation at the transported point \eqref{eq:lsd-orig}.
This does not fit Theorem~\ref{thm:principle}, so it is not obvious what its stationary points are.
\begin{align}
    \label{eq:lsd-orig}
    \cL_{\mathrm{LSD\text{-}orig}}[f]
    = \frac 1 2 \E\Big[\big\|
    f(t,u,\mbx_t) + (u-t)\partial_u f(t,u,\mbx_t)
    - \sgb{f\big(u,u, F(t,u,\mbx_t)\big)}
    \big\|^2\Big].
\end{align}

%%%%%%%%%%%%%%%%%%%%%%%%%%%%%%%%%%%%%%%%%%%%%%%%%%%%%%%%%%%%%%%%%%%%%%%%%%%%%%%%%
\paragraph{{\color{magenta}Slim} stopgrad for \gls{lsd}.}
%%%%%%%%%%%%%%%%%%%%%%%%%%%%%%%%%%%%%%%%%%%%%%%%%%%%%%%%%%%%%%%%%%%%%%%%%%%%%%%%%
Theorem~\ref{thm:principle} suggests an alternative stopgrad placement. This has two benefits: (i) it eliminates expensive second-order derivatives during training (hence the name ``slim''); and (ii) it clarifies the stationary points, via Theorem~\ref{thm:principle}.
The slim-\gls{lsd} is:

\begin{align}
    \label{eq:lsd_slim_func}
    \cL_{\mathrm{LSD\text{-}\slim}}[f]
    = \frac 1 2 \E\Big[\big\|
    f(t,u,\mbx_t)
    + (u-t)\sgb{ \partial_u f(t,u,\mbx_t)}
    - \sgb{f\big(u,u, F(t,u,\mbx_t) \big) }
    \big\|^2\Big].
\end{align}
\begin{tcolorbox}[boxrule=0pt, frame empty]
\begin{corollary}[slim \gls{lsd}]
    \label{cor:lsd_slim}
    Under the standing conditions (Remark~\ref{rmk:regularity}), $f \in \mathcal{M}$ is a stationary point of $\cL_{\mathrm{LSD\text{-}\slim}}$
    if and only if
    \begin{align*}
        f + (u-t)\,\partial_u f = f\big(u,u,\, F(t,u,x)\big) \quad \text{on } \Omega,
        \qquad F := x + (u-t)f,
    \end{align*}
    equivalently $\partial_u F(t,u,x) = f\big(u,u,F(t,u,x)\big)$, the Lagrangian identity \eqref{eq:lagrangian_identity} with the model velocity $f(u,u,\cdot)$ in place of $v(u,\cdot)$.
\end{corollary}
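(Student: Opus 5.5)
We would prove Corollary~\ref{cor:lsd_slim} by casting $\cL_{\mathrm{LSD\text{-}\slim}}$ as an instance of the stopgrad regression template \eqref{eq:principle_loss} and then invoking Theorem~\ref{thm:principle}. We take $\mbx = (t,u,\mbx_t)$ and a trivial label $\mby = 0$, so $Q$ and $R$ play no role. The target operator is
\begin{align*}
T[f,\mby](t,u,x) = P[f](t,u,x) = -(u-t)\,\partial_u f(t,u,x) + f\big(u,u,\,x+(u-t)f(t,u,x)\big),
\end{align*}
matching the slim \gls{lsd} row of Table~\ref{tab:form1_six_examples}. The integrand of \eqref{eq:lsd_slim_func} is $\|f - \sgb{T[f,0]}\|^2$. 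In it, $f$ appears exactly once outside the stopgrad, and both the $\partial_u f$ term and the transported evaluation are inside stopgrads. Applying the stopgrad rule \eqref{eq:functional_stopgrad_rule}, the first variation is $\E_\mu[(f - T[f,0])\cdot h]$, so the semi-gradient is $g[f] = f - T[f,0]$. Here $\E[\mby\mid\mbx]=0$ trivially.

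Next we verify the hypotheses of Theorem~\ref{thm:principle} under the standing conditions of Remark~\ref{rmk:regularity}. The measure $\mu$ has full support on the open set $S = \{0<t<u<1\}\times\bbR^d$, and $\operatorname{supp}\mu \subseteq \bar S$. Since $f \in \mathcal{M}\subset C^1$, both $\partial_u f$ and the composition $(t,u,x)\mapsto f(u,u,x+(u-t)f(t,u,x))$ are continuous on $\bar S$. Hence $T[f,0]$ is continuous there. Finiteness of $\cL[f]$ is also part of the standing conditions. Theorem~\ref{thm:principle} then gives stationarity if and only if $f = T[f,0]$ on $\operatorname{supp}\mu$, which is the displayed equation $f + (u-t)\partial_u f = f(u,u,F)$.

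The step needing the most care is passing from $\operatorname{supp}\mu$ to all of $\Omega$, since the diagonal $\{t=u\}$ may or may not carry mass. We handle it by continuity. Both sides of the equation are continuous, and the interior $\{t<u\}$ is dense in $\Omega$, so the identity extends to the closure. On the diagonal it also holds automatically: with $u=t$ the left side is $f(t,t,x)$ and the right side is $f(t,t,x)$. The converse direction is immediate because $\operatorname{supp}\mu\subseteq\Omega$ up to closure.

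Finally, the equivalent Lagrangian form is a direct computation. From $F = x+(u-t)f$ we get $\partial_u F = f + (u-t)\partial_u f$. The stationarity equation therefore reads $\partial_u F(t,u,x) = f(u,u,F(t,u,x))$, which is \eqref{eq:lagrangian_identity} with the model velocity $f(u,u,\cdot)$ in place of $v(u,\cdot)$.
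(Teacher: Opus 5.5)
Your proposal is correct and follows the paper's proof essentially step for step: you cast $\cL_{\mathrm{LSD\text{-}\slim}}$ in the template with $\mby \equiv 0$ and $T[f] = -(u-t)\partial_u f + f(u,u,F)$, invoke Theorem~\ref{thm:principle}, extend from $\operatorname{supp}\mu$ by continuity, and rewrite the result using $\partial_u F = f + (u-t)\partial_u f$. The diagonal step is easier than you make it: full support on $\Omega^\circ$ already forces $\operatorname{supp}\mu = \bar\Omega$, so the diagonal lies in the support whether or not it carries mass.
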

\end{tcolorbox}
The proof is in \S\ref{app:corollaries}.
Corollary~\ref{cor:lsd_slim} concerns the slim \gls{lsd} term alone.
We show in \S\ref{app:corollaries} that $f \in \mathcal{M}$ is a stationary point of the combined objective $\cL_{\mathrm{LSD\text{-}\slim}} + \cL_{\mathrm{FM}}$ if and only if $f(u,u,\cdot) = v(u,\cdot)$ and $\partial_u F(t,u,x) = v\big(u,F(t,u,x)\big)$ on $\Omega$, and that the true flow map is the unique such $f$.
The second condition is the flow \gls{ode} in $u$ started at $x$ at time $t$, whose solution is unique.
The proof uses a splitting lemma (Lemma~\ref{lem:split}), which shows that the first variations of the two terms cannot cancel.
%%%%%%%%%%%%%%%%%%%%%%%%%%%%%%%%%%%%%%%%%%%%%%%%%%%%%%%%%%%%%%%%%%%%%%%%%%%%%%%%%
\subsection{Eulerian Self-Distillation}
\label{sec:esd}
%%%%%%%%%%%%%%%%%%%%%%%%%%%%%%%%%%%%%%%%%%%%%%%%%%%%%%%%%%%%%%%%%%%%%%%%%%%%%%%%%

\gls{esd} is based on the transport equation
\eqref{eq:transport_intro}.
Like \gls{lsd}, \gls{esd} replaces the velocity with the model's prediction, and includes a flow matching term.
The original stopgrad placement from \citep{boffi2025build} is:
\begin{align}
    \label{eq:esd_orig_func}
    \cL_{\mathrm{ESD\text{-}orig}}[f]
    = \frac 1 2 \E\Big[\big\|
    \partial_t F(t,u,\mbx_t)
    + \sgb{(\partial_x F)(t,u,\mbx_t)\,f(t,t,\mbx_t)}
    \big\|^2\Big].
\end{align}
This does not fit the stopgrad regression principle, since 
$(u-t)\,\partial_t f -f$ appears outside the $\SG$.

%%%%%%%%%%%%%%%%%%%%%%%%%%%%%%%%%%%%%%%%%%%%%%%%%%%%%%%%%%%%%%%%%%%%%%%%%%%%%%%%%
\paragraph{{\color{magenta}Slim} stopgrad for \gls{esd}.}
%%%%%%%%%%%%%%%%%%%%%%%%%%%%%%%%%%%%%%%%%%%%%%%%%%%%%%%%%%%%%%%%%%%%%%%%%%%%%%%%%

Theorem~\ref{thm:principle} suggests an alternative stopgrad placement. As before, this has two benefits: eliminating expensive second-order derivatives, and clarifying stationary points.
\begin{align}
    \label{eq:esd_slim_func}
    \cL_{\mathrm{ESD\text{-}\slim}}[f]
    = \frac 1 2 \E\Big[\big\|
    f(t,u,\mbx_t)
    - \sgb{(u-t)\,\partial_t f(t,u,\mbx_t)
      + (\partial_x F)(t,u,\mbx_t)\,f(t,t,\mbx_t)}
    \big\|^2\Big].
\end{align}
\begin{tcolorbox}[boxrule=0pt, frame empty]
\begin{corollary}[slim \gls{esd}]
    \label{cor:esd_slim}
    Under the standing conditions (Remark~\ref{rmk:regularity}), $f \in \mathcal{M}$ is a stationary point of $\cL_{\mathrm{ESD\text{-}\slim}}$
    if and only if
    \begin{align*}
        f = (u-t)\,\partial_t f + (\partial_x F)\, f(t,t,\cdot) \quad \text{on } \Omega,
        \qquad F := x + (u-t)f,
    \end{align*}
    equivalently $\partial_t F + (\partial_x F)\,f(t,t,x) = 0$: the transport equation \eqref{eq:transport_intro} with the model velocity $f(t,t,\cdot)$ in place of $v(t,\cdot)$.
\end{corollary}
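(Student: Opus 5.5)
The plan is to read $\cL_{\mathrm{ESD\text{-}\slim}}$ as an instance of \eqref{eq:principle_loss} and apply Theorem~\ref{thm:principle} directly. Take $\mbx = (t,u,\mbx_t)$ and $\mby = 0$ (no labels). Set $Q \equiv 0$, $R \equiv 0$, and
\[
P[f](t,u,x) = (u-t)\,\partial_t f(t,u,x) + (\partial_x F)(t,u,x)\, f(t,t,x), \qquad F = x + (u-t) f .
\]
Then $T[f,\mby] = P[f]$ is trivially affine in $\mby$. The only occurrence of $f$ outside $\SG$ is the leading $f(t,u,\mbx_t)$, so the template fits exactly. Perturbing the first slot gives the semi-gradient
\[
g[f](t,u,x) = f(t,u,x) - P[f](t,u,x).
\]
No conditional expectation is needed, since $P$ does not involve $\mby$.

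Next I verify the hypotheses of Theorem~\ref{thm:principle} under the standing conditions of Remark~\ref{rmk:regularity}. Finiteness of $\cL[f]$ follows from $f \in \mathcal{M} \subset C^1 \cap H$ together with the square-integrability and growth assumptions on $f$, $\partial_t f$ and $\partial_x f$ that the remark supplies. Continuity of $\mbx \mapsto P[f](\mbx)$ on $\bar S$ follows because $f \in C^1$, so $\partial_t f$, $\partial_x F = I + (u-t)\partial_x f$, and $f(t,t,\cdot)$ are continuous up to the diagonal.

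The main subtlety is the support. We need $S$ to be the open interior $\{0<t<u<1\}\times\bbR^d$, on which $\mu$ has full support, with $\operatorname{supp}\mu \subseteq \bar S$ covering the diagonal mass. The theorem then yields $f = P[f]$ on $\operatorname{supp}\mu$. This identity must extend to all of $\Omega$. On the interior it holds by full support. On the diagonal $t=u$ it follows by continuity of $g[f]$ from the interior. On the diagonal it also reduces to the consistent identity $f(t,t,x) = f(t,t,x)$, since $\partial_x F = I$ there. This gives the "if and only if" on $\Omega$.

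Finally, I convert the stationarity condition into transport form, which is pure algebra. From $F = x + (u-t) f$ we get $\partial_t F = -f + (u-t)\partial_t f$. Hence
\[
f = (u-t)\partial_t f + (\partial_x F) f(t,t,\cdot) \iff 0 = \partial_t F + (\partial_x F)\, f(t,t,x).
\]
This is \eqref{eq:transport_intro} with $v$ replaced by the model velocity $f(t,t,\cdot)$. The boundary condition $F(u,u,x) = x$ holds automatically by the parameterization \eqref{eq:f_param}.

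I expect no real obstacle beyond the boundary/support bookkeeping and confirming that the regularity in Remark~\ref{rmk:regularity} gives $\cL[f] < \infty$ for every $f \in \mathcal{M}$.
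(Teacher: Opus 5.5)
Your proposal is correct and matches the paper's proof: the paper also reads $\cL_{\mathrm{ESD\text{-}\slim}}$ as an instance of \eqref{eq:principle_loss} with $\mby \equiv 0$ and $T[f] = (u-t)\,\partial_t f + (\partial_x F)\,f(t,t,\cdot)$. It then applies Theorem~\ref{thm:principle} with $S = \Omega^\circ$, extends the residual identity to the closed triangle by continuity, and rewrites it in transport form via $\partial_t F = -f + (u-t)\,\partial_t f$. One small correction: finiteness of $\cL[f]$ does not follow from growth bounds on $\partial_t f$ and $\partial_x f$, since Remark~\ref{rmk:regularity} supplies none for $\mathcal{M} = C^1 \cap L^2(\mu)$; instead it is assumed outright in item (3) of that remark.
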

\end{tcolorbox}
The proof is in \S\ref{app:corollaries}.
We show there that $f \in \mathcal{M}$ is a stationary point of the combined objective $\cL_{\mathrm{ESD\text{-}\slim}} + \cL_{\mathrm{FM}}$ if and only if $f(t,t,\cdot) = v(t,\cdot)$ and $\partial_t F + (\partial_x F)\,v = 0$ on $\Omega$, and that the true flow map is the unique such $f$.
The second condition is the transport equation \eqref{eq:transport_intro}, whose solution with $F(u,u,x) = x$ is unique (Lemma~\ref{lem:char}).

We summarize all flow map stopgrad objectives discussed in this work in Table \ref{tab:flowmap_objective_comparison}.

%%%%%%%%%%%%%%%%%%%%%%%%%%%%%%%%%%%%%%%%%%%%%%%%%%%%%%%%%%%%%%%%%%%%%%%%%%%%%%%%%
\section{Convergence of Functional Semi-Gradient Flow for Flow Map Objectives}
\label{sec:convergence}
%%%%%%%%%%%%%%%%%%%%%%%%%%%%%%%%%%%%%%%%%%%%%%%%%%%%%%%%%%%%%%%%%%%%%%%%%%%

It is well known that standard gradient descent converges to stationary points under standard smoothness and step-size conditions
\citep{hestenes1952methods}.
Without stopgrads, the functional gradient $g$ likewise defines a descent direction (when $-g[f] \in \mathcal{D}$), and flowing along it brings optimization to a stationary point in function space.
However, for stopgrad objectives which have semi-gradient update fields, it is not obvious whether following the semi-gradient field arrives at a stationary point or not. Thus, our preceding characterization of stationary points for stopgrad flow map objectives must be paired with an analysis of convergence.

Here we show two things. First, semi-gradients are generally not gradients of any scalar objective; we confirm this concretely for MeanFlow, \gls{imf}, slim-\gls{lsd}, and slim-\gls{esd}, so standard optimization theory does not guarantee their convergence. Second, despite this, functional semi-gradient flow does converge for MeanFlow, \gls{imf}, slim-\gls{lsd}, and slim-\gls{esd}.

%%%%%%%%%%%%%%%%%%%%%%%%%%%%%%%%%%%%%%%%%%%%%%%%%%%%%%%%%%%%%%%%%%%%%%%%%%%
\subsection{When is a semi-gradient the gradient of an objective?}
%%%%%%%%%%%%%%%%%%%%%%%%%%%%%%%%%%%%%%%%%%%%%%%%%%%%%%%%%%%%%%%%%%%%%%%%%%%

A natural question is, when is the semi-gradient $g$ also the gradient of some scalar functional $J : \mathcal{M} \to \R$---equivalently, whether $g$ is a conservative vector field on $\mathcal{M}$.
Knowing that $g$ is a gradient of some $J$ would then mean that the updates still minimize some $J$ that could be related to $\cL$. This question is well-studied in the context of \textsc{td} learning
\citep{bhandari2018finite, ollivier2018approximate}.
Here, we provide a general result for functionals following the stopgrad regression principle.
\begin{tcolorbox}[boxrule=0pt, frame empty]
\begin{theorem}[Conservativeness] % of the semi-gradient $g$]
  \label{thm:g-conservative}
  Assume the hypotheses of Theorem~\ref{thm:principle} hold for every $f \in \mathcal{M}$, with $\mathcal{M}$ a linear subspace of $C^1 \cap H$ and $T$ Gateaux differentiable along $\mathcal{M}$, and write $T[f]$ for $T[f, \E[\mby \mid \mbx]]$. Suppose the semi-gradient $g[f] = f - T[f, \E[\mby \mid \mbx]]$ is the gradient in $\langle \cdot, \cdot \rangle_\mu$ of a twice continuously differentiable scalar functional $J : \mathcal{M} \to \R$, meaning $\delta J[f; h] = \langle g[f], h \rangle_\mu$ for all $f, h \in \mathcal{M}$ (precise hypotheses in \S\ref{appsec:update_is_a_gradient}).
  Then the Gateaux derivative $D T[f]\,h := \frac{d}{d\epsilon} T[f + \epsilon h]\big|_{\epsilon = 0}$ is symmetric with respect to $\langle \cdot, \cdot \rangle_\mu$ at every $f \in \mathcal{M}$:
  \begin{align*}
      \langle D T[f]\, h_1,\, h_2 \rangle_\mu = \langle h_1,\, D T[f]\, h_2 \rangle_\mu
      \qquad \text{for all } h_1, h_2 \in \mathcal{M}.
  \end{align*}
\end{theorem}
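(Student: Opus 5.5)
The plan is the standard symmetry-of-second-derivatives argument. Since $J$ is twice continuously differentiable and $\mathcal{M}$ is a linear subspace, fix $f, h_1, h_2 \in \mathcal{M}$ and consider the two-parameter scalar function
\begin{align*}
  \phi(\epsilon_1,\epsilon_2) := J[f + \epsilon_1 h_1 + \epsilon_2 h_2].
\end{align*}
This function is $C^2$ near the origin, so by Schwarz's theorem its mixed partials agree:
\begin{align*}
  \partial_{\epsilon_2}\partial_{\epsilon_1}\phi(0,0) = \partial_{\epsilon_1}\partial_{\epsilon_2}\phi(0,0).
\end{align*}
The rest of the proof identifies each mixed partial in terms of $g$.

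First I compute $\partial_{\epsilon_1}\phi(0,\epsilon_2)$. By the hypothesis $\delta J[\cdot\,;h] = \langle g[\cdot], h\rangle_\mu$, applied at the point $f+\epsilon_2 h_2$, it equals $\langle g[f+\epsilon_2 h_2], h_1\rangle_\mu$. Next I differentiate in $\epsilon_2$ at $0$. Here I use $g[f] = f - T[f]$ together with the Gateaux differentiability of $T$ along $\mathcal{M}$, which gives
\begin{align*}
  \frac{d}{d\epsilon_2}\, g[f+\epsilon_2 h_2]\Big|_{\epsilon_2=0} = h_2 - DT[f]\,h_2 .
\end{align*}
To pass this derivative inside $\langle\cdot, h_1\rangle_\mu$, I need the difference quotients of $T$ to converge in $H$, not merely pointwise. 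I take this to be part of the precise hypotheses deferred to \S\ref{appsec:update_is_a_gradient}, i.e.\ Gateaux differentiability of $T:\mathcal{M}\to H$. The result is
\begin{align*}
  \partial_{\epsilon_2}\partial_{\epsilon_1}\phi(0,0) = \langle h_2, h_1\rangle_\mu - \langle DT[f]h_2, h_1\rangle_\mu .
\end{align*}
By the same computation with the roles of $h_1$ and $h_2$ swapped,
\begin{align*}
  \partial_{\epsilon_1}\partial_{\epsilon_2}\phi(0,0) = \langle h_1, h_2\rangle_\mu - \langle DT[f]h_1, h_2\rangle_\mu .
\end{align*}

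Equating the two mixed partials, the identity terms $\langle h_1,h_2\rangle_\mu$ cancel by symmetry of the inner product. What remains is $\langle DT[f]h_2, h_1\rangle_\mu = \langle DT[f]h_1, h_2\rangle_\mu$, which is the claimed symmetry $\langle DT[f]h_1,h_2\rangle_\mu = \langle h_1, DT[f]h_2\rangle_\mu$.

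The main obstacle is justifying the two calculus steps, not the algebra. One is that Schwarz's theorem applies; this requires the twice continuous differentiability of $J$ in the sense used in \S\ref{appsec:update_is_a_gradient}, e.g.\ $C^2$ along finite-dimensional affine slices of $\mathcal{M}$. The other is that the derivative of $\epsilon\mapsto\langle T[f+\epsilon h_2],h_1\rangle_\mu$ equals $\langle DT[f]h_2,h_1\rangle_\mu$. For the latter I would first try a dominated convergence argument on the difference quotients, or simply adopt $H$-valued Gateaux differentiability of $T$ as the hypothesis.
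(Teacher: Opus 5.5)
Your proposal is correct and is essentially the paper's proof: the paper also forms $\phi(\epsilon_1,\epsilon_2)=J[f+\epsilon_1 h_1+\epsilon_2 h_2]$, identifies each mixed partial as $\langle D g[f]\,h_i,\, h_j\rangle_\mu$ (using $H$-valued difference quotients and Cauchy--Schwarz), applies Schwarz's theorem, and then removes the identity part of $D g[f]\,h = h - D T[f]\,h$. The two justifications you flagged are exactly the precise hypotheses the paper adopts in its appendix restatement: $\phi$ is $C^2$ near $(0,0)$, and the limit defining $D T[f]\,h$ exists in $H$.
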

\end{tcolorbox}
The proof is in \S\ref{appsec:update_is_a_gradient}.
Theorem~\ref{thm:g-conservative} specializes the classical symmetry criterion for potential operators \citep[Theorem~5.1]{vainberg1964variational} to stopgrad regression.
It gives a necessary condition for $g$ to be the gradient of some $J$: when the condition fails, the semi-gradient $g$ has curl and is not the gradient of any scalar functional \citep{balduzzi2018mechanics}.

\begin{tcolorbox}[boxrule=0pt, frame empty]
\begin{corollary}
    \label{cor:non-conservative}
    The semi-gradients of $\cL_{\mathrm{MF}}$, $\cL_{\mathrm{iMF}}$, $\cL_{\mathrm{LSD\text{-}\slim}}$, and $\cL_{\mathrm{ESD\text{-}\slim}}$ are not gradients of any twice continuously differentiable scalar functional $J : \mathcal{M} \to \R$ in $\langle \cdot, \cdot \rangle_\mu$ (in the sense of Theorem~\ref{thm:g-conservative}).
\end{corollary}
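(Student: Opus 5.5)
The plan is to apply Theorem~\ref{thm:g-conservative} in contrapositive form. For each of the four objectives we compute the Gateaux derivative $DT[f]$ of its target operator (with $\E[\mby\mid\mbx]=v$ substituted). We then exhibit one $f\in\mathcal{M}$ and two directions $h_1,h_2\in\mathcal{M}$ for which $\langle DT[f]h_1,h_2\rangle_\mu \neq \langle h_1, DT[f]h_2\rangle_\mu$. Since symmetry is necessary for $g$ to be the gradient of a $C^2$ functional $J$, this failure rules out such a $J$. Every target operator contains a first-order differential operator in the time variables, and we use this to produce the asymmetry. The key observation is that a derivative operator such as $(u-t)\partial_t$ is not self-adjoint in $L^2(\mu)$: integrating by parts produces a sign flip plus a term involving $\partial_t$ of the density and of $(u-t)$.

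For MeanFlow, $T[f]=v+(u-t)(\partial_t f+\partial_x f\,v)$, so $DT[f]h=(u-t)(\partial_t h+\partial_x h\,v)$ independently of $f$. For iMF, $DT[f]h=(u-t)\big[\partial_t h+\partial_x h\, f(t,t,\cdot)+\partial_x f\, h(t,t,\cdot)\big]$. We evaluate at $f=0$ to get $DT[0]h=(u-t)\partial_t h$. For slim ESD, evaluating at $f=0$ similarly leaves $DT[0]h=(u-t)\partial_t h + h(t,t,\cdot)$. For slim LSD at $f=0$, we get $DT[0]h=-(u-t)\partial_u h + h(u,u,x)$.

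In each case we choose test directions that depend only on the time variables, for example $h_i(t,u,x)=\phi_i(t,u)e$ with a fixed unit vector $e$ and $\phi_i\in C^1_c$ supported in the open triangle $\{t<u\}$. This choice kills all spatial-Jacobian terms. The diagonal evaluations $h(t,t,\cdot)$ and $h(u,u,\cdot)$ then vanish, and the $\partial_x h\,v$ term in MeanFlow drops out as well. The symmetry condition reduces to $\E[(u-t)\partial_t\phi_1\,\phi_2]=\E[\phi_1 (u-t)\partial_t\phi_2]$, or the analogous identity with $\partial_u$, under the marginal $\rho(t,u)$ of $\mu$.

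Next we specialize further to $\phi_1=\phi$ and $\phi_2=1$ on the support of $\phi$. This is legitimate via a cutoff equal to $1$ near $\operatorname{supp}\phi$: the cutoff's derivative vanishes there, so the right-hand side becomes $0$. The condition then reads $\int (u-t)\rho\,\partial_t\phi=0$ for all such $\phi$. Integrating by parts, this is $\partial_t\big((u-t)\rho\big)=0$ in the distributional sense on the interior. We can violate this without any knowledge of $\rho$: it would force $(u-t)\rho(t,u)=c(u)$, which blows up as $t\to u$ unless $c\equiv 0$, and then $\rho=0$ there, contradicting full support. A more concrete route is to pick a bump $\phi$ in a small box and check directly that $\E[(u-t)\partial_t\phi]\neq 0$ for a suitable one (e.g.\ $\phi$ increasing in $t$ over a region where the density is positive).

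The main obstacle is making this contradiction rigorous under only the standing regularity assumptions on $\mu$, since the time marginal may not be smooth. I would handle it with the distributional argument above. This needs only that $\mu$ has positive, locally integrable mass on the open triangle, and the triangle is unbounded in neither direction. A secondary check is that the chosen $h_i$ actually lie in $\mathcal{M}$, i.e.\ that $\mathcal{M}$ contains smooth compactly supported functions, which we would note as part of the standing conditions. Finally, since $f=0\in\mathcal{M}$ by linearity, evaluating at $f=0$ is valid for iMF, slim ESD and slim LSD.
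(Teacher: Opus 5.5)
Your argument is correct and shares the paper's skeleton: the contrapositive of the symmetry criterion, base point $f_0\equiv 0$, $x$-independent directions $h=\phi(t,u)e$, and the same formulas for $DT[0]h$. Your witnesses, however, are genuinely different.

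The paper uses global low-degree polynomials in $(t,u)$ and gets sign-definite asymmetries in one line. For MeanFlow and iMF, $h_1=t^2$, $h_2=t$ gives $\E[(u-t)t^2]>0$. For slim ESD ($h_1=u$, $h_2=1$) and slim LSD ($h_1=t$, $h_2=1$), the transport part vanishes and the asymmetry $\mp\E[u-t]$ comes entirely from the diagonal-evaluation terms $h(t,t,\cdot)$ and $h(u,u,\cdot)$.

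You do the opposite. Supporting $\phi_i$ inside the open triangle kills the diagonal terms, so a single mechanism handles all four objectives: the non-symmetry of $(u-t)\partial_t$ (resp.\ $-(u-t)\partial_u$) in $L^2(\mu)$. This also shows the asymmetry survives for directions vanishing near the diagonal and the boundary. The price is that your argument is non-constructive and needs more machinery:
\begin{itemize}
\item a distributional Liouville step: $\partial_t\nu=0$ on the triangle, whose $t$-slices $(0,u)$ are intervals, forces $\nu=dt\otimes\sigma(du)$;
\item finiteness of $\mu$;
\item the fact that the prefactor $(u-t)$ vanishes linearly on the diagonal. For a bare $\partial_t$ and a uniform time marginal, the interior argument would find no asymmetry.
\end{itemize}

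Three points for the write-up.
\begin{itemize}
\item State the contradiction as non-integrability rather than blow-up. On the open triangle, $(u-t)\rho=dt\otimes\sigma$ gives $\rho(\{0<t<u<1\})=\int_0^1\int_0^u\frac{dt}{u-t}\,\sigma(du)=\infty$ unless $\sigma=0$; for slim LSD use $\int_t^1\frac{du}{u-t}=\infty$. Then $\sigma=0$ contradicts full support.
\item Drop the ``concrete bump'' alternative. A compactly supported $\phi$ has $\partial_t\phi$ of both signs along every $t$-line, so you cannot exhibit a witness without knowing $\rho$.
\item Your $h_i$ are not compactly supported in $x$; they belong to $\mathcal{M}$ because they are smooth and bounded. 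For such $x$-independent $h_i$, $T[\epsilon h_i]-T[0]$ is exactly linear in $\epsilon$, so the Gateaux derivatives exist. You can therefore invoke the symmetry lemma (Lemma~\ref{lem:functional-poincare}) directly at these directions, as the paper does, rather than relying on Gateaux differentiability of $T$ along all of $\mathcal{M}$.
\end{itemize}
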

\end{tcolorbox}
The proof in \S\ref{appsec:update_is_a_gradient} exhibits, for each objective, a base point $f_0$ and two directions $h_1, h_2 \in \mathcal{M}$ at which $D T[f_0]$ is not symmetric. In light of Corollary~\ref{cor:non-conservative}, it is not a priori clear that following the semi-gradient field of MeanFlow, \gls{imf}, slim-\gls{esd}, or slim-\gls{lsd} converges to the true flow map, which is the unique stationary point (Corollaries~\ref{cor:meanflow}--\ref{cor:esd_slim}).
Proposition~\ref{prop:rectifying-templates-unified} below shows that it does, by an explicit formula for the flow that does not rely on the uniqueness results.

%%%%%%%%%%%%%%%%%%%%%%%%%%%%%%%%%%%%%%%%%%%%%%%%%%%%%%%%%%%%%%%%%%%%%%%%%%%
\subsection{Convergence of functional semi-gradient flow for MeanFlow, improved MeanFlow, slim \gls{esd} \& \gls{lsd}.}
%%%%%%%%%%%%%%%%%%%%%%%%%%%%%%%%%%%%%%%%%%%%%%%%%%%%%%%%%%%%%%%%%%%%%%%%%%%

In this section, we prove convergence results for functional semi-gradient flow for MeanFlow, improved MeanFlow, slim \gls{esd}, and slim \gls{lsd} objectives with stopgrad.
We study MeanFlow directly, while for \gls{imf}, slim \gls{esd} and \gls{lsd}, we consider distillation training following perfect flow-matching pretraining.
This choice directly treats the matter of \textit{flow map convergence} for a given velocity field, and aligns with the separation of flow matching pretraining and flow map post-training setup often used in practice.
We succinctly state our setup and results here, and relegate full derivations and proofs to \S\ref{appsec:convergence}.

We have shown that Theorem \ref{thm:principle} applies to MeanFlow, \gls{imf}, slim \gls{esd}, and slim \gls{lsd}, so that under the functional inner product $\langle \cdot, \cdot \rangle_\mu$, the semi-gradient flow of each stopgrad term is $\partial_\tau f_\tau = -g[f_\tau]$, over optimization time $\tau$. 
For clarity of exposition, we choose a setting where $g: \mathcal{M} \to \mathcal{M}$, such that the semi-gradient flow on $f \in \mathcal{M}$ stays in $\mathcal{M}$.
Under perfect flow-matching pretraining, the diagonal $f_\tau(t,t,\cdot) = v(t,\cdot)$ is invariant along the flow for slim \gls{esd} and slim \gls{lsd} (Lemma~\ref{lem:diag-invariance-esd-lsd-unified}) and for \gls{imf}. Hence, the flow matching terms in slim \gls{esd} and slim \gls{lsd} contribute no semi-gradient, and the \gls{imf} flow coincides with the MeanFlow flow.

We write the semi-gradient flow for the four objectives (eqs.~\ref{eq:meanflow_func}, \ref{eq:imf_func}, \ref{eq:lsd_slim_func}, \ref{eq:esd_slim_func}), assuming perfect flow-matching pretraining for slim \gls{esd}, slim \gls{lsd}, and \gls{imf} but not for MeanFlow (\S\ref{appsec:form_of_functional_gradient_flows}). 
The Eulerian objectives (MeanFlow, \gls{imf}, slim \gls{esd}) share an Eulerian semi-gradient flow, separate from the Lagrangian (slim \gls{lsd}) semi-gradient flow:
\begin{align}
    \text{Eulerian:}\quad \partial_\tau F_\tau(t,u,x) &= (u-t) \Big( \partial_t F_\tau(t,u,x)+(\partial_x F_\tau)(t,u,x)\,v(t,x) \Big) \label{eq:eulerian-gradient-flow} \\
    \text{Lagrangian:}\quad \partial_\tau F_\tau(t,u,x) &= -(u-t) \Big( \partial_u F_\tau(t,u,x)-v \bigl(u,F_\tau(t,u,x)\bigr) \Big) \label{eq:lagrangian-gradient-flow}
\end{align}
We now give our main result: these functional semi-gradient flows converge to the true flow map. In particular, $F_\tau$ is a closed-form composition of the initial flow map and the true flow map, governed by a ``hand-off'' time varying by $\tau$ that transitions $F_\tau$ from $F_0$ to $F^*$ (Fig. \ref{fig:ftau_closed_form}).
\begin{tcolorbox}[boxrule=0pt, frame empty, breakable]
\begin{proposition}[Convergence]
    \label{prop:rectifying-templates-unified}
    Let \(\mathcal A := \{(t,u)\in[0,1]^2 : t \leq u \}\).
    Suppose the usual regularity conditions for flow maps (Remark~\ref{rmk:regularity}), such that the velocity field generates a unique $C^1$ flow map $F^*: [0,1]^2 \times \mathbb{R}^d \to \mathbb{R}^d$.
    % , satisfying $F^*(t,t,x)=x$ and $F^*(r,u,F^*(t,r,x)) = F^*(t,u,x)$ for all $t,r,u \in [0,1]$.
    Let \(F_0:\mathcal A\times\mathbb R^d\to\mathbb R^d\) be \(C^1\) and satisfy
    \(F_0(t,t,x)=x\).

    Denote ``hand-off'' times $s^\tau_{t{\shortrightarrow}u} := u-e^{-\tau}(u-t)$ and $s^\tau_{u{\shortrightarrow}t} := t+e^{-\tau}(u-t)$.
    Since $e^{-\tau} \in (0, 1]$, both $s^\tau_{t{\shortrightarrow}u}$ and $s^\tau_{u{\shortrightarrow}t}$ are convex combinations of $t$ and $u$.

    \begin{enumerate}[leftmargin=*]
        \item[\textbf{(a)}] \textbf{Eulerian.}
        Suppose $\partial_\tau F_\tau$ obeys equation \ref{eq:eulerian-gradient-flow}. Then,
        \begin{equation}
            F_\tau(t,u,x) = F_0
            \bigl(
                s^\tau_{t{\shortrightarrow}u},
                u,
                F^*(t,s^\tau_{t{\shortrightarrow}u},x)
            \bigr).
            \label{eq:f-explicit-eulerian-unified}
        \end{equation}
        In particular, for every fixed \((t,u,x)\), we have $F_\tau(t,u,x)\to F^*(t,u,x)$ as $\tau\to\infty$.

        \item[\textbf{(b)}] \textbf{Lagrangian.}
        Suppose $\partial_\tau F_\tau$ obeys equation \ref{eq:lagrangian-gradient-flow}. Then,
        \begin{equation}
            F_\tau(t,u,x)
            =
            F^*\bigl(s^\tau_{u{\shortrightarrow}t}, u, F_0(t,s^\tau_{u{\shortrightarrow}t},x)
            \bigr)
            .
            \label{eq:f-explicit-lagrangian-unified}
        \end{equation}
        In particular, for every fixed \((t,u,x)\), we have $F_\tau(t,u,x)\to F^*(t,u,x)$ as $\tau\to\infty$.
    \end{enumerate}
\end{proposition}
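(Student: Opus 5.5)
The plan is to verify directly that the proposed closed forms satisfy the respective first-order linear PDEs in $\tau$ with the correct initial condition at $\tau=0$. Uniqueness of solutions then follows by the method of characteristics. Both semi-gradient flows are linear transport equations in the variables $(\tau,t,x)$ (Eulerian) or $(\tau,u)$ with a nonlinear zeroth-order term (Lagrangian). The closed forms come from recognizing that $\tau$ acts like a logarithmic time reparametrization: setting $s=u-e^{-\tau}(u-t)$ gives $\partial_\tau s = u-s$.

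For the Eulerian case \textbf{(a)}, I will fix $(u,x)$. Write $G(s,y):=F_0(s,u,y)$ and define $\Phi_\tau(t,x):=G(s_\tau,F^*(t,s_\tau,x))$ with $s_\tau=s^\tau_{t\to u}$. At $\tau=0$ we have $s_0=t$ and $F^*(t,t,x)=x$, so $\Phi_0=F_0$.

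Next I differentiate in $\tau$. Since $\partial_\tau s_\tau = e^{-\tau}(u-t)$, the $\tau$-derivative is $e^{-\tau}(u-t)\,\frac{d}{ds}\big[G(s,F^*(t,s,x))\big]_{s=s_\tau}$. By the Lagrangian identity \eqref{eq:lagrangian_identity}, this equals $e^{-\tau}(u-t)\big(\partial_s G + \partial_y G\, v(s,F^*(t,s,x))\big)$.

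For the right-hand side of \eqref{eq:eulerian-gradient-flow}, I compute $\partial_t\Phi_\tau + \partial_x\Phi_\tau\, v(t,x)$. Here $\partial_t s_\tau = e^{-\tau}$. The inner map contributes $\partial_t F^*(t,s,x)+\partial_x F^*(t,s,x)v(t,x)=0$ by the transport equation \eqref{eq:transport_intro}. What remains is $e^{-\tau}\,\frac{d}{ds}[G(s,F^*(t,s,x))]$, and multiplying by $(u-t)$ matches the left-hand side. Convergence follows because $s_\tau\to u$, so $F_\tau\to F_0(u,u,F^*(t,u,x))=F^*(t,u,x)$ by continuity and $F_0(u,u,\cdot)=\mathrm{id}$.

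For the Lagrangian case \textbf{(b)}, I fix $(t,x)$. Set $r_\tau=s^\tau_{u\to t}=t+e^{-\tau}(u-t)$, so that $\partial_\tau r_\tau=-e^{-\tau}(u-t)$ and $\partial_u r_\tau=e^{-\tau}$. Define $\Psi=F^*(r,u,F_0(t,r,x))$.

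Then $\partial_u\Psi = \partial_u F^*(r,u,\cdot) + e^{-\tau}\frac{d}{dr}\big[F^*(r,u,F_0(t,r,x))\big]$. The first term is $v(u,\Psi)$ by \eqref{eq:lagrangian_identity}. Hence $\partial_u\Psi - v(u,\Psi)=e^{-\tau}\frac{d}{dr}[\cdots]$. Meanwhile $\partial_\tau\Psi=-e^{-\tau}(u-t)\frac{d}{dr}[\cdots]$, which matches \eqref{eq:lagrangian-gradient-flow} exactly. The initial condition holds because $r_0=u$ and $F^*(u,u,y)=y$, giving $\Psi_0 = F_0(t,u,x)$. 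Convergence holds because $r_\tau\to t$, so $F_\tau\to F^*(t,u,F_0(t,t,x))=F^*(t,u,x)$.

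The main obstacle is uniqueness: showing that the flow starting at $F_0$ must be the exhibited one. The verification only proves that these closed forms are \emph{a} solution.

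For the Eulerian flow, I would argue via characteristics. The equation $\partial_\tau F - (u-t)\partial_t F - (u-t)\partial_x F\,v=0$ says $F$ is constant along curves with $\dot t=-(u-t)$ and $\dot x=-(u-t)v(t,x)$. These curves exist uniquely backward by Lipschitz continuity of $v$ (Remark~\ref{rmk:regularity}) and stay in $\mathcal A$. This pins down $F_\tau$ from $F_0$.

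For the Lagrangian flow, I would work pointwise in $(t,x)$. The equation becomes a semilinear transport in $(\tau,u)$ with characteristics $\dot u=u-t$. Along each characteristic, $F$ satisfies an ODE whose right-hand side $(u-t)v(u,F)$ is Lipschitz in $F$, so Picard–Lindelöf gives uniqueness.

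I expect some care to be needed at the diagonal $t=u$, where characteristics degenerate, and with the $C^1$ regularity needed to apply the chain rule to $F^*$ in all arguments. Both points are covered by the assumed $C^1$ flow map.
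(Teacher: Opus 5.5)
Your proposal is correct, but it takes a different route from the paper.

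\textbf{The paper's route.} The paper never guesses the formula. It conjugates the flow by the true flow map, setting $a_\tau(t,u,z) := F_\tau(t,u,F^*(0,t,z))$ in the Eulerian case and $b_\tau(t,u,x) := F^*(u,t,F_\tau(t,u,x))$ in the Lagrangian case. This removes the spatial transport term, and in case (b) it also absorbs the nonlinear term $v(u,F)$. What remains, in the gap variable $q=u-t$, is the single scalar equation $\partial_\tau c + q\,\partial_q c = 0$, with solution $c_\tau(q)=c_0(e^{-\tau}q)$. The paper then pulls this back using backward maps ($F^*(t,0,\cdot)$, resp.\ $F^*(u,t,\cdot)$) and the semigroup property.

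\textbf{Your route.} You verify the closed forms by the chain rule, using the transport equation and the Lagrangian identity for $F^*$. You then get uniqueness from characteristics in the original coordinates: $\dot t=-(u-t)$, $\dot x=-(u-t)v(t,x)$ in case (a), and $\dot u=u-t$ with $\dot Y=(u-t)v(u,Y)$ plus Picard--Lindel\"of in case (b). I checked both verifications and both characteristic systems, including that backward characteristics stay in $\{t\le u\}$; they are correct.

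\textbf{What each approach buys.} Your route only evaluates $F^*$ forward in time on $\{t\le u\}$, so it needs no invertibility or semigroup identities. It also shows the hypothesis is non-vacuous, since the formula really is a solution; the paper's conditional derivation does not address this. The paper's route is constructive: it explains where the hand-off time comes from and handles both cases with one scalar equation. It also produces $a_0$ and $b_0$, which are reused for the exponential rates in Proposition~\ref{prop:exp-rates-unified}.

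\textbf{Verification step is optional.} Your characteristic argument, traced backward from $(\tau,t,x)$, already lands at $(0,\,s^\tau_{t{\shortrightarrow}u},\,F^*(t,s^\tau_{t{\shortrightarrow}u},x))$ in case (a). In case (b) it gives $Y(\tau)=F^*(s^\tau_{u{\shortrightarrow}t},u,F_0(t,s^\tau_{u{\shortrightarrow}t},x))$. So for the statement as posed, the uniqueness step alone proves the formulas, and the verification step is logically optional.

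Both your argument and the paper's tacitly require $F_\tau$ to be jointly $C^1$ in $(\tau,t,u,x)$ for the chain rule along characteristics.
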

\end{tcolorbox}
We provide the proof in \S\ref{appsec:convergence_proof}. 
Under additional mild conditions, convergence occurs exponentially fast: $\| F_\tau(t,u,x) - F^*(t,u,x)\| \leq ce^{-\tau}(u-t)$ for a positive constant $c$ (Proposition \ref{prop:exp-rates-unified}, \S\ref{appsec:exponential_convergence}).

We summarize our main convergence results for each objective:
\begin{itemize}
    \item Functional semi-gradient flow on the MeanFlow objective converges to the true flow map.
    \item Functional semi-gradient flow on the Improved MeanFlow objective, initialized at $f_0(t, t, \cdot) = v(t, \cdot)$ \textit{(i.e., the model's instantaneous velocity matches the true marginal velocity; perfect flow matching pretraining)}, converges to the true flow map.
    \item Functional semi-gradient flow on the slim \gls{esd} objective, initialized at $f_0(t, t, \cdot) = v(t, \cdot)$, converges to the true flow map.
    \item Functional semi-gradient flow on the slim \gls{lsd} objective, initialized at $f_0(t, t, \cdot) = v(t, \cdot)$, converges to the true flow map.
\end{itemize}

%%%%%%%%%%%%%%%%%%%%%%%%%%%%%%%%%%%%%%%%%%%%%%%%%%%%%%%%%%%%%%%%%%%%%%%%%%%%%%%%%
\section{Experiments}
%%%%%%%%%%%%%%%%%%%%%%%%%%%%%%%%%%%%%%%%%%%%%%%%%%%%%%%%%%%%%%%%%%%%%%%%%%%%%%%%%

%%%%%%%%%%%%%%%%%%%%%%%%%%%%%%%%%%%%%%%%%%%%%%%%%%%%%%%%%%%%%%%%%%%%%%%%%%%%%%%%%
\subsection{Numerical semi-gradient flow experiments}
%%%%%%%%%%%%%%%%%%%%%%%%%%%%%%%%%%%%%%%%%%%%%%%%%%%%%%%%%%%%%%%%%%%%%%%%%%%%%%%%%
The result that $F_\tau$ has a closed-form expression in terms of $F_0$ and $F^*$ is remarkable. We perform numerical experiments to check this by simulating functional semi-gradient flow on $F_\tau$ using equations \ref{eq:eulerian-gradient-flow}, \ref{eq:lagrangian-gradient-flow}, and comparing this simulated $F_\tau$ to its theoretical value. We model $F$ on a discretized grid for $(t,u,x)$ in one to five spatial dimensions. See \S\ref{appsec:convergence-numerical-experiments} for further details.

In Figure \ref{fig:convergence-numerical-main} left, we compare $\|  F_\tau(t,u,x) \|_2$ from simulation to the theoretical value from composing $F^*$ and $F_0$. 
We observe a Pearson correlation of 1.000000 over 220M points, indicating good numerical agreement with the theoretical relationship.
In Figure \ref{fig:convergence-numerical-main} right, we plot the evolution of the pointwise error from the true flow map.
We see excellent agreement between the dashed (theoretical) and solid (simulated) lines, further supporting the validity of the theoretical relationship.

\begin{figure*}[h!]
     \centering
     % --- Wide Figure (50% of width) ---
     \begin{subfigure}[b]{0.24\textwidth}
         \centering
         \includegraphics[width=\textwidth]{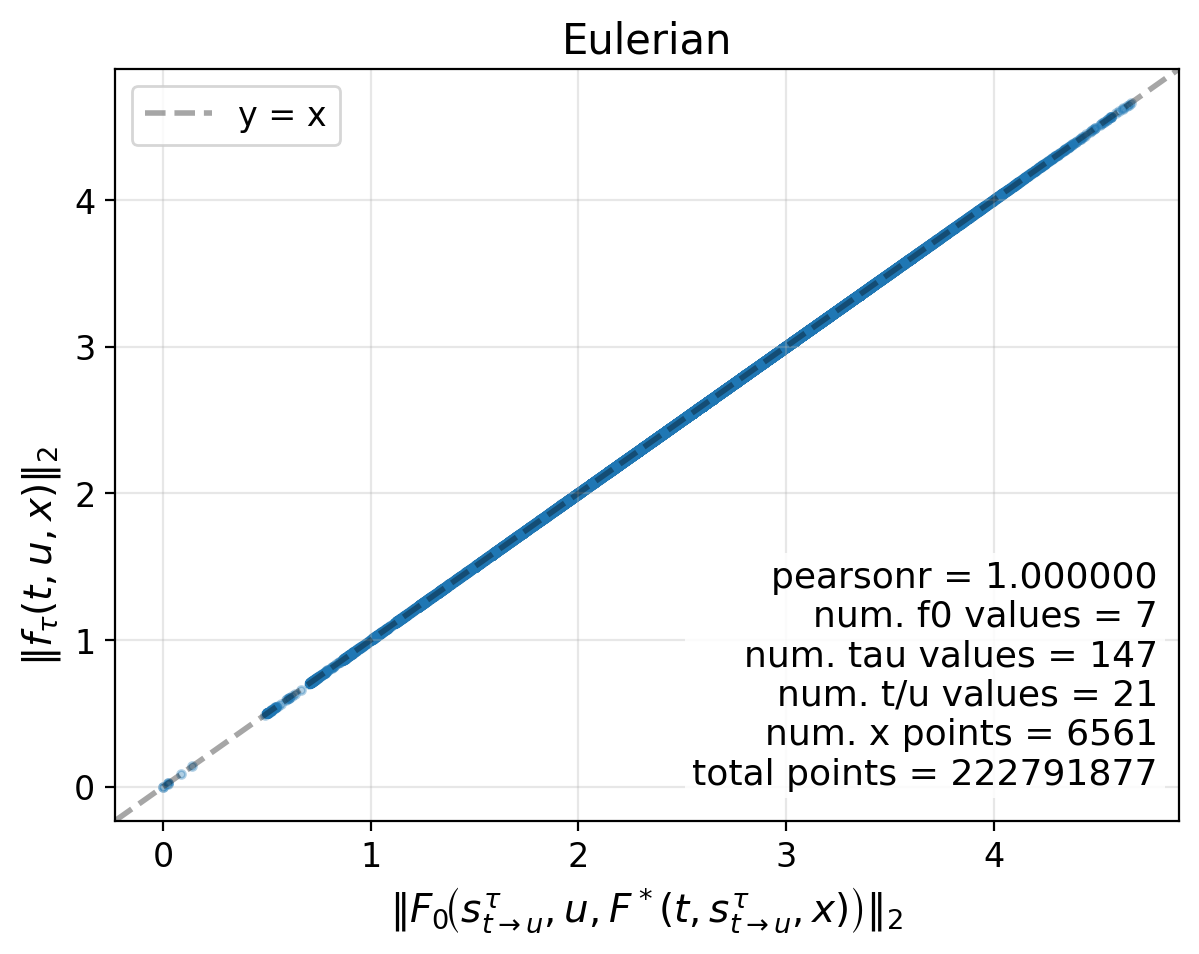}
     \end{subfigure}
     \hfill
     \begin{subfigure}[b]{0.24\textwidth}
         \centering
         \includegraphics[width=\textwidth]{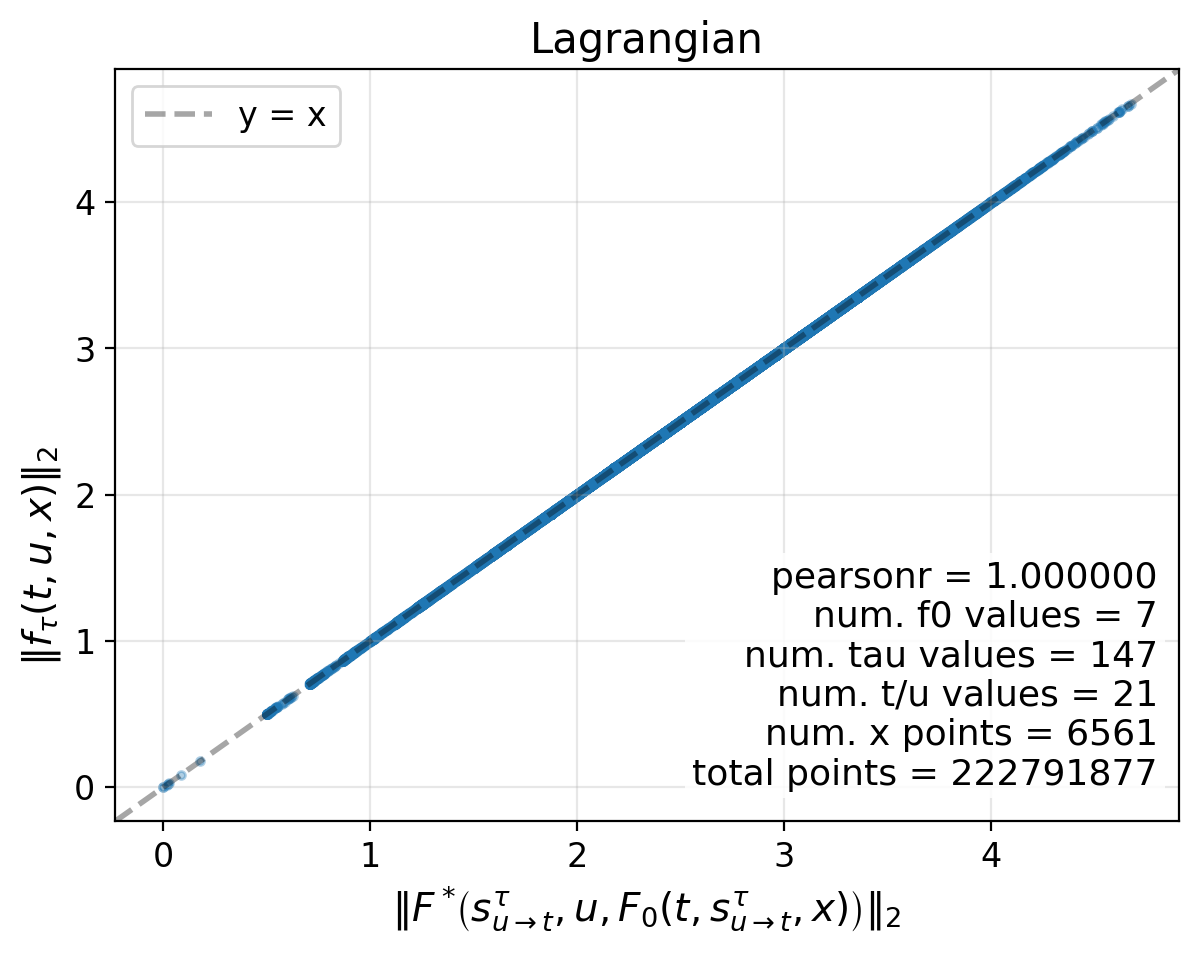}
     \end{subfigure}
     \hfill
     % --- Narrow Figure 1 (24% of width) ---
     \begin{subfigure}[b]{0.24\textwidth}
         \centering
         \includegraphics[width=\textwidth]{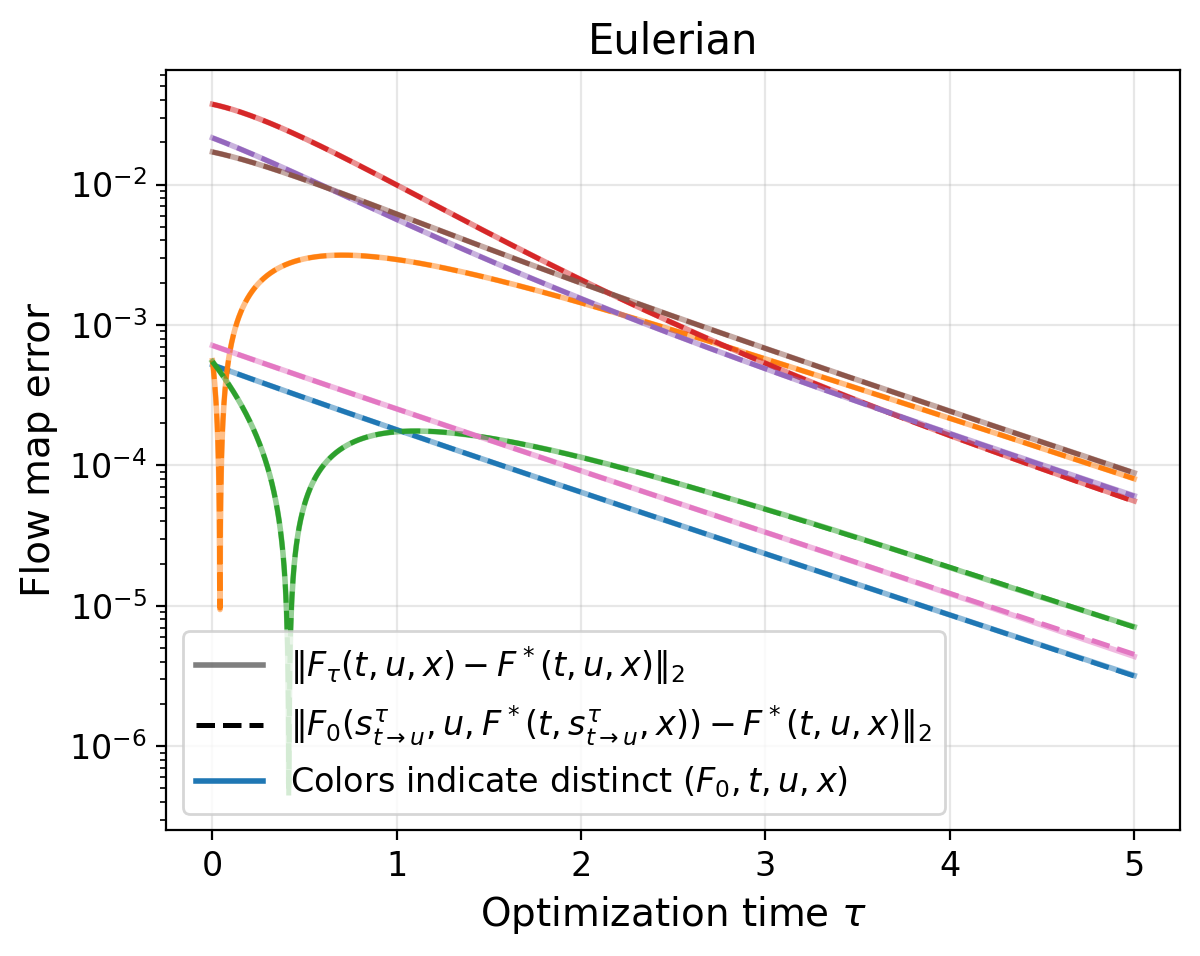}
     \end{subfigure}
     \hfill
     % --- Narrow Figure 2 (24% of width) ---
     \begin{subfigure}[b]{0.24\textwidth}
         \centering
         \includegraphics[width=\textwidth]{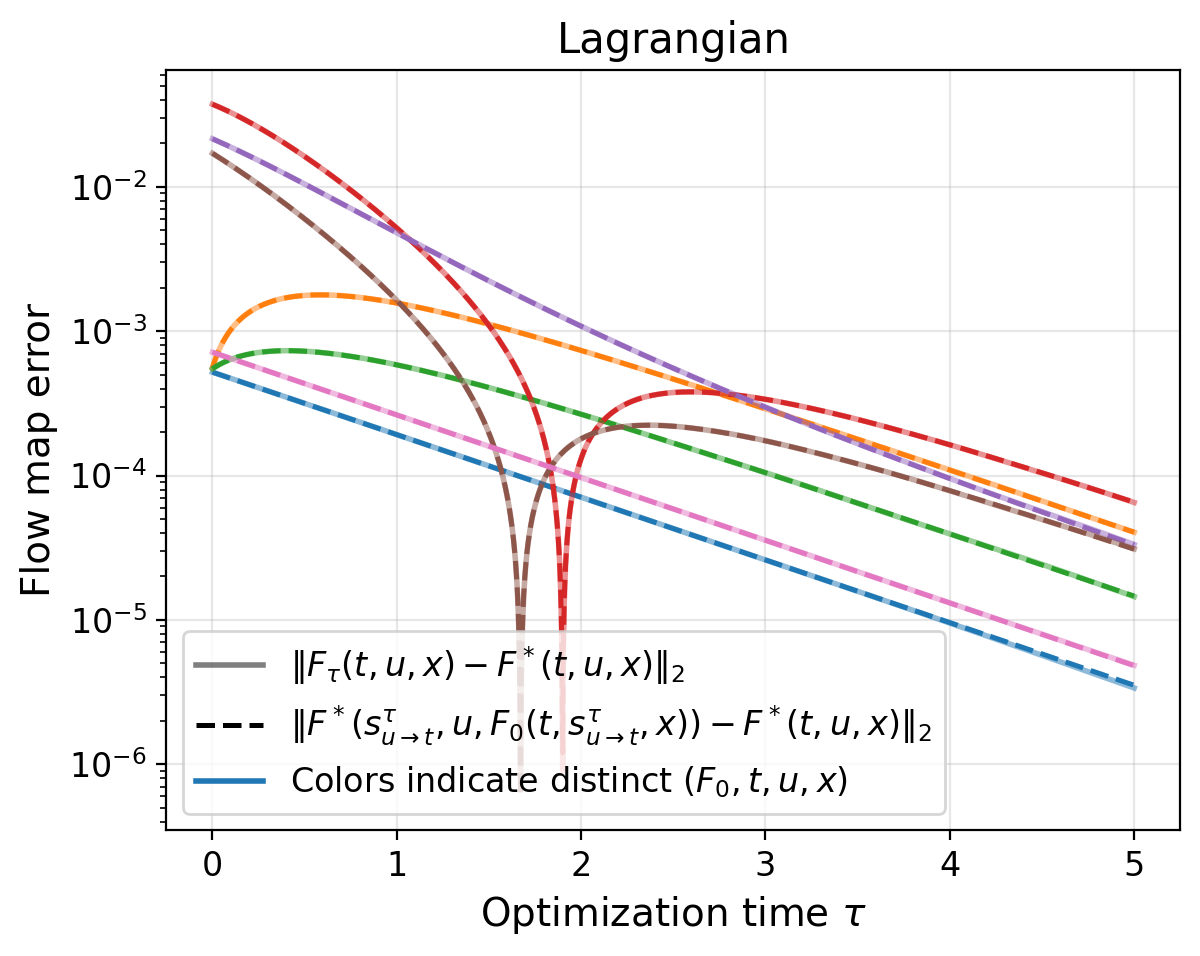}
     \end{subfigure}

     \caption{ \small{ Numerical experiments for the theoretical evolution of $F_\tau$. Left: Norms under simulated semi-gradient flow vs. theoretical value from the composition of $F^*$ and $F_0$, with four spatial dimensions. Right: Theoretical pointwise error to the true flow map (dashed) vs. simulated pointwise error (solid) in one spatial dimension.
     Each color denotes a distinct $(F_0,t,u,x)$ over semi-gradient flow optimization time $\tau$.     
     % Dashed and solid lines compare the error of the numerically simulated $F_\tau$ to the error of the theoretical value of $F_\tau$.
     } 
     }
     \label{fig:convergence-numerical-main}
\end{figure*}

%%%%%%%%%%%%%%%%%%%%%%%%%%%%%%%%%%%%%%%%%%%%%%%%%%%%%%%%%%%%%%%%%%%%%%%%%%%%%%%%%
\subsection{Slim-\gls{esd} and \gls{lsd}: Stopgrad variants of Lagrangian and Eulerian self-distillation}
%%%%%%%%%%%%%%%%%%%%%%%%%%%%%%%%%%%%%%%%%%%%%%%%%%%%%%%%%%%%%%%%%%%%%%%%%%%%%%%%%
Theorem~\ref{thm:principle} motivated slim-\gls{esd}, or slim-\gls{lsd}, which modified stopgrad placements avoiding second-order derivatives.
We train models on \textsc{cifar-10} with slim-\gls{esd}, or slim-\gls{lsd}, and compare memory usage and performance to the original non-stopgrad (\gls{esd}, \gls{lsd}) and stopgrad objectives (\textsc{orig sg}) from \citet{boffi2025build}.
Each method is trained for 200k steps with checkpoints every 25k steps; for each sampling-step count we report the best EMA-model \gls{fid} across checkpoints.  We include more details in \S\ref{appsec:experiment_details}.
The results are shown in Table~\ref{tab:fid_results_best}.  

\paragraph{CIFAR-10 Results.}
\gls{lsd} \textsc{slim} tracks \gls{lsd} \textsc{orig sg} at \gls{fid}-10/50/100.
\gls{esd} \textsc{slim} \emph{improves} on \gls{esd} \textsc{orig sg} at every sampling budget.
The one regime where \textsc{orig sg} retains an edge is single-step sampling in \gls{lsd}, where \textsc{slim} reaches \gls{fid}-1 of 47.6 versus 25.3 for \textsc{orig sg}. 
Our \textsc{slim} variants reduce memory usage by 2$\times$ over \gls{lsd} and \gls{esd}, and 1.5$\times$ over \textsc{orig sg} variants.

% \vspace{5mm}

{\renewcommand\tabularxcolumn[1]{m{#1}}
\begin{table}[ht]
  \centering
  \small
  \setlength{\tabcolsep}{4pt}
  \begin{tabularx}{\textwidth}{l|c|c|c|c|c|c|c}
  \toprule
  \multicolumn{1}{c|}{Method}
  & \gls{fid}-1
  & \gls{fid}-10
  & \gls{fid}-50
  & \gls{fid}-100
  % & Memory peak (GiB) ($\downarrow$)
  & \multicolumn{1}{>{\centering\arraybackslash}X|}{ Memory peak (GiB) ($\downarrow$)}
  & \multicolumn{1}{>{\centering\arraybackslash}X|}{Known stationary point}
  & \multicolumn{1}{>{\centering\arraybackslash}X}{No second-order derivatives} \\
  \midrule
  \gls{esd}
  & $34.3 \pm 0.5$
  & $11.8 \pm 0.3$
  & $6.0 \pm 0.2$
  & $4.9 \pm 0.3$
  & 64.6 & \cmark & \xmark \\
  
  \gls{esd} \textsc{orig sg}
  & $122.0 \pm 24.5$
  & $18.7 \pm 8.6$
  & $6.7 \pm 1.3$
  & $5.8 \pm 1.2$
  & 50.0 & ? & \xmark \\
  
  \gls{esd} \textsc{slim sg}
  & $25.4 \pm 2.8$
  & $4.2 \pm 0.05$
  & $3.1 \pm 0.05$
  & $2.8 \pm 0.03$
  & \textbf{31.2} & \cmark & \cmark \\
  \midrule
  
  \gls{lsd}
  & $291.9 \pm 6.5$ 
  & $10.7 \pm 1.9$
  & $4.2 \pm 0.6$
  & $3.2 \pm 0.4$
  & 60.8 & \cmark & \xmark \\
  
  \gls{lsd} \textsc{orig sg}
  & $25.3 \pm 0.4$
  & $3.3 \pm 0.09$
  & $2.8 \pm 0.02$
  & $2.7 \pm 0.004$
  & 41.0 & ? & \xmark \\
  
  \gls{lsd} \textsc{slim sg}
  & $47.6 \pm 0.5$
  & $3.4 \pm 0.07$
  & $3.0 \pm 0.06$
  & $2.7 \pm 0.05$
  & \textbf{26.9} & \cmark & \cmark \\
  \bottomrule
  \end{tabularx}
  \caption{\small{
    \textbf{\gls{fid} on \textsc{cifar-10}.}
    \gls{fid}-X denotes X sampling steps; mean and std over three random seeds.
    \gls{esd}/\gls{lsd}: Eulerian and Lagrangian self-distillation losses combined with flow matching (no stopgrad).
    \textsc{orig sg}: the stopgrad recommended by \cite{boffi2025build}.
    \textsc{slim}: our proposed stopgrad.
  }}
  \label{tab:fid_results_best}
\end{table}
}
\vspace{-10pt}

\begin{table}
% \begin{wraptable}{r}{0.4\textwidth}
  \vspace{-1em}
  \centering
  \small
  \setlength{\tabcolsep}{5pt}

  \begin{tabular}{
    >{\raggedright\arraybackslash}p{2.8cm}|
    >{\centering\arraybackslash}p{2.5cm}
  }
  \toprule
  \multicolumn{1}{c|}{Method} & \gls{fid}-1 \\
  \midrule
  \gls{esd} & 287.6 $\pm$ 2.0 \\
  \gls{esd} \textsc{orig sg} & 195.1 $\pm$ 49.2 \\
  \gls{esd} \textsc{slim sg} & 128.7 $\pm$ 1.3 \\
  \midrule
  \gls{lsd} & 390.8 $\pm$ 23.5 \\
  \gls{lsd} \textsc{orig sg} & 152.8 $\pm$ 3.0 \\
  \gls{lsd} \textsc{slim sg} & 159.5 $\pm$ 8.6 \\
  \bottomrule
  \end{tabular}

  \caption{\small{
    \textbf{\gls{fid} on \textsc{ImageNet}}.
    See Table \ref{tab:fid_results_best} caption.
    Mean and std over three seeds.
  }}
  \label{tab:imagenet_fid_results_best}
  % \vspace{-1em}
\end{table}
% \end{wraptable}

\paragraph{ImageNet Results.} We train models on ImageNet with SD-VAE latents.
We pretrain the MeanFlow-B4 model, a 90M diffusion transformer, with flow matching for 50k steps, then train with the full loss, and report one-step \gls{fid} at 200k train steps (40 epochs).
This setting compares objectives in shorter, ablation-style runs, with results in Table \ref{tab:imagenet_fid_results_best}.
After FM-only pretraining, one-step \gls{fid} is 315, showing that all objectives except for \gls{lsd} improve \gls{fid} while training.
We find that \textsc{slim} \gls{lsd} is competitive with \gls{lsd} \textsc{orig}, and \textsc{slim} \gls{esd} remains the best among \gls{esd} variants.

%%%%%%%%%%%%%%%%%%%%%%%%%%%%%%%%%%%%%%%%%%%%%%%%%%%%%%%%%%%%%%%%%%%%%%%%%%%%%%%%%
\section{Related work}
%%%%%%%%%%%%%%%%%%%%%%%%%%%%%%%%%%%%%%%%%%%%%%%%%%%%%%%%%%%%%%%%%%%%%%%%%%%%%%%%%

\textbf{Flow maps and consistency models.} Flow maps and consistency models train networks to integrate probability-flow ODEs in few steps \citep{song2023consistency, song2023improved, lu2024simplifying, kim2023consistency, salimans2022progressive, boffi2024flowmapmatching, sabour2025align}. Many of these approaches use stopgrads to avoid backpropagating through model derivatives or nested model evaluations.
We discuss the objectives of \cite{boffi2025build} and \cite{geng2025meanflow} in detail.
\citet{goldstein2026flow} proved stationary-point results for SGFlow, a related stopgrad approach, but this approach still differentiates through forward-mode derivatives. In this work, \textbf{we prove uniqueness of the true flow map as a stationary point} for MeanFlow, improved MeanFlow, and our slim \gls{esd}/\gls{lsd} variants, and additionally \textbf{prove convergence of functional semi-gradient flow} for these methods.

\textbf{Stopgrads and semi-gradients.}
Semi-gradients have been studied extensively in \gls{rl} \citep{sutton2018reinforcement, baird1995residual, tsitsiklis1997analysis, bhandari2018finite}, where TD-style updates are not gradients of any fixed loss but nonetheless converge under conditions related to the structure of the Bellman operator and its fixed points \citep{ollivier2018approximate, fellows2023why, brandfonbrener2020geometric, piche2021bridging}.
Non-conservative update fields also arise in differentiable games such as \textsc{gan}s \citep{balduzzi2018mechanics, letcher2019differentiable}.
We derive our functional stopgrad formalism from first principles---perturbing only non-stopgrad occurrences of $f$---and verify that it reduces to PyTorch's \texttt{detach} in finite dimensions. The same construction is used in the context of fine-tuning and sampling in the proofs of adjoint matching and adjoint sampling \citep{domingo-enrich2025adjointmatching, havens2025adjointsampling}.
\citet{ponce2026dual} studies \textit{two-player} settings used in self-supervised methods such as \textsc{byol}; there,  stopgrads prevent model collapse for objectives whose minima would otherwise be collapsed solutions.

%%%%%%%%%%%%%%%%%%%%%%%%%%%%%%%%%%%%%%%%%%%%%%%%%%%%%%%%%%%%%%%%%%%%%%%%%%%%%%%%%
\section{Discussion}
%%%%%%%%%%%%%%%%%%%%%%%%%%%%%%%%%%%%%%%%%%%%%%%%%%%%%%%%%%%%%%%%%%%%%%%%%%%%%%%%%

We formalized the stopgrad operator in the calculus of variations, and proved a general theorem (Theorem~\ref{thm:principle}) characterizing stationary points of stopgrad functionals.
We analyzed stationary points and convergence for flow map objectives with stopgrad, and proposed slim stopgrad placements for \gls{lsd} and \gls{esd} that avoid differentiating through derivatives of $f$, reducing training memory by $2\times$.

Looking forward, physics-informed neural networks (PINN) losses regress against \gls{pde} residuals and share structural similarities with flow map objectives \citep{hao2024pinnaclepinns}. Our stopgrad regression principle may be used to propose stopgrad placements for \textsc{pinn} losses.

\paragraph{Limitations.}
Our analysis is variational (infinite capacity, continuous optimization), so finite-width networks and stochastic gradient descent may behave differently. Our convergence results for \gls{imf}, slim \gls{esd} and slim \gls{lsd} assume perfect flow-matching pretraining.

%%%%%%%%%%%%%%%%%%%%%%%%%%%%%%%%%%%%%%%%%%%%%%%%%%%%%%%%%%%%

\clearpage

%%%%%%%%%%%%%%%%%%%%%%%%%%%%%%%%%%%%%%%%%%%%%%%%%%%%%%%%%%%%%%%%%%%%%%%%%%%%%%%%%

\subsubsection*{Acknowledgments}

The authors would like to thank Amirmojtaba Sabour and Eric Vanden-Eijden for extended discussion. The authors also thank the authors of \citep{geng2025meanflow} and \citep{boffi2024flowmapmatching, boffi2025build} for open-sourcing their code.
Mark Goldstein thanks the Simons Foundation Flatiron Institute for funding and computational resources.
Max Shen thanks Saeed Saremi for discussions.
Zichu Wang is supported by the MIT Bose Presidential Fellowship.
This work was partly supported by the NIH/NHLBI Award
R01HL148248, NSF Award 1922658 NRT-HDR: FUTURE
Foundations, Translation, and Responsibility for Data Sci-
ence, NSF CAREER Award 2145542, ONR N00014-23-
1-2634, NIH R01CA296388, NSF 2404476, Optum, the Gates Foundation, 
and Apple. This work was also supported by IITP with a grant
funded by the MSIT of the Republic of Korea in connection
with the Global AI Frontier Lab International Collaborative Research. 
%%%%%%%%%%%%%%%%%%%%%%%%%%%%%%%%%%%%%%%%%%%%%%%%%%%%%%%%%%%%%%%%%%%%%%%%%%%%%%%%%

% \newpage 
\bibliography{references}

\clearpage
\appendix

%%%%%%%%%%%%%%%%%%%%%%%%%%%%%%%%%%%%%
\section*{Glossary of symbols}
\label{app:glossary}
\addcontentsline{toc}{section}{Glossary of symbols}
%%%%%%%%%%%%%%%%%%%%%%%%%%%%%%%%%%%%%

\begingroup
\small
\renewcommand{\arraystretch}{1.15}
\noindent\begin{tabularx}{\textwidth}{@{}>{$}l<{$} X @{}}
\toprule
\multicolumn{2}{@{}l}{\textbf{General setting (Theorem~\ref{thm:principle}, Proposition~\ref{prop:floor})}} \\
\midrule
\mbx,\ \mby & inputs and labels. $\mu$ = distribution of $\mbx$. \\
S & open subset of $\bbR^n$ with $\mu(U)>0$ for every nonempty open $U \subseteq S$. \\
\bar S & closure of $S$. $\operatorname{supp}\mu \subseteq \bar S$. \\
H & $L^2(\mu,\bbR^d)$. $\langle \phi,\psi\rangle_\mu := \E_\mu[\phi(\mbx)\cdot\psi(\mbx)]$, $\|\phi\|_\mu^2 := \langle\phi,\phi\rangle_\mu$. \\
\mathcal{M} & model class. $\mathcal{M} \subset C^1 \cap H$. \\
\mathcal{D} & perturbation class. $C^1_c(S) \subseteq \mathcal{D} \subset \mathcal{M}$; for flow maps (\S\ref{app:corollaries}) $C^1_c(\Omega) \subseteq \mathcal{D}$. \\
C^1_c(S) & $C^1$ functions with compact support in $S$. \\
T[f,\mby] & $P[f] + Q[f]\,\mby + R\,\mby$ \eqref{eq:affine-target}. \\
g[f] & $f - T[f,\E[\mby\mid\mbx]] \in H$. $\delta\cL[f;h] = \langle g[f],h\rangle_\mu$. \\
\xi & $\mby - \E[\mby\mid\mbx]$. \\
V[f] & $\tfrac12\E\|(Q[f]+R)\,\xi\|^2$ \eqref{eq:floor-decomp}. \\
\text{additive noise} & Definition~\ref{def:additive}: $Q[f] \equiv Q$ independent of $f$. \\
\midrule
\multicolumn{2}{@{}l}{\textbf{Flow maps (\S\ref{sec:flowmap_stationary}, \S\ref{app:corollaries})}} \\
\midrule
\mbx_t,\ \dot\mbx_t & interpolant and its time derivative. \\
v & $v(t,x) = \E[\dot\mbx_t \mid \mbx_t = x]$. \\
F,\ f & $F(t,u,x) = x + (u-t)f(t,u,x)$ \eqref{eq:f_param}. $F^*, f^*$ = true flow map. \\
\Omega^\circ & $\{0<t<u<1\}\times\bbR^d$. Open. $\mu$ has full support on $\Omega^\circ$. In Theorem~\ref{thm:principle}, $S = \Omega^\circ$. \\
\Omega & $\{0<t\le u<1\}\times\bbR^d$. Relatively open in $\{t\le u\}$. Contains the diagonal $\{t=u\}$. \\
\bar\Omega & $\{0\le t\le u\le 1\}\times\bbR^d$. Closure of $\Omega^\circ$ and of $\Omega$. $\operatorname{supp}\mu \subseteq \bar\Omega = \bar S$. \\
\mu & distribution of $(t,u,\mbx_t)$. May have an atom on $\{t=u\}$. \\
\Omega_\Delta & $(0,1)\times\bbR^d$. \\
\nu & distribution of $(t,\mbx_t)$. Full support on $\Omega_\Delta$. \\
H_\Delta & $L^2(\nu)$, with inner product $\langle\cdot,\cdot\rangle_\nu$. \\
h_\Delta & $h_\Delta(t,x) := h(t,t,x)$ for $h \in \mathcal{D}$. \\
g_{\mathrm{SD}}[f] & semi-gradient of a stopgrad (self-distillation) term. \\
g_{\mathrm{FM}}[f] & $g_{\mathrm{FM}}[f](t,x) = f(t,t,x) - v(t,x)$. \\
E_v[f] & $\partial_t F + (\partial_x F)\,v$ (\S\ref{appsec:form_of_functional_gradient_flows}). \\
L_v[f] & $\partial_u F - v(u,F)$ (\S\ref{appsec:form_of_functional_gradient_flows}). \\
C^\infty_{\mathrm{pol}} & smooth functions all of whose derivatives grow at most polynomially in $x$, uniformly over $(t,u) \in \bar\Omega$ (\S\ref{appsec:convergence}). \\
\mathcal{A} & set of $(t,u)$ in Proposition~\ref{prop:rectifying-templates-unified}. \\
s^\tau_{t\shortrightarrow u} & $u - e^{-\tau}(u-t)$. \\
s^\tau_{u\shortrightarrow t} & $t + e^{-\tau}(u-t)$. \\
\bottomrule
\end{tabularx}
\endgroup

%%%%%%%%%%%%%%%%%%%%%%%%%%%%%%%%%%%%%
\section{Proof of Theorem~\ref{thm:principle} (Stopgrad Regression Principle)}
\label{appsec:proofs_mf}
%%%%%%%%%%%%%%%%%%%%%%%%%%%%%%%%%%%%%

We first state the fundamental lemma of the calculus of variations, which the proof invokes to pass from a vanishing inner product (against all test functions) to pointwise vanishing.

\begin{tcolorbox}[boxrule=0pt, frame empty]
\begin{lemma}[Fundamental lemma of the calculus of variations]
\label{lem:fundamental}
Let $X$ be a random variable on $\bbR^n$ with distribution $\mu$
having full support on an open set $S \subseteq \bbR^n$, i.e.\ $\mu(U) > 0$ for every nonempty open $U \subseteq S$.
Let $g : S \to \bbR^d$ be continuous.
If
\begin{align}
  \E\big[g(X) \cdot h(X)\big] = 0
  \quad \text{for all } h \in C^1_c(S, \bbR^d),
\end{align}
then $g = 0$ on $S$.
\end{lemma}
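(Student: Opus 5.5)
We prove the contrapositive. Suppose $g(x_0) \neq 0$ at some $x_0 \in S$. We will build a test function $h \in C^1_c(S,\bbR^d)$ with $\E[g(X)\cdot h(X)] > 0$, which contradicts the hypothesis.

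\emph{Choice of neighborhood.} Set $e := g(x_0)/\|g(x_0)\| \in \bbR^d$. The scalar map $x \mapsto g(x)\cdot e$ is continuous on $S$ and equals $\|g(x_0)\| > 0$ at $x_0$. Since $S$ is open, there is a radius $r>0$ such that the closed ball $\bar B(x_0,r) \subset S$ and $g(x)\cdot e > \tfrac12\|g(x_0)\|$ for all $x \in \bar B(x_0,r)$.

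\emph{Test function.} Take a scalar bump $\varphi \in C^1_c(\bbR^n)$ with $\varphi \ge 0$, $\operatorname{supp}\varphi \subseteq \bar B(x_0,r)$, and $\varphi > 0$ on the open ball $B(x_0,r)$. One concrete choice is $\varphi(x) = (r^2 - \|x-x_0\|^2)_+^2$, which is $C^1$ because the squared positive part has a vanishing derivative on the sphere $\|x-x_0\|=r$. Define $h := \varphi\, e$. Its support lies in $\bar B(x_0,r)$, a compact subset of $S$, so $h \in C^1_c(S,\bbR^d)$.

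\emph{Positivity of the pairing.} Off the ball, $g\cdot h = \varphi\,(g\cdot e) = 0$. On the ball, $g\cdot h \ge \tfrac12\|g(x_0)\|\,\varphi \ge 0$. Hence
\begin{align*}
  \E[g(X)\cdot h(X)] \;\ge\; \tfrac12\|g(x_0)\|\,\E\big[\varphi(X)\,\mathds{1}\{X \in B(x_0,r)\}\big].
\end{align*}
The integrand is strictly positive on $B(x_0,r)$. Full support gives $\mu(B(x_0,r))>0$, since this is a nonempty open subset of $S$. The integral of a strictly positive function over a set of positive measure is strictly positive, so the right-hand side is $>0$.

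\emph{Integrability.} The expectation is finite because $h$ is bounded, $g$ is continuous and hence bounded on the compact ball, and the integrand vanishes off the ball. This contradicts the hypothesis, so $g = 0$ on $S$.

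The only delicate step is ensuring the test function is genuinely $C^1$ with compact support inside the open set $S$. The squared positive part handles the $C^1$ requirement, and choosing $r$ so that the closed ball lies in $S$ handles the support requirement. Everything else follows directly from continuity and full support.
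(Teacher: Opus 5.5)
Your proof is correct and takes essentially the same route as the paper: pick a unit direction $e$ along which $g\cdot e$ is bounded below near $x_0$, test against $h=\varphi e$ with a nonnegative bump, and use full support to make the pairing strictly positive. Your explicit $C^1$ bump $(r^2-\|x-x_0\|^2)_+^2$ on a closed ball inside $S$ only makes concrete the generic $C^\infty_c$ bump the paper invokes.
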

\end{tcolorbox}

\begin{proof}
Suppose $g(x_0) \neq 0$ for some $x_0 \in S$.
By continuity, there exist a neighborhood $U \ni x_0$ and a unit vector $e$ such that $g(x) \cdot e > c > 0$ for all $x \in U$.
Choose $h(x) = \varphi(x)\, e$ where $\varphi \in C^\infty_c(U)$ is a nonnegative bump function with $\varphi(x_0) > 0$.
Then $h \in C^1_c$ and
\begin{align}
  \E[g(X) \cdot h(X)]
  = \E[\varphi(X)\, g(X) \cdot e]
  \geq c\, \E[\varphi(X)\, \mathbf{1}[X \in U]] > 0,
\end{align}
since $\{\varphi > 0\}$ is a nonempty open subset of $S$ and therefore has positive $\mu$-mass by the full-support assumption.
This contradicts $\E[g \cdot h] = 0$.
\end{proof}

\begin{proof}[Proof of Theorem~\ref{thm:principle}]
    Let \(f^\epsilon = f + \epsilon h\) for admissible \(h \in \mathcal{D}\).
    By the stopgrad definition (\S\ref{sec:sg_def}),
    \(T[f,\mby]\) is not perturbed. We compute the first variation as the limit of $(\cL[f^\epsilon] - \cL[f]) / \epsilon$ as $\epsilon \to 0$. For the difference, we have:
    \begin{align*}
      \cL[f^\epsilon] - \cL[f]
      &=
      \frac12\,\E_{\mbx,\mby}\Big[
        \big\|
          f(\mbx) + \epsilon h(\mbx) - T[f,\mby](\mbx)
        \big\|^2
        -
        \big\|
          f(\mbx) - T[f,\mby](\mbx)
        \big\|^2
      \Big].
      \\ &=
      \epsilon\,\E_{\mbx,\mby}\Big[
        \big(
          f(\mbx) - T[f,\mby](\mbx)
        \big)\cdot h(\mbx)
      \Big]
      +
      \frac{\epsilon^2}{2}\,\E_{\mbx}\big[\|h(\mbx)\|^2\big].
    \end{align*}
    By the square-integrability assumption, both expectations are finite:
    the first by Cauchy--Schwarz,
    \begin{align*}
      \E_{\mbx,\mby}\Big[
        \big|
          \big(
            f(\mbx) - T[f,\mby](\mbx)
          \big)\cdot h(\mbx)
        \big|
      \Big]
      \le
      \Big(
        \E_{\mbx,\mby}\big[
          \|f(\mbx) - T[f,\mby](\mbx)\|^2
        \big]
      \Big)^{1/2}
      \Big(
        \E_{\mbx}\big[\|h(\mbx)\|^2\big]
      \Big)^{1/2}
      < \infty,
    \end{align*}
    and the second since $h \in \mathcal{D} \subset H = L^2(\mu)$.
    Therefore we may divide by \(\epsilon\) and let \(\epsilon \to 0\), obtaining the first variation
    \begin{align}
      \delta \cL[f; h]
      &=
      \E_{\mbx,\mby}\Big[
        \big(
          f(\mbx) - T[f,\mby](\mbx)
        \big)\cdot h(\mbx)
      \Big].
    \end{align}
    We now apply the tower property
    \(\E_{\mbx,\mby}[\cdot] = \E_\mbx\big[\E_{\mby \mid \mbx}[\cdot]\big]\).
    Since \(f(\mbx)\) and \(h(\mbx)\) are deterministic given \(\mbx\),
    \begin{align*}
      \delta \cL[f; h]
      &=
      \E_\mbx\Big[
        h(\mbx)\cdot
        \E_{\mby \mid \mbx}\big[
          f(\mbx) - T[f,\mby](\mbx)
        \big]
      \Big] =
      \E_\mbx\Big[
        h(\mbx)\cdot
        \big(
          f(\mbx) - \E_{\mby \mid \mbx}[T[f,\mby](\mbx)]
        \big)
      \Big].
    \end{align*}
    Since \(T\) is affine in \(\mby\),
    \begin{align*}
      \E_{\mby \mid \mbx}[T[f,\mby](\mbx)]
      &=
      P[f](\mbx)
      + Q[f](\mbx)\,\E[\mby \mid \mbx]
      + R(\mbx)\,\E[\mby \mid \mbx] 
      =
      T[f,\, \E[\mby \mid \mbx]](\mbx).
    \end{align*}
    Hence
    \begin{align}
      \delta \cL[f; h]
      &=
      \E_\mbx\Big[
        h(\mbx)\cdot
        \big(
          f(\mbx) - T[f,\, \E[\mby \mid \mbx]](\mbx)
        \big)
      \Big].
    \end{align}
    Define $g(\mbx) := f(\mbx) - T[f,\, \E[\mby \mid \mbx]](\mbx) = \E\big[f(\mbx) - T[f,\mby](\mbx) \,\big|\, \mbx\big]$.
    Since conditional expectation is a contraction on $L^2$,
    $\|g\|_\mu \le \big(\E_{\mbx,\mby}\|f(\mbx) - T[f,\mby](\mbx)\|^2\big)^{1/2} = (2\cL[f])^{1/2} < \infty$,
    so $g \in H$ and the first variation is represented in $H$ as $\delta \cL[f; h] = \langle g, h\rangle_\mu$ for all $h \in \mathcal{D}$.
    Stationarity requires \(\langle g, h\rangle_\mu = 0\) for all \(h \in \mathcal{D}\), in particular for all $h \in C^1_c(S) \subseteq \mathcal{D}$.
    By the continuity hypothesis of the theorem, \(g\) is continuous on $\bar S$, and since \(\mu\) has full support on the open set $S$, the fundamental lemma of the calculus of variations (Lemma~\ref{lem:fundamental}) implies \(g = 0\) on $S$; by continuity, $g = 0$ on $\bar S \supseteq \operatorname{supp}\mu$, i.e., $f^*(\mbx) = T[f^*,\, \E[\mby \mid \mbx]](\mbx)$ for all $\mbx \in \operatorname{supp}\mu$.
    This gives one direction of the if and only if.
    The other direction is immediate: if \eqref{eq:principle_condition} holds, then \(g = 0\) $\mu$-a.e., so
    \(\delta \cL[f; h] = \langle g, h\rangle_\mu = 0\) for all \(h \in \mathcal{D}\).
\end{proof}

We now state the loss decomposition behind Corollary~\ref{cor:uniqueQ}.
\begin{tcolorbox}[boxrule=0pt, frame empty]
\begin{proposition}[Conditional-variance floor]
\label{prop:floor}
Assume the hypotheses of Theorem~\ref{thm:principle} hold for every $f \in \mathcal{M}$, let $\xi := \mby - \E[\mby \mid \mbx]$, so that $\E[\xi \mid \mbx] = 0$, and let $g[f](\mbx) := f(\mbx) - T[f,\,\E[\mby \mid \mbx]](\mbx)$.
Then $g[f] \in H$ and
\begin{align}
    \label{eq:floor-decomp}
    \cL[f] = \tfrac12\,\|g[f]\|_\mu^2 + V[f],
    \qquad
    V[f] := \tfrac12\,\E_{\mbx,\mby}\big\|\big(Q[f](\mbx) + R(\mbx)\big)\,\xi\big\|^2,
\end{align}
where $\|\phi\|_\mu^2 := \E_\mu\|\phi(\mbx)\|^2$.
Consequently:
\begin{enumerate}[leftmargin=*]
    \item[\textbf{(i)}] every stationary point $f^* \in \mathcal{M}$ satisfies $\cL[f^*] = V[f^*]$;
    \item[\textbf{(ii)}] if $Q[f] \equiv Q$ does not depend on $f$ (additive label noise, Definition~\ref{def:additive}), then $V[f] \equiv V$ is a constant and the stationary set is exactly $\{f \in \mathcal{M} :\ \cL[f] = V\}$. In particular, if $\cL$ attains the value $V$ at exactly one $f^* \in \mathcal{M}$, then $f^*$ is the unique stationary point in $\mathcal{M}$, and if $\inf_{f \in \mathcal{M}} \cL[f] > V$ then $\cL$ has no stationary point in $\mathcal{M}$.
\end{enumerate}
\end{proposition}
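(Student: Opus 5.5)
The plan is a bias--variance (Pythagorean) decomposition of the residual conditional on $\mbx$. Write the pointwise residual as $f(\mbx) - T[f,\mby](\mbx)$ and split $\mby = \E[\mby\mid\mbx] + \xi$. Because $T$ is affine in $\mby$ by \eqref{eq:affine-target},
\begin{align*}
  f(\mbx) - T[f,\mby](\mbx) = g[f](\mbx) - \big(Q[f](\mbx) + R(\mbx)\big)\,\xi .
\end{align*}
That $g[f] \in H$ is already shown in the proof of Theorem~\ref{thm:principle}: it is a conditional expectation of an $L^2$ residual, and $\cL[f]<\infty$. It follows that $(Q[f]+R)\xi$ is also square-integrable, being a difference of two $L^2$ quantities.

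Next I expand the squared norm. This gives $\|g[f]\|^2 - 2\,g[f]\cdot (Q[f]+R)\xi + \|(Q[f]+R)\xi\|^2$. I take expectations and use the tower property on the cross term. Given $\mbx$, both $g[f](\mbx)$ and $Q[f](\mbx)+R(\mbx)$ are deterministic, and $\E[\xi\mid\mbx]=0$, so the cross term vanishes. Halving then yields \eqref{eq:floor-decomp}.

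The only delicate step is justifying the tower property on the cross term. I would establish integrability via Cauchy--Schwarz, using that both factors are in $L^2$. Linearity in $\xi$ is then applied inside the conditional expectation. This requires $Q[f](\mbx)+R(\mbx)$ to be a measurable function of $\mbx$ alone, which holds by the hypothesis that $P,Q,R$ do not depend on $\mby$.

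For (i): by Theorem~\ref{thm:principle}, a stationary $f^*$ satisfies $g[f^*]=0$ on $\operatorname{supp}\mu$, hence $\mu$-a.e. Therefore $\|g[f^*]\|_\mu=0$ and $\cL[f^*]=V[f^*]$.

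For (ii): if $Q$ is independent of $f$, then $V$ is constant and $\cL[f]-V = \tfrac12\|g[f]\|_\mu^2 \ge 0$. Stationarity is equivalent to $g[f]=0$ on $\operatorname{supp}\mu$ by Theorem~\ref{thm:principle}. Because $g[f]$ is continuous on $\bar S \supseteq \operatorname{supp}\mu$, this is in turn equivalent to $\|g[f]\|_\mu=0$, i.e.\ $\cL[f]=V$. The direction from $\mu$-a.e.\ vanishing to vanishing everywhere on the support uses continuity: a continuous function that is nonzero at some point of the support is nonzero on a neighbourhood, and that neighbourhood has positive mass. The two final claims then follow immediately. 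If exactly one $f^*$ attains $V$, it is the unique stationary point. If $\inf\cL > V$, no $f$ attains $V$, so there is no stationary point.
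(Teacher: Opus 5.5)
Your proposal is correct and follows the paper's proof essentially step for step. It uses the same split $f - T[f,\mby] = g[f] - (Q[f]+R)\xi$, the same finiteness argument (contraction for $g[f]$, a difference of $L^2$ terms for $(Q[f]+R)\xi$, Cauchy--Schwarz for the cross term), the tower property to kill the cross term, and Theorem~\ref{thm:principle} plus continuity for (i) and (ii). The one cosmetic difference is in deducing $g[f]=0$ on $\operatorname{supp}\mu$ from $\|g[f]\|_\mu = 0$: you appeal directly to the definition of the support, whereas the paper uses full support on the open set $S$ and then extends by continuity to $\bar S$; both are valid.
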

\end{tcolorbox}

\begin{proof}[Proof of Proposition~\ref{prop:floor}]
With $\xi = \mby - \E[\mby \mid \mbx]$ and the affine form \eqref{eq:affine-target},
\begin{align*}
    f(\mbx) - T[f,\mby](\mbx)
    &= \big(f(\mbx) - T[f,\E[\mby \mid \mbx]](\mbx)\big) - \big(Q[f](\mbx) + R(\mbx)\big)\,\xi \\
    &= g[f](\mbx) - \big(Q[f](\mbx) + R(\mbx)\big)\,\xi .
\end{align*}
Expanding the square in \eqref{eq:principle_loss},
\begin{align*}
    \cL[f]
    = \tfrac12\,\E_\mbx\|g[f](\mbx)\|^2
    + \tfrac12\,\E_{\mbx,\mby}\big\|(Q[f](\mbx) + R(\mbx))\,\xi\big\|^2
    - \E_{\mbx,\mby}\Big[g[f](\mbx) \cdot \big(Q[f](\mbx) + R(\mbx)\big)\,\xi\Big],
\end{align*}
where all three expectations are finite: $g[f] \in H$ by the contraction argument in the proof of Theorem~\ref{thm:principle}, $(Q[f] + R)\xi = g[f] - (f - T[f,\mby])$ is a difference of two square-integrable quantities, and the cross term is then finite by Cauchy--Schwarz.
For the cross term, condition on $\mbx$: $g[f](\mbx)$, $Q[f](\mbx)$ and $R(\mbx)$ are deterministic given $\mbx$, and $\E[\xi \mid \mbx] = 0$, so by the tower property the cross term equals $\E_\mbx\big[g[f](\mbx) \cdot (Q[f](\mbx) + R(\mbx))\,\E[\xi \mid \mbx]\big] = 0$.
This proves \eqref{eq:floor-decomp}.

(i) By Theorem~\ref{thm:principle}, $f^* \in \mathcal{M}$ is stationary iff $g[f^*] = 0$ on $\operatorname{supp}\mu$, which gives $\|g[f^*]\|_\mu = 0$ and hence $\cL[f^*] = V[f^*]$.
Conversely, if $\cL[f] = V[f]$ then $\|g[f]\|_\mu = 0$; as $g[f]$ is continuous on $\bar S$ and $\mu$ has full support on the open set $S$, $g[f] = 0$ on $S$ and hence on $\bar S \supseteq \operatorname{supp}\mu$, so $f$ is stationary.
Thus the stationary set is $\{f \in \mathcal{M} : \cL[f] = V[f]\}$.

(ii) Under additive label noise, $V[f] = \tfrac12\,\E\|(Q(\mbx) + R(\mbx))\xi\|^2 =: V$ does not depend on $f$, so by (i) the stationary set is $\{f \in \mathcal{M} : \cL[f] = V\}$.
Since $\cL[f] \ge V$ for all $f \in \mathcal{M}$, this is the set of minimizers of $\cL$ over $\mathcal{M}$ that attain the value $V$.
If exactly one $f^* \in \mathcal{M}$ attains it, $f^*$ is the unique stationary point in $\mathcal{M}$; if no $f \in \mathcal{M}$ attains it (i.e.\ $\inf_{\mathcal{M}} \cL > V$), there is no stationary point in $\mathcal{M}$.
\end{proof}

\begin{proof}[Proof of Corollary~\ref{cor:uniqueQ}]
Under additive label noise, $V[f] \equiv V$ with $V$ as in the corollary, so \eqref{eq:floor-decomp} gives $\cL[f] = \tfrac12\|g[f]\|_\mu^2 + V \ge V$ for all $f \in \mathcal{M}$.
The remaining claims are Proposition~\ref{prop:floor}(ii).
\end{proof}

%%%%%%%%%%%%%%%%%%%%%%%%%%%%%%%%%%%%%%%%%%%%%%%%%%%%%%%%%%%%%%%%%%%%%%%%%%%%%%%%%
\section{Standing Regularity Conditions for Flow Maps}
%%%%%%%%%%%%%%%%%%%%%%%%%%%%%%%%%%%%%%%%%%%%%%%%%%%%%%%%%%%%%%%%%%%%%%%%%%%%%%%%%

\begin{remark}[Standing regularity conditions]
\label{rmk:regularity}
All results on flow maps assume the following conditions:
\begin{enumerate}
  \item \textbf{Model and perturbation classes.}
    $f \in \mathcal{M}$: $f$ is $C^1$ in all arguments $(t,u,x)$,
    with $f$ and its first partial derivatives
    $\partial_t f$, $\partial_u f$, $\partial_x f$ continuous up to the boundary of the closed time triangle $\{0 \le t \le u \le 1\} \times \bbR^d$, and $f \in H = L^2(\mu)$.
    Perturbations range over a class $\mathcal{D} \subset \mathcal{M}$ that contains $C^1_c(\Omega)$, the $C^1$ functions with compact support in $\Omega$, and whose diagonal restrictions $h(t,t,\cdot)$ lie in $L^2(\nu)$.
    Because $\Omega$ contains the diagonal, such supports may meet $\{t = u\}$; this matters because the flow matching term depends on $f$ only through $f(t,t,\cdot)$, so test functions supported in the open triangle $\{t < u\}$ alone cannot detect it (Lemma~\ref{lem:split}, Step 3).
  \item \textbf{Velocity regularity.}
    The marginal velocity $v(t,x) = \E[\dot \mbx_t \mid \mbx_t = x]$
    is $C^1$ in $(t,x)$ and Lipschitz in $x$ uniformly in $t$.
  \item \textbf{Integrability.}
    $\E[\|\dot \mbx_t\|^2] < \infty$,
    and all expectations appearing in the functionals are finite.
  \item \textbf{Sampling distributions.}
    Times satisfy $t \le u$.
    The training distribution $\mu$ of $(t,u,\mbx_t)$ has full support on the open set
    $\Omega^\circ := \{(t,u,x) : 0 < t < u < 1\} \times \bbR^d$,
    has finite second moments, and is supported in the closed time triangle,
    $\operatorname{supp}\mu \subseteq \bar\Omega := \{(t,u,x) : 0 \le t \le u \le 1\} \times \bbR^d$;
    it may place mass on the diagonal $\{t = u\}$.
    We write $\Omega := \{(t,u,x) : 0 < t \le u < 1\} \times \bbR^d$, which is relatively open in the half-space $\{t \le u\}$ and contains the diagonal.
    The flow-matching distribution $\nu$ of $(t,\mbx_t)$ has full support on
    $\Omega_\Delta := (0,1) \times \bbR^d$.
  \item \textbf{Admissibility of the target.}
    The true flow map lies in the model class: $f^* \in \mathcal{M}$, where $F^* = x + (u-t)f^*$.
\end{enumerate}
These are standard conditions for flow matching \cite{lipman2024flowmatchingguidecode}.
(1) holds for smooth neural architectures on compact domains under bounded weights;
(2) holds for standard interpolation processes
(e.g., $\mbx_t = (1-t)\mbx_0 + t\mbx_1$ with sub-Gaussian marginals);
(3) is needed for the squared functionals, such as the loss, to be finite.
(4) holds for the usual time samplers (e.g.\ $t,u$ uniform or logit-normal on the triangle, with an atom at $t=u$) because $\mbx_t$ has a positive density whenever $q_0$ does.
(5) holds for $\mathcal{M} = C^1 \cap L^2(\mu)$ under (2) and (4): $F^*$ is $C^1$ on the closed triangle, so $f^* = \int_0^1 v\big(t + r(u-t),\, F^*(t,\, t + r(u-t),\, x)\big)\,dr$ is $C^1$ up to the boundary, including the diagonal; and by Gr\"onwall (as in Lemma~\ref{lem:flowmap-lipschitz}, using $\|v(s,y)\| \le \|v(s,0)\| + L\|y\|$) there are constants $a,b$ depending only on $L$ and $\sup_s \|v(s,0)\|$ with $\|F^*(t,u,x) - x\| \le (u-t)\,(a\|x\| + b)$, so $f^* = (F^* - x)/(u-t)$ has at most linear growth in $x$ uniformly in $(t,u)$ and lies in $L^2(\mu)$ whenever $\mu$ has finite second moments.
If $\mathcal{M}$ is taken to be $C^\infty_{\mathrm{pol}} \cap L^2(\mu)$ (\S\ref{appsec:convergence}), (5) holds when $v \in C^\infty_{\mathrm{pol}}$, by smooth dependence of ODE solutions on initial data together with Gr\"onwall bounds on all derivatives.
% The key role of (1) is to provide integrable dominating functions for interchanging differentiation and expectation via the dominated convergence theorem.

These regularity conditions imply that the model and targets are continuous, and losses are finite, satisfying the conditions needed for Theorem~\ref{thm:principle}, and that the full-support hypothesis of Theorem~\ref{thm:principle} holds with $S = \Omega^\circ$ and $\operatorname{supp}\mu \subseteq \bar\Omega = \bar S$.

\end{remark}

%%%%%%%%%%%%%%%%%%%%%%%%%%%%%%%%%%%%%%%%%%%%%%%%%%%%%%%%%%%%%%%%%%%%%%%%%%%%%%%%%
\section{Uniqueness of stationary points for flow map objectives}
\label{app:corollaries}
%%%%%%%%%%%%%%%%%%%%%%%%%%%%%%%%%%%%%%%%%%%%%%%%%%%%%%%%%%%%%%%%%%%%%%%%%%%%%%%%%

Throughout this section we assume the standing conditions of Remark~\ref{rmk:regularity}, so times satisfy $t \le u$, $\mu$ has full support on $\Omega^\circ = \{0 < t < u < 1\} \times \bbR^d$ with $\operatorname{supp}\mu \subseteq \bar\Omega = \{0 \le t \le u \le 1\} \times \bbR^d$, and $\nu$ has full support on $\Omega_\Delta = (0,1) \times \bbR^d$; as in Remark~\ref{rmk:regularity}, $\Omega = \{0 < t \le u < 1\} \times \bbR^d$ denotes the relatively open part of the triangle that contains the diagonal.
We write $F(t,u,x) = x + (u-t) f(t,u,x)$, and use the derivative identities
\begin{align}
    \label{eq:F-derivs}
    \partial_t F = -f + (u-t)\,\partial_t f,
    \qquad
    \partial_u F = f + (u-t)\,\partial_u f,
    \qquad
    \partial_x F = I + (u-t)\,\partial_x f .
\end{align}
Because $\mathcal{M} \subset C^1$ with derivatives continuous up to the boundary of the closed triangle $\bar\Omega$ (Remark~\ref{rmk:regularity}(1)) and $v$ is $C^1$, every residual below is continuous on $\bar\Omega$; hence a residual that vanishes on $\Omega^\circ$, which is dense in $\bar\Omega$, vanishes on all of $\bar\Omega$.
We use this extension without further comment.
All stationarity statements are with respect to perturbations $h \in \mathcal{D} \supseteq C^1_c(\Omega)$, and ``unique'' means unique in $\mathcal{M}$.

\begin{tcolorbox}[boxrule=0pt, frame empty]
\begin{lemma}[Uniqueness for the transport equation]
    \label{lem:char}
    Suppose $v$ is continuous and Lipschitz in $x$ uniformly in $t \in [0,1]$, so that for every $(t,x)$ the \gls{ode} $\dot X_s = v(s,X_s)$, $X_t = x$, has a unique $C^1$ solution on $[0,1]$, and $F^*(t,s,x) := X_s$ is the true flow map.
    Let $F \in C^1(\bar\Omega;\bbR^d)$ satisfy
    \begin{align*}
        \partial_t F(t,u,x) + \partial_x F(t,u,x)\,v(t,x) = 0
        \quad\text{and}\quad
        F(u,u,x) = x
        \qquad \text{for all } (t,u,x) \in \bar\Omega .
    \end{align*}
    Then $F = F^*$ on $\bar\Omega$.
\end{lemma}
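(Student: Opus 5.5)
The plan is the method of characteristics: show that $F$ is constant along the ODE trajectories in its first time argument, then read off its value from the boundary condition. Fix $(t,u,x)\in\bar\Omega$, so $0\le t\le u\le 1$. Let $X_s := F^*(t,s,x)$ for $s\in[t,u]$ be the unique $C^1$ solution of $\dot X_s = v(s,X_s)$ with $X_t=x$. Define the scalar-in-time, vector-valued function
\begin{align*}
    \phi(s) := F(s,u,X_s), \qquad s\in[t,u].
\end{align*}
Since $s\le u$ throughout, $(s,u,X_s)\in\bar\Omega$. Because $F\in C^1(\bar\Omega)$ and $s\mapsto X_s$ is $C^1$, the chain rule applies. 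At the endpoint $s=u$ we are on the diagonal, where only one-sided derivatives exist; I would handle this with the continuity of the partials up to the boundary, which is assumed.

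The key computation is
\begin{align*}
    \phi'(s) = \partial_t F(s,u,X_s) + \partial_x F(s,u,X_s)\,\dot X_s
    = \partial_t F(s,u,X_s) + \partial_x F(s,u,X_s)\,v(s,X_s) = 0,
\end{align*}
which holds by the transport equation evaluated at the point $(s,u,X_s)\in\bar\Omega$. Hence $\phi$ is constant on $[t,u]$. Taking the two endpoints gives
\begin{align*}
    F(t,u,x)=\phi(t)=\phi(u)=F(u,u,X_u)=X_u=F^*(t,u,x),
\end{align*}
where the boundary condition $F(u,u,\cdot)=\mathrm{id}$ was used at the last step. If $t=u$, the claim is trivial from the boundary conditions for $F$ and $F^*$.

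The main (mild) obstacle is justifying the chain rule and the mean-value conclusion $\phi'\equiv0\Rightarrow\phi$ constant up to the boundary of the triangle. This matters at $s=u$, where $(s,u)$ lies on the diagonal and $F$ is only defined for $s\le u$. I would argue that $\phi$ is continuous on $[t,u]$ and differentiable with zero derivative on $[t,u)$. For $s<u$ the point lies in the relatively open region and the chain rule is standard; for $s=t=0$, use one-sided derivatives with continuous partials. Constancy on $[t,u)$ then follows from the mean value theorem, and continuity extends it to $s=u$. The existence and uniqueness of $X_s$ on all of $[0,1]$ (no blow-up) comes from the uniform Lipschitz hypothesis on $v$ via Picard–Lindelöf, as stated in the lemma.
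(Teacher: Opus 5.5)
Your proposal is correct and matches the paper's proof: you define $\varphi(s)=F(s,u,X_s)$ along the characteristic, use the chain rule, the ODE and the transport equation to get $\varphi'\equiv 0$, and evaluate at $s=t$ and $s=u$ using $F(u,u,\cdot)=\mathrm{id}$. Your extra handling of one-sided derivatives at the diagonal endpoint is a harmless refinement that the paper leaves implicit.
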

\end{tcolorbox}
\begin{proof}
Fix $(t,u,x) \in \bar\Omega$ and let $X_s$ be the solution of the \gls{ode} with $X_t = x$.
Define $\varphi(s) := F(s,u,X_s)$ for $s \in [t,u]$; note $(s,u,X_s) \in \bar\Omega$ for such $s$.
By the chain rule and the \gls{ode},
\begin{align*}
    \varphi'(s)
    = \partial_1 F(s,u,X_s) + \partial_3 F(s,u,X_s)\,\dot X_s
    = \partial_1 F(s,u,X_s) + \partial_3 F(s,u,X_s)\,v(s,X_s)
    = 0,
\end{align*}
where $\partial_1, \partial_3$ denote the partial derivatives with respect to the first and third arguments.
Hence $\varphi$ is constant on $[t,u]$, and
\begin{align*}
    F(t,u,x) = \varphi(t) = \varphi(u) = F(u,u,X_u) = X_u = F^*(t,u,x).
\end{align*}
\end{proof}

The flow matching term is $\cL_{\mathrm{FM}}[f] := \tfrac12\,\E_{(t,\mbx_t) \sim \nu}\|f(t,t,\mbx_t) - \dot \mbx_t\|^2$, paired in $H_\Delta := L^2(\nu)$ with inner product $\langle \cdot,\cdot \rangle_\nu$; the diagonal restriction $h \mapsto h(t,t,\cdot)$ is applied pointwise to $h \in \mathcal{D} \subset C^1$, not as an operator on $H$.
Stationarity of a sum does not follow from stationarity of each term, since the two first variations could be nonzero and cancel.
The following lemma rules this out.
\begin{tcolorbox}[boxrule=0pt, frame empty]
\begin{lemma}[Splitting / non-cancellation]
    \label{lem:split}
    Let $\cL_{\mathrm{SD}}[f] := \tfrac12\,\E_{(t,u,\mbx_t) \sim \mu}\|f(t,u,\mbx_t) - \sgb{T[f](t,u,\mbx_t)}\|^2$ be of the form \eqref{eq:principle_loss} with $\mu$ of full support on $\Omega^\circ$ and $\operatorname{supp}\mu \subseteq \bar\Omega$, and let $g_{\mathrm{SD}}[f](t,u,x) := f(t,u,x) - T[f](t,u,x)$ be its semi-gradient, assumed continuous on $\bar\Omega$.
    Let $\cL_{\mathrm{FM}}$ be as above, and assume $f \in \mathcal{M}$ with $\cL_{\mathrm{SD}}[f] + \cL_{\mathrm{FM}}[f] < \infty$.
    Then $f$ is a stationary point of $\cL_{\mathrm{SD}} + \cL_{\mathrm{FM}}$ (with perturbations $h \in \mathcal{D} \supseteq C^1_c(\Omega)$) if and only if
    \begin{align*}
        g_{\mathrm{SD}}[f] = 0 \ \text{on } \bar\Omega
        \qquad \text{and} \qquad
        f(t,t,x) = v(t,x) \ \text{on } \Omega_\Delta .
    \end{align*}
\end{lemma}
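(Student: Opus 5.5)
The plan is to compute the first variation of the sum, then localize with two families of test functions: first ones supported away from the diagonal, then ones that reach it.

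\emph{First variation.} By the proof of Theorem~\ref{thm:principle},
\[
\delta\cL_{\mathrm{SD}}[f;h] = \langle g_{\mathrm{SD}}[f], h\rangle_\mu .
\]
For the flow matching term, a direct expansion of the square, as in that proof, gives
\[
\delta\cL_{\mathrm{FM}}[f;h] = \langle g_{\mathrm{FM}}[f], h_\Delta\rangle_\nu ,
\]
where $g_{\mathrm{FM}}[f](t,x) = f(t,t,x) - v(t,x)$ is obtained by conditioning on $\mbx_t$ with the tower property. Finiteness of both terms follows from Cauchy--Schwarz, since $h_\Delta\in L^2(\nu)$ by Remark~\ref{rmk:regularity}(1). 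The total first variation is the sum of these two pairings. The ``if'' direction is then immediate: $g_{\mathrm{SD}}=0$ on $\bar\Omega\supseteq\operatorname{supp}\mu$ and $g_{\mathrm{FM}}=0$ on $\Omega_\Delta$ (which carries $\nu$ up to boundary times of measure zero, or by continuity) make both pairings vanish for every $h$.

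\emph{Step 1 (off-diagonal localization).} For the ``only if'' direction, take $h\in C^1_c(\Omega^\circ)$. Such an $h$ has compact support strictly inside $\{t<u\}$, so $h_\Delta\equiv 0$ and the FM pairing drops out. Stationarity then gives $\langle g_{\mathrm{SD}}, h\rangle_\mu=0$ for all such $h$. Lemma~\ref{lem:fundamental}, with $S=\Omega^\circ$ and $g_{\mathrm{SD}}$ continuous, yields $g_{\mathrm{SD}}=0$ on $\Omega^\circ$, and by continuity on $\bar\Omega$.

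\emph{Step 2 (diagonal).} Now take general $h\in C^1_c(\Omega)$, whose support may meet $\{t=u\}$. By Step 1 the SD pairing vanishes identically, since $g_{\mathrm{SD}}=0$ on $\operatorname{supp}\mu$, including any atom of $\mu$ on the diagonal. So stationarity reduces to $\langle g_{\mathrm{FM}}, h_\Delta\rangle_\nu=0$. Given an arbitrary $\varphi\in C^1_c(\Omega_\Delta)$, I will construct $h(t,u,x):=\varphi(t,x)\,\chi(u-t)$, where $\chi$ is a smooth cutoff with $\chi(0)=1$ supported in $[-\eta,\eta]$, and $\eta$ is small enough that the support stays inside $u<1$. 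This $h$ is $C^1$ with compact support in $\Omega$ and has $h_\Delta=\varphi$. Lemma~\ref{lem:fundamental} applied to $\nu$ on $\Omega_\Delta$, with $g_{\mathrm{FM}}$ continuous because $f\in C^1$ and $v$ is $C^1$, then gives $f(t,t,\cdot)=v(t,\cdot)$ on $\Omega_\Delta$.

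The main obstacle is this diagonal step. The difficulty is ensuring that the extension $h$ is admissible, i.e.\ compactly supported in the relatively open set $\Omega$ and lying in $\mathcal{D}$. It also requires that the SD term contributes nothing even when $\mu$ has an atom on the diagonal. That is exactly why Step 1 must first be extended to all of $\bar\Omega$ by continuity.
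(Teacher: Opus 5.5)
Your proposal is correct and matches the paper's proof essentially step for step. It uses the same decomposition of the first variation into $\langle g_{\mathrm{SD}}[f],h\rangle_\mu + \langle g_{\mathrm{FM}}[f],h_\Delta\rangle_\nu$, the same off-diagonal localization with $h\in C^1_c(\Omega^\circ)$ followed by extending $g_{\mathrm{SD}}=0$ to $\bar\Omega$ by continuity (so a diagonal atom of $\mu$ cannot cause cancellation), and the same product test function $\varphi(t,x)\,\chi(u-t)$ giving $h_\Delta=\varphi$, from which Lemma~\ref{lem:fundamental} yields $f(t,t,\cdot)=v(t,\cdot)$.
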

\end{tcolorbox}

\begin{proof}[Proof of Lemma~\ref{lem:split}]
Let $h \in \mathcal{D}$ and write $h_\Delta(t,x) := h(t,t,x)$ for its diagonal restriction, a pointwise operation on the $C^1$ function $h$; if $h \in C^1_c(\Omega)$ then $h_\Delta \in C^1_c(\Omega_\Delta)$.
The \textsc{fm} term depends on $f$ only through $f(t,t,\cdot)$, so by the tower property (replacing $\dot \mbx_t$ by $v(t,\mbx_t) = \E[\dot \mbx_t \mid \mbx_t]$ exactly as in the proof of Theorem~\ref{thm:principle}) its first variation is $\langle g_{\mathrm{FM}}[f], h_\Delta\rangle_\nu$ with $g_{\mathrm{FM}}[f](t,x) := f(t,t,x) - v(t,x)$; here $g_{\mathrm{FM}}[f] \in H_\Delta = L^2(\nu)$ by the same contraction argument as in the proof of Theorem~\ref{thm:principle}, since $\cL_{\mathrm{FM}}[f] < \infty$, and $h_\Delta \in H_\Delta$ since $\cL_{\mathrm{FM}}[f + \epsilon h] < \infty$ for small $\epsilon$.
Together with Theorem~\ref{thm:principle} for the \textsc{sd} term (whose first variation is represented in $H = L^2(\mu)$),
\begin{align}
    \label{eq:split-variation}
    \delta(\cL_{\mathrm{SD}} + \cL_{\mathrm{FM}})[f;h]
    = \langle g_{\mathrm{SD}}[f], h\rangle_\mu + \langle g_{\mathrm{FM}}[f], h_\Delta\rangle_\nu .
\end{align}
($\Leftarrow$) If $g_{\mathrm{SD}}[f] = 0$ on $\bar\Omega$ and $g_{\mathrm{FM}}[f] = 0$ on $\Omega_\Delta$, then by continuity $g_{\mathrm{FM}}[f] = 0$ on $\bar\Omega_\Delta = [0,1] \times \bbR^d \supseteq \operatorname{supp}\nu$, and $\bar\Omega \supseteq \operatorname{supp}\mu$, so both inner products in \eqref{eq:split-variation} vanish for every $h \in \mathcal{D}$.

($\Rightarrow$) Suppose \eqref{eq:split-variation} is zero for all $h \in \mathcal{D}$; in particular for all $h \in C^1_c(\Omega) \subseteq \mathcal{D}$, which is all that the three steps below use.

\emph{Step 1 (off-diagonal test functions).}
For $h \in C^1_c(\Omega^\circ)$ we have $h_\Delta \equiv 0$, so \eqref{eq:split-variation} reduces to $\langle g_{\mathrm{SD}}[f], h\rangle_\mu = 0$.
Since $g_{\mathrm{SD}}[f]$ is continuous and $\mu$ has full support on the open set $\Omega^\circ$, Lemma~\ref{lem:fundamental} gives $g_{\mathrm{SD}}[f] = 0$ on $\Omega^\circ$.

\emph{Step 2 (extension to the closed triangle).}
$g_{\mathrm{SD}}[f]$ is continuous on $\bar\Omega$ and vanishes on the dense subset $\Omega^\circ$, so $g_{\mathrm{SD}}[f] = 0$ on $\bar\Omega \supseteq \operatorname{supp}\mu$, diagonal included.
Hence $\langle g_{\mathrm{SD}}[f], h\rangle_\mu = 0$ for \emph{every} $h \in \mathcal{D}$, regardless of whether $\mu$ places mass on the diagonal.
The two inner products in \eqref{eq:split-variation} therefore cannot cancel: the first is identically zero.

\emph{Step 3 (near-diagonal test functions).}
It remains to show $g_{\mathrm{FM}}[f] = 0$ on $\Omega_\Delta$.
Fix $\psi \in C^1([0,\infty))$ with $\psi(0) = 1$ and $\psi \equiv 0$ on $[\varepsilon,\infty)$.
For $\phi \in C^1_c(\Omega_\Delta)$ define $h(t,u,x) := \phi(t,x)\,\psi(u-t)$ on $\bar\Omega$.
Choosing $\varepsilon$ smaller than the distance in $t$ from the compact set $\operatorname{supp}\phi \subset (0,1) \times \bbR^d$ to $\{1\} \times \bbR^d$ ensures $h \in C^1_c(\Omega)$ (it is supported where $(t,x) \in \operatorname{supp}\phi$ and $t \le u < t + \varepsilon < 1$), and $h_\Delta(t,x) = \phi(t,x)\psi(0) = \phi(t,x)$.
By Step 2, \eqref{eq:split-variation} reduces to $\langle g_{\mathrm{FM}}[f], \phi\rangle_\nu = 0$ for all $\phi \in C^1_c(\Omega_\Delta)$.
Since $g_{\mathrm{FM}}[f]$ is continuous ($f \in C^1$, $v$ continuous) and $\nu$ has full support on the open set $\Omega_\Delta$, Lemma~\ref{lem:fundamental} gives $g_{\mathrm{FM}}[f] = 0$ on $\Omega_\Delta$, i.e.\ $f(t,t,x) = v(t,x)$.
\end{proof}

\begin{proof}[Proof of Corollary~\ref{cor:meanflow} (MeanFlow)]
\emph{Characterization.}
$\cL_{\mathrm{MF}}$ is of the form \eqref{eq:principle_loss} with $\mbx = (t,u,\mbx_t)$, $\mby = \dot \mbx_t$, $P[f] = (u-t)\partial_t f$, $Q[f] = (u-t)\partial_x f$, $R = I$, so $\E[\mby \mid \mbx] = v(t,\mbx_t)$ and $T[f,\E[\mby \mid \mbx]] = v + (u-t)(\partial_t f + \partial_x f\,v)$, which is continuous on $\bar\Omega$ by Remark~\ref{rmk:regularity}.
Theorem~\ref{thm:principle} (with $S = \Omega^\circ$, $\operatorname{supp}\mu \subseteq \bar\Omega = \bar S$) gives: $f$ is stationary iff $f = v + (u-t)(\partial_t f + \partial_x f\,v)$ on $\operatorname{supp}\mu$, hence on $\Omega^\circ$ and by continuity on $\bar\Omega$.
(On the diagonal the equation reads $f(t,t,\cdot) = v(t,\cdot)$, since the $(u-t)$ prefactor vanishes; when $\mu$ places mass on $\{t = u\}$, as in our experiments, Theorem~\ref{thm:principle} enforces this directly rather than by continuity.)
Using \eqref{eq:F-derivs}, this equation is equivalent to $\partial_t F + (\partial_x F)\,v = 0$ (cf.\ \eqref{eq:mf-equals-minus-Ev}).

\emph{Uniqueness.}
Let $f \in \mathcal{M}$ be stationary.
The transport equation holds on $\bar\Omega$, and the boundary condition $F(u,u,x) = x$ holds by the parameterization \eqref{eq:f_param}.
Lemma~\ref{lem:char} gives $F = F^*$, i.e.\ $f = f^*$ where $F^* = x + (u-t)f^*$.

\emph{Existence.}
$f^* \in \mathcal{M}$ by Remark~\ref{rmk:regularity}(5), and $F^*$ satisfies \eqref{eq:transport_intro} (\S\ref{sec:flow_maps}), so $g_{\mathrm{MF}}[f^*] = 0$ on $\bar\Omega$ and $f^*$ is stationary by Theorem~\ref{thm:principle}.
Hence $f^*$ is the unique stationary point in $\mathcal{M}$.
\end{proof}

\begin{proof}[Proof of Corollary~\ref{cor:imf} (Improved MeanFlow)]
\emph{Characterization.}
$\cL_{\mathrm{iMF}}$ is of the form \eqref{eq:principle_loss} with $\mbx = (t,u,\mbx_t)$, $\mby = \dot \mbx_t$, $P[f] = (u-t)\big(\partial_t f + (\partial_x f)\,f(t,t,\cdot)\big)$, $Q = 0$, $R = I$, so $\E[\mby \mid \mbx] = v(t,\mbx_t)$ and $T[f,\E[\mby \mid \mbx]] = v + (u-t)\big(\partial_t f + (\partial_x f)\,f(t,t,\cdot)\big)$, which is continuous on $\bar\Omega$ by Remark~\ref{rmk:regularity}.
Theorem~\ref{thm:principle} (with $S = \Omega^\circ$, $\operatorname{supp}\mu \subseteq \bar\Omega = \bar S$) gives: $f$ is stationary iff $f = v + (u-t)(\partial_t f + (\partial_x f)\,f(t,t,\cdot))$ on $\operatorname{supp}\mu$, hence on $\Omega^\circ$ and by continuity on $\bar\Omega$.

\emph{Uniqueness.}
Let $f \in \mathcal{M}$ be stationary.
Setting $u = t$ gives $f(t,t,x) = v(t,x)$ for all $(t,x) \in [0,1] \times \bbR^d$.
Substituting this into the equation gives $f = v + (u-t)(\partial_t f + (\partial_x f)\,v)$ on $\bar\Omega$, which by \eqref{eq:F-derivs} is $\partial_t F + (\partial_x F)\,v = 0$, with $F(u,u,x) = x$ by \eqref{eq:f_param}.
Lemma~\ref{lem:char} gives $F = F^*$.

\emph{Existence.}
$f^* \in \mathcal{M}$ by Remark~\ref{rmk:regularity}(5); $f^*(t,t,x) = \partial_u F^*(t,t,x) = v(t,x)$; and $F^*$ satisfies \eqref{eq:transport_intro}.
Hence $g_{\mathrm{iMF}}[f^*] = g_{\mathrm{MF}}[f^*] = 0$ on $\bar\Omega$ and $f^*$ is stationary by Theorem~\ref{thm:principle}.
Hence $f^*$ is the unique stationary point of $\cL_{\mathrm{iMF}}$ in $\mathcal{M}$.
\end{proof}

\begin{proof}[Proof of Corollary~\ref{cor:lsd_slim} (slim \gls{lsd})]
\emph{Characterization.}
$\cL_{\mathrm{LSD\text{-}\slim}}$ is of the form \eqref{eq:principle_loss} with $\mby \equiv 0$ and $T[f](t,u,x) = -(u-t)\,\partial_u f(t,u,x) + f\big(u,u,F(t,u,x)\big)$, continuous on $\bar\Omega$ by Remark~\ref{rmk:regularity}.
Theorem~\ref{thm:principle} gives: $f$ is stationary for the slim-\gls{lsd} term iff $f + (u-t)\partial_u f = f(u,u,F)$ on $\operatorname{supp}\mu$, hence on $\bar\Omega$, i.e.\ by \eqref{eq:F-derivs} $\partial_u F(t,u,x) = f\big(u,u,F(t,u,x)\big)$.

\emph{Combined objective.}
Lemma~\ref{lem:split} applies: $f$ is stationary for $\cL_{\mathrm{LSD\text{-}\slim}} + \cL_{\mathrm{FM}}$ iff $f(u,u,\cdot) = v(u,\cdot)$ on $\Omega_\Delta$ and $\partial_u F(t,u,x) = f(u,u,F(t,u,x))$ on $\Omega$.
Substituting the first into the second and extending by continuity,
\begin{align*}
    \partial_u F(t,u,x) = v\big(u,F(t,u,x)\big) \ \text{ for all } (t,u,x) \in \bar\Omega,
    \qquad F(t,t,x) = x .
\end{align*}

\emph{Uniqueness.}
Let $f \in \mathcal{M}$ be stationary for the combined objective.
For fixed $(t,x)$, $u \mapsto F(t,u,x)$ is a $C^1$ solution on $[t,1]$ of the \gls{ode} $\dot X_u = v(u,X_u)$ with $X_t = x$.
Since $v$ is Lipschitz in $x$ uniformly in $u$, the solution is unique (Picard--Lindel\"of), so $F(t,u,x) = F^*(t,u,x)$.

\emph{Existence.}
$f^* \in \mathcal{M}$ by Remark~\ref{rmk:regularity}(5).
Since $F^*(t,u,x) = x + (u-t)f^*(t,u,x)$, we have $f^*(t,t,x) = \partial_u F^*(t,t,x) = v(t,x)$, so $g_{\mathrm{FM}}[f^*] = 0$; and $\partial_u F^* - f^*(u,u,F^*) = \partial_u F^* - v(u,F^*) = 0$ by \eqref{eq:lagrangian_identity}, so $g_{\mathrm{sLSD}}[f^*] = 0$.
By Lemma~\ref{lem:split}, $f^*$ is stationary.
Hence the true flow map is the unique stationary point of the combined objective in $\mathcal{M}$.

\emph{Global minimizer.}
Both terms have additive label noise (Definition~\ref{def:additive}): the slim term has $\mby \equiv 0$ and floor $0$, and the flow matching term has $T[f,\mby] = \mby$ and floor $V_{\mathrm{FM}} := \tfrac12\,\E\|\dot \mbx_t - v(t,\mbx_t)\|^2$, independent of $f$.
By Proposition~\ref{prop:floor} applied to each term,
$\cL_{\mathrm{LSD\text{-}\slim}}[f] + \cL_{\mathrm{FM}}[f] = \tfrac12\|g_{\mathrm{sLSD}}[f]\|_\mu^2 + \tfrac12\|g_{\mathrm{FM}}[f]\|_\nu^2 + V_{\mathrm{FM}}$.
The combined loss therefore attains the value $V_{\mathrm{FM}}$ exactly when both residuals vanish, which by the above happens only at $f^*$.
So the true flow map is also the unique global minimizer of the combined objective in $\mathcal{M}$, and the combined loss minus $V_{\mathrm{FM}}$ is the stopgrad-free \gls{lsd} objective with the marginal velocity $v$ in place of $\dot \mbx_t$ \citep{boffi2025build}.
This does not replace Lemma~\ref{lem:split}: identifying the minimizers with the stationary points still requires that the two first variations cannot cancel.
\end{proof}

\begin{proof}[Proof of Corollary~\ref{cor:esd_slim} (slim \gls{esd})]
\emph{Characterization.}
$\cL_{\mathrm{ESD\text{-}\slim}}$ is of the form \eqref{eq:principle_loss} with $\mby \equiv 0$ and $T[f](t,u,x) = (u-t)\,\partial_t f(t,u,x) + (\partial_x F)(t,u,x)\,f(t,t,x)$, continuous on $\bar\Omega$ by Remark~\ref{rmk:regularity}.
Theorem~\ref{thm:principle} gives: $f$ is stationary for the slim-\gls{esd} term iff $f = (u-t)\partial_t f + (\partial_x F)\,f(t,t,\cdot)$ on $\bar\Omega$, i.e.\ by \eqref{eq:F-derivs} $\partial_t F + (\partial_x F)\,f(t,t,x) = 0$.

\emph{Combined objective.}
Lemma~\ref{lem:split} applies: $f$ is stationary for $\cL_{\mathrm{ESD\text{-}\slim}} + \cL_{\mathrm{FM}}$ iff $f(t,t,\cdot) = v(t,\cdot)$ on $\Omega_\Delta$ and $\partial_t F + (\partial_x F)f(t,t,\cdot) = 0$ on $\Omega$.
Substituting and extending by continuity, $\partial_t F + (\partial_x F)\,v = 0$ on $\bar\Omega$ with $F(u,u,x) = x$.

\emph{Uniqueness.}
Let $f \in \mathcal{M}$ be stationary for the combined objective.
Lemma~\ref{lem:char} gives $F = F^*$.

\emph{Existence.}
$f^* \in \mathcal{M}$ by Remark~\ref{rmk:regularity}(5).
As in the proof of Corollary~\ref{cor:lsd_slim}, $f^*(t,t,\cdot) = v(t,\cdot)$, and $F^*$ satisfies \eqref{eq:transport_intro}, so both residuals vanish at $f^*$ and $f^*$ is stationary by Lemma~\ref{lem:split}.
Hence the true flow map is the unique stationary point of the combined objective in $\mathcal{M}$.

\emph{Global minimizer.}
Exactly as for slim \gls{lsd}, both terms have additive label noise, so with $g_{\mathrm{sESD}}[f] := f - T[f]$ the slim \gls{esd} residual,
$\cL_{\mathrm{ESD\text{-}\slim}}[f] + \cL_{\mathrm{FM}}[f] = \tfrac12\|g_{\mathrm{sESD}}[f]\|_\mu^2 + \tfrac12\|g_{\mathrm{FM}}[f]\|_\nu^2 + V_{\mathrm{FM}}$ with $V_{\mathrm{FM}}$ independent of $f$.
The combined loss attains $V_{\mathrm{FM}}$ exactly when both residuals vanish, which happens only at $f^*$.
So the true flow map is also the unique global minimizer of the combined objective in $\mathcal{M}$, and the combined loss minus $V_{\mathrm{FM}}$ is the stopgrad-free \gls{esd} objective with $v$ in place of $\dot \mbx_t$ \citep{boffi2025build}.
\end{proof}

%%%%%%%%%%%%%%%%%%%%%%%%%%%%%%%%%%%%%%%%%%%%%%%%%%%%%%%%%%%%%%%%%%%%%%%%%%%%%%%%%
\section{Unifying stopgrad objectives under the stopgrad regression principle}
\label{appsec:unifying_stopgrad_objectives}
%%%%%%%%%%%%%%%%%%%%%%%%%%%%%%%%%%%%%%%%%%%%%%%%%%%%%%%%%%%%%%%%%%%%%%%%%%%%%%%%%

In this section, we highlight the generality of the stopgrad regression principle by showing that it applies to stopgrad objectives not just for flow map training, but also for reinforcement learning and diffusion sampler training.
Theorem~\ref{thm:principle} works with continuous, smooth functions and distribution $\mu$ on inputs with full support.
As such, we consider settings that match the presentation of Theorem~\ref{thm:principle}, such as continuous states, actions, and rewards in reinforcement learning. 

%%%%%%%%%%%%%%%%%%%%%%%%%%%%%%%%%%%%%%%%%%%%%%%%%%%%%%%%%%%%%%%%%%%%%%%%%%%%%%%%%
\paragraph{Measure-valued targets}
%%%%%%%%%%%%%%%%%%%%%%%%%%%%%%%%%%%%%%%%%%%%%%%%%%%%%%%%%%%%%%%%%%%%%%%%%%%%%%%%%
In the reinforcement learning examples below, we use the random Dirac measure $\delta_{S_{t+1}}$ on the random next state $S_{t+1}$ in the target $\mby$.
The proof of Theorem~\ref{thm:principle} extends to such measure-valued targets because it only uses linearity of conditional expectations and affine dependence of $T$ on $\mby$.
In particular, the conditional expectation of a random Dirac measure is the transition kernel: $\E[\delta_{S_{t+1}} \mid S_t = s ] = P(\cdot\mid s)$.

%%%%%%%%%%%%%%%%%%%%%%%%%%%%%%%%%%%%%%%%%%%%%%%%%%%%%%%%%%%%%%%%%%%%%%%%%%%%%%%%%
\subsection{Temporal difference learning: TD(0)}
%%%%%%%%%%%%%%%%%%%%%%%%%%%%%%%%%%%%%%%%%%%%%%%%%%%%%%%%%%%%%%%%%%%%%%%%%%%%%%%%%

We consider a continuous-state Markov reward process on an open set
\(\mathcal S \subset \mathbb R^d\).
Let \(P(ds'\mid s)\) be the transition kernel and let $\bar r(s):=\E[R_{t+1}\mid S_t=s]$ be the conditional mean reward.
We assume:
\begin{enumerate}
    \item \(V \in C^1(\mathcal S)\).
    \item The training state distribution \(\mu\) has full support on \(\mathcal S\), and \(\operatorname{supp}\mu \subseteq \bar{\mathcal S}\).
    \item \(\bar r\) is continuous.
    \item \(P\) is Feller: for every bounded continuous \(\phi\), the map $s \mapsto \int \phi(s')\,P(ds'\mid s)$ is continuous.
    \item The residual in the objective below is square-integrable.
\end{enumerate}

Consider the semi-gradient TD(0) functional \citep{sutton2018reinforcement}
\begin{align}
\cL_{\mathrm{TD}}[V]
:=
\frac12\,\E\Big[
\big(
V(S_t)-\sgb{R_{t+1}+\gamma V(S_{t+1})}
\big)^2
\Big]. \notag
\end{align}
To place this in the form of the stopgrad regression principle, let $\mbx = S_t$, $\mby = (R_{t+1},\delta_{S_{t+1}})$, and define
\begin{align}
T[V,\mby](s)
=
r+\gamma\int_{\mathcal S} V(s')\,\delta_{S_{t+1}}(ds'). \notag
\end{align}
This is affine in \(\mby=(r,\delta_{S_{t+1}})\), with
\begin{align}
P[V](s)=0,
\qquad
Q[V](s)\,(r,\delta_{S_{t+1}})=\gamma\int_{\mathcal S} V(s')\,\delta_{S_{t+1}}(ds'),
\qquad
R(s)\,(r,\delta_{S_{t+1}})=r. \notag
\end{align}
Moreover, $\E[\mby\mid \mbx=s] = \big(\bar r(s),\,P(\cdot\mid s)\big)$, so $T\!\big[V,\E[\mby\mid \mbx=s]\big](s) = \bar r(s)+\gamma\int_{\mathcal S} V(s')\,P(ds'\mid s)$.
By continuity of \(\bar r\), continuity of \(V\), and the Feller property, the map $s\mapsto T\!\big[V,\E[\mby\mid \mbx=s]\big](s)$ is continuous. Thus the assumptions of Theorem~\ref{thm:principle} are satisfied.

\begin{tcolorbox}[boxrule=0pt, frame empty]
\begin{corollary}[Semi-gradient TD(0)]
\label{cor:td0}
Under the assumptions above, \(V^*\) is a stationary point of \(\cL_{\mathrm{TD}}\) if and only if
\begin{align}
V^*(s)
=
\bar r(s)
+
\gamma\int_{\mathcal S} V^*(s')\,P(ds'\mid s)
\qquad \text{for all } s\in\mathcal S. \notag
\end{align}
That is, the stationary points are exactly the solutions of the Bellman evaluation equation.
\end{corollary}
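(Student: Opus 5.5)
This corollary is a direct application of Theorem~\ref{thm:principle}, with the identification $\mbx = S_t$, $\mby = (R_{t+1},\delta_{S_{t+1}})$, $P[V]\equiv 0$, $Q[V]\mby = \gamma\int V\,d\delta_{S_{t+1}}$, $R\,\mby = r$, and $S = \mathcal S$. The plan is to check the theorem's hypotheses, compute the conditional target, and read off the stationarity condition on $\operatorname{supp}\mu$, then restrict or extend it to $\mathcal S$.

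First, I verify the hypotheses. $\mu$ has full support on the open set $\mathcal S$ with $\operatorname{supp}\mu\subseteq\bar{\mathcal S}$ by assumption~2, and the loss is finite by assumption~5. For the conditional target, linearity of conditional expectation gives $\E[\mby\mid\mbx=s]=(\bar r(s),P(\cdot\mid s))$. Here I use the remark on measure-valued targets: for the affine map $\mby\mapsto T[V,\mby](s)$, the step $\E_{\mby\mid\mbx}[T[V,\mby]] = T[V,\E[\mby\mid\mbx]]$ amounts to $\E[V(S_{t+1})\mid S_t=s]=\int V\,dP(\cdot\mid s)$. This is the only place the proof of Theorem~\ref{thm:principle} touches the form of $\mby$, and it holds whenever $V(S_{t+1})$ is integrable, which follows from square-integrability of the residual. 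Continuity of $s\mapsto \bar r(s)+\gamma\int V\,dP(\cdot\mid s)$ on $\bar{\mathcal S}$ comes from assumptions~3 and~4 together with continuity of $V$.

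Second, I apply Theorem~\ref{thm:principle}. It says $V^*$ is stationary if and only if $V^*(s)=\bar r(s)+\gamma\int V^*\,dP(\cdot\mid s)$ on $\operatorname{supp}\mu$. Since $\mu$ has full support on the open set $\mathcal S$, every point of $\mathcal S$ lies in $\operatorname{supp}\mu$. Thus the condition on $\operatorname{supp}\mu$ implies the condition on all of $\mathcal S$.

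For the converse I argue in two steps. If the Bellman equation holds on $\mathcal S$, both sides are continuous, so by density it holds on $\bar{\mathcal S}\supseteq\operatorname{supp}\mu$. The semi-gradient $g$ therefore vanishes $\mu$-a.e., and the "if" direction of the theorem applies.

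The main obstacle is the continuity hypothesis. The Feller property is stated only for bounded continuous test functions, while $V\in C^1$ may be unbounded. My first attempt is to restrict to bounded $V$, or to note that the paper's standing assumption~5 implicitly requires the map to be well-defined and continuous, as asserted in the paragraph preceding the corollary. If that is not enough, I would add a uniform-integrability condition on $V(S_{t+1})$ given $S_t$ in a neighbourhood of each $s$, and truncate $V$ to pass the Feller property to unbounded $V$. A secondary subtlety is continuity up to $\bar{\mathcal S}$ rather than only on $\mathcal S$. Inspection of the proof of Theorem~\ref{thm:principle} shows it only needs continuity on $\mathcal S$ to invoke Lemma~\ref{lem:fundamental}, plus an a.e. statement on the boundary. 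So I would state the equivalence on $\mathcal S$, exactly as in the corollary.
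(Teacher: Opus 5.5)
Your route is the paper's: instantiate Theorem~\ref{thm:principle} with $P\equiv 0$, $Q[V]\mby=\gamma\int V\,d\delta_{S_{t+1}}$ and $R\,\mby=r$, use $\E[\mby\mid\mbx=s]=(\bar r(s),P(\cdot\mid s))$, and then check continuity of the conditional target. You are right to single out that continuity check. The paper simply asserts that continuity of $\bar r$ and $V$ together with the Feller property makes $s\mapsto\bar r(s)+\gamma\int V\,dP(\cdot\mid s)$ continuous. That does not follow for unbounded $V$, and the corollary as stated can fail without an extra hypothesis.

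\emph{Counterexample.} Take $\mathcal S=\bbR$, $\mu=\cN(0,1)$ and $\gamma\in(0,1)$. Define the kernel by $P(\cdot\mid s)=(1-|s|^{1/2})\delta_0+|s|^{1/2}\delta_{|s|^{-1/2}}$ for $0<|s|<1$, $P(\cdot\mid 0)=\delta_0$, and $P(\cdot\mid s)=\delta_1$ for $|s|\ge 1$. This kernel is weakly continuous, hence Feller. Let $V^*(x)=x$ and $R_{t+1}=\bar r(S_t)$ with $\bar r(s)=s-\gamma$.
\begin{itemize}
\item $\int V^*\,dP(\cdot\mid s)$ equals $1$ for $s\neq 0$ and $0$ at $s=0$.
\item The residual is $\gamma(1-S_{t+1})$. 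It is square-integrable because $\E[S_{t+1}^2\mid S_t=s]=|s|^{-1/2}$ near $0$, which is integrable against the Gaussian density.
\item The semi-gradient vanishes everywhere except at $s=0$, where it equals $\gamma$. So $V^*$ is stationary, since $\mu$ has no atom at $0$, yet the Bellman equation fails at $s=0$.
\end{itemize}

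Your first fallback therefore does not work: neither assumption~5 nor the paper's preceding paragraph supplies continuity. Commit to the added hypothesis instead. One option is bounded $V$, in which case the Feller property applies directly. The other is local uniform integrability of $V(S_{t+1})$ given $S_t$ near each $s$, in which case weak continuity plus uniform integrability gives $\int V\,dP(\cdot\mid s_n)\to\int V\,dP(\cdot\mid s)$. With either, the rest of your argument goes through.

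A small cleanup: the converse direction does not need the detour through $\bar{\mathcal S}$, where $V$ need not even be defined. The states live in $\mathcal S$, so $\mu(\mathcal S)=1$, and $g=0$ on $\mathcal S$ already gives $g=0$ $\mu$-a.e.
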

\end{tcolorbox}
Theorem~\ref{thm:principle} characterizes the stationary points as solutions of the Bellman equation; uniqueness is not a consequence of Theorem~\ref{thm:principle} (the label noise is not additive, since $Q[V]$ involves $V$) but is classical: for $\gamma < 1$ the Bellman evaluation operator $V \mapsto \bar r + \gamma \int V\,dP(\cdot \mid s)$ is a $\gamma$-contraction in the sup norm on bounded functions \citep{shapley1953stochastic,blackwell1965discounted,denardo1967contraction}, so it has exactly one fixed point, the true value function.
Uniqueness in this setting is not a contribution of this work.

%%%%%%%%%%%%%%%%%%%%%%%%%%%%%%%%%%%%%%%%%%%%%%%%%%%%%%%%%%%%%%%%%%%%%%%%%%%%%%%%%
\subsection{Q-learning}
%%%%%%%%%%%%%%%%%%%%%%%%%%%%%%%%%%%%%%%%%%%%%%%%%%%%%%%%%%%%%%%%%%%%%%%%%%%%%%%%%

We next consider a continuous-state, continuous-action Markov decision process with
state space \(\mathcal S \subset \mathbb R^d\) open and action space
\(\mathcal A \subset \mathbb R^m\) compact.
Let \(P(ds'\mid s,a)\) be the transition kernel and let $\bar r(s,a):=\E[R_{t+1}\mid S_t=s,\ A_t=a]$ be the conditional mean reward.
We assume:
\begin{enumerate}
    \item \(q \in C^1(\mathcal S\times \mathcal A)\).
    \item The training distribution of \((S_t,A_t)\) has full support on \(\mathcal S\times \mathcal A\), with support contained in \(\bar{\mathcal S}\times \mathcal A\).
    \item \(\bar r\) is continuous on \(\mathcal S\times \mathcal A\).
    \item \(P\) is Feller on state-action pairs: for every bounded continuous \(\phi\), the map $(s,a)\mapsto \int \phi(s')\,P(ds'\mid s,a)$ is continuous.
    \item The residual in the objective below is square-integrable.
\end{enumerate}

Define
\begin{align}
M_q(s):=\max_{a'\in\mathcal A} q(s,a'). \notag
\end{align}
Because \(\mathcal A\) is compact and \(q\) is continuous, \(M_q\) is continuous.
Now consider the semi-gradient Q-learning functional \citep{sutton2018reinforcement}
\begin{align}
\cL_{\mathrm{Q}}[q]
:=
\frac12\,\E\Big[
\big(
q(S_t,A_t)-\sgb{R_{t+1}+\gamma M_q(S_{t+1})}
\big)^2
\Big]. \notag
\end{align}
To write this in terms of the stopgrad regression principle, let $\mbx=(S_t,A_t)$, $\mby=(R_{t+1},\delta_{S_{t+1}})$, and define
\begin{align}
T[q,\mby](s,a)
=
r+\gamma\int_{\mathcal S} M_q(s')\,\delta_{S_{t+1}}(ds'). \notag
\end{align}
This is affine in \(\mby=(r,\delta_{S_{t+1}})\), with
\begin{align}
P[q](s,a)=0,
\qquad
Q[q](s,a)\,(r,\delta_{S_{t+1}})=\gamma\int_{\mathcal S} M_q(s')\,\delta_{S_{t+1}}(ds'),
\qquad
R(s,a)\,(r,\delta_{S_{t+1}})=r. \notag
\end{align}
Moreover, $\E[\mby\mid \mbx=(s,a)] = \big(\bar r(s,a),\,P(\cdot\mid s,a)\big)$, so $T\!\big[q,\E[\mby\mid \mbx=(s,a)]\big](s,a) = \bar r(s,a)+\gamma\int_{\mathcal S} M_q(s')\,P(ds'\mid s,a)$.
By continuity of \(\bar r\), continuity of \(M_q\), and the Feller property, this map is continuous on \(\mathcal S\times \mathcal A\). Thus Theorem~\ref{thm:principle} applies.

\begin{tcolorbox}[boxrule=0pt, frame empty]
\begin{corollary}[Q-learning]
\label{cor:q_learning}
Under the assumptions above, \(q^*\) is a stationary point of \(\cL_{\mathrm{Q}}\) if and only if
\begin{align}
q^*(s,a)
=
\bar r(s,a)
+
\gamma\int_{\mathcal S}
\max_{a'\in\mathcal A} q^*(s',a')\,
P(ds'\mid s,a)
\qquad \text{for all } (s,a)\in\mathcal S\times \mathcal A. \notag
\end{align}
That is, the stationary points are exactly the solutions of the Bellman optimality equation.
\end{corollary}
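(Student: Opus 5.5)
The plan is to apply Theorem~\ref{thm:principle} directly, in the same way as for Corollary~\ref{cor:td0}. The work that has already been done above is the verification of the theorem's hypotheses. We take $\mbx=(S_t,A_t)$ with $S=\mathcal S\times\mathcal A^\circ$ (or $\mathcal S\times\mathcal A$ read as a relatively open set). By assumption~2, the training distribution has full support there and its support lies in the closure. The target $T[q,\mby]$ is affine in $\mby=(r,\delta_{S_{t+1}})$, with $P=0$, $Q[q]$ acting linearly on the Dirac component, and $R$ the projection onto $r$. Square-integrability (assumption~5) gives $\cL_{\mathrm{Q}}[q]<\infty$. The discussion on measure-valued targets explains that the proof of Theorem~\ref{thm:principle} needs only linearity of the conditional expectation, which yields
\[
\E\Big[\textstyle\int M_q\,d\delta_{S_{t+1}}\,\Big|\,(s,a)\Big]=\int M_q(s')\,P(ds'\mid s,a).
\]

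The first step is to check the continuity hypothesis on $(s,a)\mapsto T[q,\E[\mby\mid\mbx]](s,a)=\bar r(s,a)+\gamma\int M_q\,dP(\cdot\mid s,a)$. Continuity of $M_q$ follows from Berge's maximum theorem, since $q$ is continuous and $\mathcal A$ is compact. The Feller property is stated only for bounded continuous test functions, so for a possibly unbounded $M_q$ I would either assume $q$ bounded, or truncate $M_q$ at level $\pm K$ and pass to the limit using uniform integrability implied by assumption~5. This integrability and Feller extension is the only genuinely delicate point. With continuity in hand, Theorem~\ref{thm:principle} says $q^*$ is stationary if and only if $q^*=T[q^*,\E[\mby\mid\mbx]]$ on $\operatorname{supp}\mu$, which is the Bellman optimality equation.

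The last step is to pass from $\operatorname{supp}\mu$ to all of $\mathcal S\times\mathcal A$. Every point of $\mathcal S\times\mathcal A$ is a limit of points in the open set of full support, and both sides of the equation are continuous. So equality on the support, which contains $\mathcal S\times\mathcal A$ by full support together with closedness of supports, gives the stated pointwise equation. The converse direction is the easy half of Theorem~\ref{thm:principle}. As with TD(0), I would remark that uniqueness is not claimed here. It follows separately from the $\gamma$-contraction property of the Bellman optimality operator in the sup norm.
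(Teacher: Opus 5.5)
Your route is the paper's: it also writes $\cL_{\mathrm{Q}}$ in the form \eqref{eq:principle_loss} with $P[q]=0$, the Dirac component carried by $Q[q]$ and the reward by $R$. It then uses $\E[\mby\mid\mbx=(s,a)]=(\bar r(s,a),P(\cdot\mid s,a))$, asserts continuity of $\bar r+\gamma\int M_q\,dP(\cdot\mid s,a)$ from continuity of $\bar r$ and $M_q$ plus the Feller property, and invokes Theorem~\ref{thm:principle}. Your remarks on $\mathcal A$ not being open and on passing from $\operatorname{supp}\mu$ to $\mathcal S\times\mathcal A$ fill in steps the paper leaves implicit. Note that $S=\mathcal S\times\mathcal A^\circ$ additionally needs $\mathcal A=\overline{\mathcal A^\circ}$ to get $\operatorname{supp}\mu\subseteq\bar S$, so the relatively open reading is the safer one.

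The point you flagged as delicate is a genuine gap, and the paper's own argument has it too, because it applies the Feller property to a possibly unbounded $M_q$. Your second remedy does not close it. Assumption~5 only controls $M_q(S_{t+1})$ on average over $(S_t,A_t)\sim\mu$. The truncation limit instead needs uniform integrability of $M_q$ under $P(\cdot\mid s_n,a_n)$ along every convergent sequence $(s_n,a_n)\to(s,a)$, and an averaged bound cannot supply that.

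In fact no argument from assumptions 1--5 alone can work. Take $\mathcal S=\bbR$, $\mathcal A=[0,1]$, $q^*(s,a)=s^2$, and $\mu$ Gaussian $\times$ uniform. Let $w=\min\{1,|s|^{1/2}\}$, set $P(\cdot\mid s,a)=(1-w)\delta_0+w\,\delta_{w^{-1/2}}$ (so $P(\cdot\mid 0,a)=\delta_0$), and use the deterministic reward $\bar r(s,a)=s^2-\gamma$. Then:
\begin{enumerate}[leftmargin=*]
\item The kernel is Feller, since $w\,\phi(w^{-1/2})\to 0$ for bounded $\phi$.
\item $\E[M_{q^*}(S_{t+1})^2]=\E[1/w(S_t)]<\infty$, so the residual $\gamma\bigl(1-M_{q^*}(S_{t+1})\bigr)$ is square-integrable.
\item $\int M_{q^*}\,dP(\cdot\mid s,a)=1$ for $s\neq 0$, but it equals $0$ at $s=0$.
\end{enumerate}
Hence the conditional residual vanishes off the $\mu$-null set $\{s=0\}$, so $q^*$ is stationary. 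Yet the Bellman optimality equation fails at $s=0$, where the left side is $0$ and the right side is $-\gamma$.

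So the corollary as stated needs an extra hypothesis. Your first remedy works: assume $q$ bounded, so $M_q$ is bounded and continuous and Feller applies directly. Alternatively, add a locally uniform moment bound on $M_q$ under $P(\cdot\mid s,a)$. With either added, the rest of your proof goes through.
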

\end{tcolorbox}
As for \textsc{td(}{\small 0}\textsc{)}, uniqueness is not a consequence of Theorem~\ref{thm:principle} (the label noise is not additive, since $Q[q]$ involves $q$) but is classical: for $\gamma < 1$ the Bellman optimality operator is a $\gamma$-contraction in the sup norm on bounded functions \citep{shapley1953stochastic,blackwell1965discounted,denardo1967contraction}, so it has exactly one fixed point, the optimal action-value function.

% This example is slightly richer than TD(0), because \(Q[q](\mbx)\) depends nontrivially on the current function through the value map \(M_q\). Theorem~\ref{thm:principle} nevertheless gives the Q-learning fixed-point relation in closed form.

%%%%%%%%%%%%%%%%%%%%%%%%%%%%%%%%%%%%%%%%%%%%%%%%%%%%%%%%%%%%%%%%%%%%%%%%%%%%%%%%%
\subsection{Adjoint matching: the inner RAM objective}
%%%%%%%%%%%%%%%%%%%%%%%%%%%%%%%%%%%%%%%%%%%%%%%%%%%%%%%%%%%%%%%%%%%%%%%%%%%%%%%%%

Consider the inner regression step used in adjoint matching and adjoint sampling \citep{domingo-enrich2025adjointmatching, havens2025adjointsampling}.
Let \(g:\mathbb R^d\to\mathbb R\) be a terminal objective.
The inner RAM objective regresses a control field \(u(x,t)\) toward the target
\(-\sigma(t)\nabla g(X_1)\), 
on stopgrad'd samples $(X_t, X_1) \sim p_u$, the stochastic process for the learned control $u$.
The stopgrad avoids backpropagating through $p_u$, and is denoted by $\bar u$.
The RAM loss functional can be written as an expectation on $u$ as:
\begin{align}
    \cL_{\mathrm{RAM}}[u]
    :=
    \int_0^1
    \E_{(X_t,X_1)\sim p_{t,1}^{u}}
    \Big[
    \frac12\,
    \big\|
    u( \sgb{X_t},t) + 
    \sgb{\sigma(t)\nabla g(X_1) }
    \big\|^2
    \Big]\,dt. \notag
\end{align}
Equivalently, rewriting the expectation to be on $\bar u$, and absorbing the time integral into the expectation:
\begin{align}
    \cL_{\mathrm{RAM}}[u;\bar u]
    =
    \frac12\,\E_{t, (X_t, X_1) \sim p^{\bar u}_{t,1}}\Big[
    \big\|
    u(X_t,t)-\sgb{-\sigma(t)\nabla g(X_1)}
    \big\|^2
    \Big]. \notag
\end{align}

Set $\mbx=(X_t,t)$, $\mby=\nabla g(X_1)$ and define $T[u,\mby](x,t) = -\sigma(t)\,\mby.$
Then this is affine in \(\mby\), with
\begin{align}
P[u](x,t)=0,
\qquad
Q[u](x,t)\,\mby=0,
\qquad
R(x,t)\,\mby=-\sigma(t)\,\mby. \notag
\end{align}
Moreover, $\E[\mby\mid \mbx=(x,t)] = \E[\nabla g(X_1)\mid X_t=x] = m(x,t)$,
so $T\!\big[u,\E[\mby\mid \mbx=(x,t)]\big](x,t) = -\sigma(t)\,m(x,t)$
which is continuous by assumption. Hence Theorem~\ref{thm:principle} applies directly.

\begin{tcolorbox}[boxrule=0pt, frame empty]
\begin{corollary}[Adjoint matching inner RAM step]
\label{cor:ram}
Under the assumptions above, \(u^*\) is a stationary point of \(\cL_{\mathrm{RAM}}\) if and only if
\begin{align}
u^*(x,t)
=
-\sigma(t)\,\E[\nabla g(X_1)\mid X_t=x]
\qquad \text{for all } (x,t)\in \mathcal X\times(0,1). \notag
\end{align}
\end{corollary}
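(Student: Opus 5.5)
This follows directly from Theorem~\ref{thm:principle}, applied to the reparametrized objective $\cL_{\mathrm{RAM}}[u;\bar u]$ as set up above. The inputs are $\mbx=(X_t,t)$ with $t$ drawn from the time sampler and $X_t\sim p^{\bar u}_t$, so $\mu$ is the joint law of $(X_t,t)$ under the frozen control $\bar u$. We take $S=\mathcal X\times(0,1)$ and assume, as in the preceding setup, that $\mu$ has full support on $S$ with $\operatorname{supp}\mu\subseteq\bar S$. The label is $\mby=\nabla g(X_1)$.

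\textbf{Step 1: verify the template.} The target operator is $T[u,\mby]=-\sigma(t)\mby$. This is affine in $\mby$ with $P\equiv 0$, $Q\equiv 0$ and $R(x,t)=-\sigma(t)I$, and $R$ does not depend on $u$ or $\mby$. The stopgrad convention of \S\ref{sec:sg_def} freezes the sampling law at $\bar u$. So $u$ appears exactly once outside $\SG$, and the first variation perturbs only that occurrence. This is exactly the structure \eqref{eq:principle_loss}.

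\textbf{Step 2: check the remaining hypotheses.}
\begin{enumerate}
  \item Finiteness of $\cL_{\mathrm{RAM}}[u^*]$ is the standing square-integrability assumption on the residual.
  \item We need continuity of $\mbx\mapsto T[u^*,\E[\mby\mid\mbx]](\mbx)$, which here equals $(x,t)\mapsto -\sigma(t)\,m(x,t)$ with $m(x,t)=\E[\nabla g(X_1)\mid X_t=x]$. This requires $\sigma$ to be continuous and $m$ to be continuous on $\bar S$. The latter is assumed in the setup; it also holds under Feller-type regularity of the controlled diffusion's transition kernel, as in the RL examples.
\end{enumerate}

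\textbf{Step 3: conclude.} Theorem~\ref{thm:principle} gives that $u^*$ is stationary if and only if $u^*=-\sigma(t)\,m(x,t)$ on $\operatorname{supp}\mu$. Both sides are continuous, so this is equivalent to equality on $S$: the forward direction restricts from $\bar S\supseteq\operatorname{supp}\mu$ to $S$, and the converse extends from $S$ to $\bar S$ by density.

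\textbf{Main obstacle.} The only real subtlety is bookkeeping. The conditional expectation $m$, and hence the stationary point, is taken under $p^{\bar u}$. The stationarity statement is therefore for fixed $\bar u=u$ evaluated at the current iterate: $m$ depends on the sampling law, consistent with the stopgrad convention. Since $Q\equiv 0$, the label noise is additive, and Corollary~\ref{cor:uniqueQ} additionally gives uniqueness for fixed $\bar u$.
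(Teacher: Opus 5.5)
Your proposal is correct and follows the paper's argument: you write the objective under the frozen law $p^{\bar u}$, read off $P = Q = 0$ and $R = -\sigma(t)$, note $\E[\mby\mid\mbx] = m(x,t)$ with $-\sigma m$ continuous by assumption, and invoke Theorem~\ref{thm:principle}. Your Step~3, passing from $\operatorname{supp}\mu$ to all of $\mathcal X\times(0,1)$, spells out a detail the paper leaves implicit; its forward direction rests on $S \subseteq \operatorname{supp}\mu$, which follows from full support, not on $\operatorname{supp}\mu \subseteq \bar S$.
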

\end{tcolorbox}

%%%%%%%%%%%%%%%%%%%%%%%%%%%%%%%%%%%%%

\section{The update direction may not be the gradient of any scalar functional}
\label{appsec:update_is_a_gradient}
We restate and prove Theorem~\ref{thm:g-conservative}. Throughout, $\langle \phi,\psi\rangle_\mu := \E_\mu[\phi(\mbx)\cdot \psi(\mbx)]$ is the inner product of $H = L^2(\mu)$, and $\mathcal{M} \subset C^1 \cap H$ is a linear subspace, so that $f + \epsilon h \in \mathcal{M}$ for $f, h \in \mathcal{M}$. For a map $G : \mathcal{M} \to H$, the Gateaux derivative at $f$ in direction $h$ is $D G[f]\,h := \lim_{\epsilon \to 0} (G[f + \epsilon h] - G[f])/\epsilon$, the limit taken in $H$. Directions range over all of $\mathcal{M}$, not only over the perturbation class $\mathcal{D}$ used to define stationary points; this is what admits the polynomial witnesses below.

We restate Theorem~\ref{thm:g-conservative} with its hypotheses explicit.
\begin{tcolorbox}[boxrule=0pt, frame empty]
\begin{theorem*}[Conservativeness of the semi-gradient $g$]
  Assume the hypotheses of Theorem~\ref{thm:principle} hold for every $f \in \mathcal{M}$, with $\mathcal{M}$ a linear subspace of $C^1 \cap H$, and suppose that $T$ is Gateaux differentiable along $\mathcal{M}$: for all $f, h \in \mathcal{M}$ the limit $D T[f]\,h := \lim_{\epsilon \to 0} \big(T[f+\epsilon h, \E[\mby \mid \mbx]] - T[f, \E[\mby \mid \mbx]]\big)/\epsilon$ exists in $H$.
  If the semi-gradient $g[f] = f - T[f, \E[\mby \mid \mbx]]$ is the gradient in $\langle \cdot, \cdot \rangle_\mu$ of a scalar functional $J : \mathcal{M} \to \R$, meaning $\delta J[f; h] = \langle g[f], h \rangle_\mu$ for all $f, h \in \mathcal{M}$, with $(\epsilon_1,\epsilon_2) \mapsto J[f + \epsilon_1 h_1 + \epsilon_2 h_2]$ twice continuously differentiable near $(0,0)$ for all $f, h_1, h_2 \in \mathcal{M}$, then $D T[f]$ is symmetric with respect to $\langle \cdot, \cdot \rangle_\mu$ at every $f \in \mathcal{M}$:
  \begin{align*}
      \langle D T[f]\, h_1,\, h_2 \rangle_\mu = \langle h_1,\, D T[f]\, h_2 \rangle_\mu
      \qquad \text{for all } h_1, h_2 \in \mathcal{M}.
  \end{align*}
\end{theorem*}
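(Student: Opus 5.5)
The plan is to compute the mixed second derivative of $J$ in two directions and use symmetry of mixed partials (Schwarz/Clairaut). Fix $f, h_1, h_2 \in \mathcal{M}$ and set $\phi(\epsilon_1,\epsilon_2) := J[f + \epsilon_1 h_1 + \epsilon_2 h_2]$, which is $C^2$ near $(0,0)$ by hypothesis. Because $\mathcal{M}$ is a linear subspace, $f + \epsilon_1 h_1 + \epsilon_2 h_2 \in \mathcal{M}$, so the gradient hypothesis applies at every such point and gives
\begin{align*}
\partial_{\epsilon_1}\phi(\epsilon_1,\epsilon_2) = \delta J[f + \epsilon_1 h_1 + \epsilon_2 h_2;\, h_1] = \langle g[f + \epsilon_1 h_1 + \epsilon_2 h_2],\, h_1\rangle_\mu ,
\end{align*}
and the analogous formula for $\partial_{\epsilon_2}\phi$ with $h_2$ in place of $h_1$. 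Strictly, $\delta J[\cdot;h_1]$ is the derivative of $\epsilon \mapsto J[\cdot + \epsilon h_1]$ at $0$, and this equals the partial derivative of $\phi$ at a shifted base point by a change of variables.

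Next we differentiate $\partial_{\epsilon_1}\phi(0,\epsilon_2)$ in $\epsilon_2$ at $0$. Using $g[f] = f - T[f]$, we get $g[f + \epsilon_2 h_2] - g[f] = \epsilon_2 h_2 - (T[f+\epsilon_2 h_2] - T[f])$. Dividing by $\epsilon_2$ and using the Gateaux differentiability of $T$, the quotient converges in $H$ to $h_2 - DT[f]h_2$. Since $\langle\cdot,h_1\rangle_\mu$ is continuous on $H$ by Cauchy--Schwarz, with $h_1 \in H$, this yields
\begin{align*}
\partial_{\epsilon_2}\partial_{\epsilon_1}\phi(0,0) = \langle h_2, h_1\rangle_\mu - \langle DT[f]h_2,\, h_1\rangle_\mu .
\end{align*}
By the same argument with the roles of the indices swapped,
\begin{align*}
\partial_{\epsilon_1}\partial_{\epsilon_2}\phi(0,0) = \langle h_1, h_2\rangle_\mu - \langle DT[f]h_1,\, h_2\rangle_\mu .
\end{align*}

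Because $\phi$ is $C^2$, Schwarz's theorem equates these two mixed partials. The $\langle h_1,h_2\rangle_\mu$ terms cancel, leaving $\langle DT[f]h_1, h_2\rangle_\mu = \langle h_1, DT[f]h_2\rangle_\mu$, which is the claimed symmetry.

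The main obstacle is bookkeeping rather than depth. We must make sure the iterated derivatives computed along the two coordinate lines through $(0,0)$ are genuinely the mixed partials of $\phi$. This holds because $\partial_{\epsilon_1}\phi(0,\epsilon_2)$ is given by the formula above for every small $\epsilon_2$, so differentiating it in $\epsilon_2$ is exactly $\partial_{\epsilon_2}\partial_{\epsilon_1}\phi(0,0)$. The other point to check is that the limit defining $DT[f]h$ is taken in $H$, so that it commutes with the inner product; this is built into the stated hypothesis.
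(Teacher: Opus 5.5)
Your proposal is correct and follows the paper's proof. The paper isolates the mixed-partial computation in Lemma~\ref{lem:functional-poincare}: it establishes symmetry of $\langle Dg[f]h_1, h_2\rangle_\mu$ via Schwarz's theorem applied to $\phi(\epsilon_1,\epsilon_2)$, then subtracts the identity part of $Dg[f]h = h - DT[f]h$. You substitute $g = f - T$ before differentiating, which is the same argument inlined.
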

\end{tcolorbox}
\begin{proof}[Proof of Theorem~\ref{thm:g-conservative}]
By Theorem~\ref{thm:principle}, $g[f] = f - T[f,\, \E[\mby\mid\mbx]]$, so $g : \mathcal{M} \to H$ is Gateaux differentiable along $\mathcal{M}$ with
\begin{align}
  D g[f]\,h \;=\; h - D T[f]\,h .
\end{align}
Lemma~\ref{lem:functional-poincare} below, applied with $V = \mathcal{M}$, gives $\langle D g[f]\,h_1, h_2\rangle_\mu = \langle h_1, D g[f]\,h_2\rangle_\mu$ for all $h_1, h_2 \in \mathcal{M}$. Subtracting the identity $\langle h_1, h_2\rangle_\mu = \langle h_1, h_2\rangle_\mu$ gives $\langle D T[f]\,h_1, h_2\rangle_\mu = \langle h_1, D T[f]\,h_2\rangle_\mu$.
\end{proof}

\begin{tcolorbox}[boxrule=0pt, frame empty]
\begin{lemma}[Symmetry of the second variation]
  \label{lem:functional-poincare}
  Let $H$ be a real inner-product space (not necessarily complete) and $V \subseteq H$ a linear subspace. Let $J : V \to \R$ and $g : V \to H$ satisfy $\delta J[f;\, h] = \langle g[f],\, h\rangle$ for all $f, h \in V$. Fix $f, h_1, h_2 \in V$. Assume that $D g[f]\,h_i := \lim_{\epsilon \to 0}(g[f + \epsilon h_i] - g[f])/\epsilon$ exists in $H$ for $i = 1, 2$, and that the map $\phi(\epsilon_1,\epsilon_2) := J[f + \epsilon_1 h_1 + \epsilon_2 h_2]$ is $C^2$ on a neighborhood of $(0,0)$. Then
  \begin{align*}
    \langle D g[f]\,h_1,\, h_2\rangle = \langle h_1,\, D g[f]\,h_2\rangle .
  \end{align*}
\end{lemma}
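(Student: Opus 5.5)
The plan is to reduce the claim to equality of mixed partial derivatives of the two-variable function $\phi(\epsilon_1,\epsilon_2) = J[f + \epsilon_1 h_1 + \epsilon_2 h_2]$ at the origin. Schwarz's theorem (Clairaut) applies because $\phi$ is $C^2$ near $(0,0)$, so $\partial_{\epsilon_2}\partial_{\epsilon_1}\phi(0,0) = \partial_{\epsilon_1}\partial_{\epsilon_2}\phi(0,0)$. It then suffices to identify the left side with $\langle D g[f]\,h_1, h_2\rangle$ and the right side with $\langle h_1, D g[f]\,h_2\rangle$.

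First I compute the first partials using the hypothesis $\delta J[f';h] = \langle g[f'], h\rangle$, which holds for every $f' \in V$ since $V$ is a linear subspace. For fixed $(\epsilon_1,\epsilon_2)$, set $f' = f + \epsilon_1 h_1 + \epsilon_2 h_2 \in V$. Then
\begin{align*}
\partial_{\epsilon_1}\phi(\epsilon_1,\epsilon_2) = \delta J[f'; h_1] = \langle g[f + \epsilon_1 h_1 + \epsilon_2 h_2], h_1\rangle,
\end{align*}
because the partial in $\epsilon_1$ is exactly the derivative of $\epsilon \mapsto J[f' + \epsilon h_1]$ at $\epsilon = 0$. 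The same argument gives $\partial_{\epsilon_2}\phi = \langle g[f + \epsilon_1 h_1 + \epsilon_2 h_2], h_2\rangle$.

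Next I take the second mixed partial at the origin. It suffices to evaluate along the axis: since $\phi$ is $C^2$, $\partial_{\epsilon_2}\partial_{\epsilon_1}\phi(0,0)$ is the derivative at $\epsilon_2 = 0$ of $\epsilon_2 \mapsto \partial_{\epsilon_1}\phi(0,\epsilon_2) = \langle g[f + \epsilon_2 h_2], h_1\rangle$. Continuity of the inner product in its first argument, together with the assumed existence of the Gateaux derivative $D g[f]\,h_2$ as a limit in $H$, gives
\begin{align*}
\frac{\langle g[f+\epsilon h_2] - g[f], h_1\rangle}{\epsilon} \to \langle D g[f]\,h_2, h_1\rangle .
\end{align*}
Symmetrically, $\partial_{\epsilon_1}\partial_{\epsilon_2}\phi(0,0) = \langle D g[f]\,h_1, h_2\rangle$. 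Equating the two mixed partials and using symmetry of the real inner product yields $\langle D g[f]\,h_1, h_2\rangle = \langle h_1, D g[f]\,h_2\rangle$.

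The main subtle point is the order of differentiation: the iterated derivative must be computed in the order that only requires the Gateaux derivative of $g$ at $f$ itself, not at nearby points. This is handled by restricting to the coordinate axis $\epsilon_1 = 0$ before differentiating in $\epsilon_2$. That restriction is legitimate because the $C^2$ hypothesis makes both iterated partials exist and equal the continuous mixed partial. Completeness of $H$ is never used; only bilinearity and continuity of the inner product are needed.
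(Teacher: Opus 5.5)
Your proposal is correct and follows essentially the same route as the paper's proof. Both compute $\partial_{\epsilon_i}\phi$ via $\delta J = \langle g, \cdot\rangle$ at shifted base points, differentiate once more at the origin using the Gateaux derivative of $g$ at $f$ and continuity of the inner product, and conclude by Schwarz's theorem and symmetry of the real inner product. (Minor nit: restricting to the axis $\epsilon_1 = 0$ is simply the definition of the iterated partial at the origin; the $C^2$ hypothesis is needed only for Schwarz.)
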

\end{tcolorbox}

\begin{proof}[Proof of \cref{lem:functional-poincare}]
Fix $f \in V$ and $h_1, h_2 \in V$, and let $\phi(\epsilon_1,\epsilon_2) := J[f + \epsilon_1 h_1 + \epsilon_2 h_2]$. By the hypothesis on $J$,
\begin{align}
  \partial_{\epsilon_2}\phi(\epsilon_1, 0)
  = \delta J[f + \epsilon_1 h_1;\, h_2]
  = \langle g[f + \epsilon_1 h_1],\, h_2\rangle .
\end{align}
Differentiating in $\epsilon_1$ at $0$: the difference quotient $\big(g[f+\epsilon_1 h_1] - g[f]\big)/\epsilon_1$ converges to $D g[f]\,h_1$ in $H$, so by Cauchy--Schwarz $\partial_{\epsilon_1}\partial_{\epsilon_2}\phi(0,0) = \langle D g[f]\,h_1,\, h_2\rangle$. Exchanging the roles of $h_1$ and $h_2$, $\partial_{\epsilon_2}\partial_{\epsilon_1}\phi(0,0) = \langle D g[f]\,h_2,\, h_1\rangle$. Since $\phi$ is $C^2$ near $(0,0)$, Schwarz's theorem on $\R^2$ gives equality of the mixed partials, and inner-product symmetry gives $\langle D g[f]\,h_1,\, h_2\rangle = \langle h_1,\, D g[f]\,h_2\rangle$.
\end{proof}

\paragraph{Remarks.}
The converse, that symmetry of $D g$ implies the existence of a potential $J[f] = \int_0^1 \langle g[sf], f\rangle\,ds$ on a star-shaped domain, is Vainberg's criterion for potential operators \citep[Theorem~5.1 and Corollary~5.1]{vainberg1964variational}; it needs additional regularity of $g$ along segments and is not used here. Only the forward direction is needed to show that a semi-gradient is \emph{not} a gradient, and it requires neither completeness of $V$ nor boundedness of $D g$. When $H = \R^n$ with the dot product, $D g[f]$ is the Jacobian matrix of $g$ at $f$ and the lemma is equality of mixed partials of a $C^2$ scalar; this is the curl perspective on game dynamics in \citet{balduzzi2018mechanics}.

We restate Corollary~\ref{cor:non-conservative}.
\begin{tcolorbox}[boxrule=0pt, frame empty]
\begin{corollary*}
    The semi-gradients of $\cL_{\mathrm{MF}}$, $\cL_{\mathrm{iMF}}$, $\cL_{\mathrm{LSD\text{-}\slim}}$, and $\cL_{\mathrm{ESD\text{-}\slim}}$ are not gradients of any twice continuously differentiable scalar functional $J : \mathcal{M} \to \R$ in $\langle \cdot, \cdot \rangle_\mu$ (in the sense of Theorem~\ref{thm:g-conservative}).
\end{corollary*}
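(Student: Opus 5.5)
By Theorem~\ref{thm:g-conservative}, it suffices to exhibit, for each of the four objectives, a base point $f_0 \in \mathcal{M}$ and directions $h_1, h_2 \in \mathcal{M}$ with $\langle D T[f_0]\,h_1, h_2\rangle_\mu \neq \langle h_1, D T[f_0]\,h_2\rangle_\mu$. Only the Gateaux derivatives at $f_0$ in the directions $h_1, h_2$ are needed. Indeed, the proof of Theorem~\ref{thm:g-conservative} goes through Lemma~\ref{lem:functional-poincare}, which uses $D g[f]\,h_i$ only at the fixed $f$ and for $i = 1,2$. So if a $C^2$ potential $J$ existed, symmetry would have to hold at these particular witnesses, and violating it gives a contradiction.

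The plan is to take $f_0 = 0$ wherever $T$ is nonlinear, and to use low-degree polynomial witnesses in the time variables. Fix a unit vector $e_1 \in \bbR^d$. The candidates are $h_1 = e_1$ (constant) and $h_2 = t\,e_1$ or $h_2 = u\,e_1$. These lie in $\mathcal{M}$: they are smooth with derivatives continuous up to the boundary, and they are in $L^2(\mu)$ because $t,u \in [0,1]$ are bounded. Since $T[f]$ is built from $f$, its first partials, and diagonal or composed evaluations of $f$, the Gateaux derivatives at $f_0 = 0$ along these directions can be computed by hand.

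\emph{MeanFlow.} $T[f] = v + (u-t)(\partial_t f + \partial_x f\,v)$ is affine in $f$, so $D T[f]\,h = (u-t)(\partial_t h + \partial_x h\,v)$ at every $f$. With $h_1 = e_1$ and $h_2 = t\,e_1$ this gives $D T h_1 = 0$ and $D T h_2 = (u-t)\,e_1$. Hence $\langle DT h_1, h_2\rangle_\mu = 0$, while $\langle h_1, DT h_2\rangle_\mu = \E_\mu[u-t] > 0$, which is positive because $\mu$ has full support on $\{t<u\}$.

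\emph{Improved MeanFlow.} At $f_0 = 0$, $D T[0]\,h = (u-t)\big(\partial_t h + \partial_x h\, f_0(t,t,\cdot) + \partial_x f_0\, h(t,t,\cdot)\big) = (u-t)\,\partial_t h$. The same witnesses give $0 \neq \E_\mu[u-t]$.

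\emph{Slim ESD.} At $f_0 = 0$, $D T[0]\,h = (u-t)\,\partial_t h + h(t,t,x)$. With $h_1 = e_1$ and $h_2 = t\,e_1$ we get $DT h_1 = e_1$ and $DT h_2 = (u-t)e_1 + t e_1 = u\,e_1$. So the two pairings are $\E_\mu[t]$ and $\E_\mu[u]$, which differ by $\E_\mu[u-t] > 0$.

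\emph{Slim LSD.} At $f_0 = 0$, the composed term $(\epsilon h)\big(u,u,x + (u-t)\epsilon h\big)$ contributes $h(u,u,x)$ at first order, so $D T[0]\,h = -(u-t)\,\partial_u h + h(u,u,x)$. With $h_1 = e_1$ and $h_2 = u\,e_1$ we get $DT h_1 = e_1$ and $DT h_2 = -(u-t)e_1 + u e_1 = t\,e_1$. The pairings are $\E_\mu[u]$ versus $\E_\mu[t]$, which again differ.

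The main technical point is confirming that these Gateaux derivatives exist as limits in $H$, not merely pointwise. For the chosen witnesses this is straightforward.
\begin{itemize}
\item For MeanFlow, iMF and slim ESD at $f_0 = 0$, the map $\epsilon \mapsto T[\epsilon h_i]$ is a polynomial in $\epsilon$ with bounded coefficient functions. For MeanFlow the coefficient $\partial_x h\,v$ vanishes, since the witnesses do not depend on $x$.
\item For slim LSD, the witnesses do not depend on $x$, so the composition is exact: it equals $\epsilon\,e_1$ and $\epsilon u\,e_1$, respectively.
\end{itemize}
In each case the difference quotients converge in $L^2(\mu)$ trivially. Lemma~\ref{lem:functional-poincare}, and hence Theorem~\ref{thm:g-conservative}, then rules out any twice continuously differentiable potential $J$ for each of the four semi-gradients.
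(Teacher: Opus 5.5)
Your proposal is correct and follows essentially the same route as the paper. It uses the base point $f_0 \equiv 0$, the same linearizations $DT[f_0]$, and $x$-independent polynomial-in-time witnesses in $\mathcal{M}$, so the pairings reduce to integrals against the time marginal of $\mu$; it also observes, as the paper does, that Lemma~\ref{lem:functional-poincare} only needs $DT[f_0]\,h_i$ at the two witnesses. Only your specific witness pairs differ: for example, $(1,t)$ gives asymmetry $\E_\mu[u-t]$ for MeanFlow and iMF, where the paper uses $(t^2,t)$ and gets $\E_\mu[(u-t)t^2]$.
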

\end{tcolorbox}

\begin{proof}[Proof of \Cref{cor:non-conservative}]
Suppose $g = f - T[f,\E[\mby \mid \mbx]]$ were the gradient of a scalar functional $J : \mathcal{M} \to \R$ with the regularity stated in Theorem~\ref{thm:g-conservative}. Lemma~\ref{lem:functional-poincare}, applied at a base point $f_0 \in \mathcal{M}$ in two directions $h_1, h_2 \in \mathcal{M}$ at which $DT[f_0]\,h_1$ and $DT[f_0]\,h_2$ exist in $H$, would give $\langle DT[f_0]\,h_1, h_2\rangle_\mu = \langle h_1, DT[f_0]\,h_2\rangle_\mu$. It therefore suffices to exhibit one such $f_0, h_1, h_2$ at which
\begin{equation}
\langle DT[f_0]\, h_1,\, h_2 \rangle_\mu \;\neq\; \langle h_1,\, DT[f_0]\, h_2 \rangle_\mu.
\label{eq:non-sa-witness}
\end{equation}
For the witnesses below, existence of $DT[f_0]\,h_i$ is immediate: they are polynomials in $(t,u)$ with no $x$-dependence, so the terms of $T$ that are nonlinear in $f$ (the products $(\partial_x f)\,f(t,t,\cdot)$ in iMF and slim \gls{esd}, and the composition $f(u,u,F)$ in slim \gls{lsd}) contribute nothing to $T[\epsilon h_i] - T[0]$, which is exactly linear in $\epsilon$ and bounded on the closed triangle.

For each method we use the base point $f_0 \equiv 0$, which corresponds to the identity flow map $F_0(t,u,x) = x$ under the parameterization $F = x + (u-t)f$ and lies in $\mathcal{M}$. At this $f_0$ the spatial-Jacobian terms in $DT[f]$ vanish, $\partial_x F_0 = I$, and $f_0(t,t,\cdot) = 0$, leaving:
\begin{align}
\text{MeanFlow:} \quad DT[f_0]\,h &= (u-t)\big(\partial_t h + (\partial_x h)\,v(t,x)\big), \label{eq:DT-mf}\\
\text{iMF:} \quad DT[f_0]\,h &= (u-t)\,\partial_t h, \label{eq:DT-imf}\\
\text{slim ESD:} \quad DT[f_0]\,h &= (u-t)\,\partial_t h \;+\; h(t,t,x), \label{eq:DT-esd}\\
\text{slim LSD:} \quad DT[f_0]\,h &= -(u-t)\,\partial_u h \;+\; h(u,u,x). \label{eq:DT-lsd}
\end{align}
For witnesses we take $h_1, h_2$ that are low-degree polynomials in $t, u$ with no $x$-dependence, times a fixed unit vector $e \in \bbR^d$ that we suppress in the notation; they are bounded on the closed time triangle, hence lie in $\mathcal{M}$ for either choice of model class ($C^1 \cap H$ or $C^\infty_{\mathrm{pol}} \cap H$), and so do $DT[f_0]\,h_i$. Constant-in-$x$ perturbations annihilate the $\partial_x h$ term in MeanFlow, and the resulting $\mu$-inner products reduce to integrals against the time-marginal $\mu_{(t,u)}$. Throughout, we use the assumption
\begin{equation}
\mu_{(t,u)}\big(\{0 \le t < u \le 1\}\big) \;>\; 0,
\label{eq:positive-mass}
\end{equation}
which holds whenever the diagonal $\{t = u\}$ does not have full mass under $\mu_{(t,u)}$, in particular under Remark~\ref{rmk:regularity}(4).

\paragraph{MeanFlow.} Take $h_1(t,u,x) = t^2$ and $h_2(t,u,x) = t$. Both are constant in $x$, so $\partial_x h_i = 0$ and \eqref{eq:DT-mf} simplifies to $DT[f_0]\,h_1 = 2t(u-t)$ and $DT[f_0]\,h_2 = (u-t)$. Then
\[
\langle DT[f_0]\,h_1,\,h_2\rangle_\mu - \langle h_1,\,DT[f_0]\,h_2\rangle_\mu
\;=\; \mathbb{E}_{\mu_{(t,u)}}\!\big[(u-t)(2t \cdot t - t^2 \cdot 1)\big]
\;=\; \mathbb{E}_{\mu_{(t,u)}}\!\big[(u-t)\,t^2\big].
\]
The integrand is nonnegative on $\{t \le u\}$ and strictly positive on the open set $\{0 < t < u\}$, which has positive $\mu_{(t,u)}$-mass because $\mu$ has full support on $\Omega^\circ$ (Remark~\ref{rmk:regularity}(4)). Hence the expectation is strictly positive. \hfill$\square$

\paragraph{Improved MeanFlow.} With the same $h_1 = t^2$, $h_2 = t$, \eqref{eq:DT-imf} gives $DT[f_0]\,h_1 = 2t(u-t)$ and $DT[f_0]\,h_2 = (u-t)$, exactly as for MeanFlow, so the asymmetry is again $\mathbb{E}_{\mu_{(t,u)}}[(u-t)\,t^2] > 0$. \hfill$\square$

\paragraph{Slim ESD.} Take $h_1(t,u,x) = u$ and $h_2(t,u,x) = 1$. Both are constant in $x$, and $\partial_t h_1 = \partial_t h_2 = 0$, so the transport term in \eqref{eq:DT-esd} vanishes for both. The evaluation-shift term gives $h_1(t,t,x) = t$ and $h_2(t,t,x) = 1$. Hence
\[
\langle DT[f_0]\,h_1,\,h_2\rangle_\mu - \langle h_1,\,DT[f_0]\,h_2\rangle_\mu
\;=\; \mathbb{E}_{\mu_{(t,u)}}\!\big[t \cdot 1 - u \cdot 1\big]
\;=\; \mathbb{E}_{\mu_{(t,u)}}\!\big[t - u\big].
\]
The integrand is nonpositive on $\{t \le u\}$ and strictly negative on $\{t < u\}$. By \eqref{eq:positive-mass}, the expectation is strictly negative. \hfill$\square$

\paragraph{Slim LSD.} Take $h_1(t,u,x) = t$ and $h_2(t,u,x) = 1$. Both are constant in $x$, and $\partial_u h_1 = \partial_u h_2 = 0$, so the transport term in \eqref{eq:DT-lsd} vanishes for both. The evaluation-shift term gives $h_1(u,u,x) = u$ and $h_2(u,u,x) = 1$. Hence
\[
\langle DT[f_0]\,h_1,\,h_2\rangle_\mu - \langle h_1,\,DT[f_0]\,h_2\rangle_\mu
\;=\; \mathbb{E}_{\mu_{(t,u)}}\!\big[u \cdot 1 - t \cdot 1\big]
\;=\; \mathbb{E}_{\mu_{(t,u)}}\!\big[u - t\big].
\]
The integrand is nonnegative on $\{t \le u\}$ and strictly positive on $\{t < u\}$. By \eqref{eq:positive-mass}, the expectation is strictly positive. \hfill$\square$

\paragraph{Conclusion.} For each $T \in \{T_{\mathrm{MF}}, T_{\mathrm{iMF}}, T_{\mathrm{slim\,ESD}}, T_{\mathrm{slim\,LSD}}\}$ we have exhibited $f_0 \in \mathcal{M}$ and $h_1, h_2 \in \mathcal{M}$ at which \eqref{eq:non-sa-witness} holds. By the contrapositive of Theorem~\ref{thm:g-conservative}, the semi-gradients $g_{\mathrm{MF}}, g_{\mathrm{iMF}}, g_{\mathrm{slim\,ESD}}, g_{\mathrm{slim\,LSD}}$ are not gradients of any scalar functional $J: \mathcal{M} \to \mathbb{R}$ in $\langle\cdot,\cdot\rangle_\mu$.

\end{proof}

%%%%%%%%%%%%%%%%%%%%%%%%%%%%%%%%%%%%%

%%%%%%%%%%%%%%%%%%%%%%%%%%%%%%%%%%%%%
\section{ MeanFlow without stopgrad loses stationarity at the true flow map }
\label{appsec:proofs_mf_nosg}
%%%%%%%%%%%%%%%%%%%%%%%%%%%%%%%%%%%%%

\begin{tcolorbox}[boxrule=0pt, frame empty]
\begin{proposition}
    \label{prop:mf_nosg_general}
    Consider the MeanFlow functional \emph{without} stopgrads:
    \begin{align}
      \label{eq:mf_nosg}
      \cL_{\mathrm{nosg}}[f]
      = \E\Big[\big\|
        f - \dot \mbx_t - (u-t)\big(\partial_t f
        + \partial_x f\, \dot \mbx_t\big)
      \big\|^2\Big],
    \end{align}
    where all functions are evaluated at $(t,u,\mbx_t)$, $f, h : [0,1]^2 \times \R^d \to \R^d$,
    and $\partial_x f, \partial_x h \in \R^{d\times d}$ are Jacobians in the spatial argument.
    Define the \emph{posterior covariance}
    $V(t, x) := \Cov(\dot \mbx_t \mid \mbx_t = x) \in \R^{d\times d}$,
    which measures the residual randomness in $\dot \mbx_t$
    after conditioning on $\mbx_t$.
    At the true flow map $F^*$, the first variation in direction $h$ is
    \begin{align}
      \label{eq:mf_nosg_variation}
      \delta \cL_{\mathrm{nosg}}[f^*; h]
      = 2\,\E_{t,u,\mbx_t}\Big[
        (u-t)\,\tr\!\big(
          \partial_x h(t,u,\mbx_t)\, V(t, \mbx_t)\, \partial_x F^*(t,u,\mbx_t)^\top
        \big)
      \Big],
    \end{align}
    where $F^* = x + (u-t)f^*$ and
    $\partial_x F^* = I + (u-t)\partial_x f^*$.
\end{proposition}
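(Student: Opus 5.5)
The plan is to compute the full (non-stopgrad) first variation directly, evaluate it at $f^*$, and then use the tower property together with the transport equation to kill every term except the one carrying the posterior covariance. Write the residual as
\[
r[f] := f - \dot\mbx_t - (u-t)\big(\partial_t f + \partial_x f\,\dot\mbx_t\big).
\]
Since no stopgrad is present, perturbing $f \to f+\epsilon h$ perturbs every occurrence of $f$, and $r$ is affine in $f$. Hence
\[
r[f+\epsilon h] = r[f] + \epsilon\, \ell[h], \qquad \ell[h] := h - (u-t)\big(\partial_t h + \partial_x h\,\dot\mbx_t\big).
\]
Expanding the square, as in the proof of Theorem~\ref{thm:principle}, gives
\[
\delta\cL_{\mathrm{nosg}}[f;h] = 2\,\E\big[r[f]\cdot \ell[h]\big],
\]
with finiteness justified by Cauchy--Schwarz under the integrability conditions of Remark~\ref{rmk:regularity}.

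Next I would decompose the residual at $f^*$. Write $\xi := \dot\mbx_t - v(t,\mbx_t)$, so $\E[\xi\mid \mbx_t]=0$ and $\E[\xi\xi^\top\mid\mbx_t]=V(t,\mbx_t)$. Splitting $\dot\mbx_t = v + \xi$ inside $r[f^*]$ gives
\[
r[f^*] = \Big(f^* - v - (u-t)\big(\partial_t f^* + \partial_x f^*\,v\big)\Big) - \big(I + (u-t)\partial_x f^*\big)\xi.
\]
By Corollary~\ref{cor:meanflow}, equivalently the transport equation \eqref{eq:ftilde_transport}, the first bracket vanishes identically. The second factor is $\partial_x F^*$ by \eqref{eq:F-derivs}, so
\[
r[f^*] = -\partial_x F^*\,\xi.
\]
The same split applied to the test direction gives
\[
\ell[h] = \ell_v[h] - (u-t)\,\partial_x h\,\xi, \qquad \ell_v[h] := h - (u-t)\big(\partial_t h + \partial_x h\, v\big),
\]
where $\ell_v[h]$ is a deterministic function of $(t,u,\mbx_t)$.

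Now I would condition on $(t,u,\mbx_t)$ and use the tower property. The cross term
\[
-\E\big[(\partial_x F^*\xi)\cdot \ell_v[h]\big]
\]
vanishes because $\E[\xi\mid\mbx_t]=0$; this uses that $\xi$ depends on $(t,\mbx_t)$ only through the interpolant, with $(t,u)$ sampled independently of the data. The remaining term is
\[
(u-t)\,\E\big[(\partial_x F^*\xi)\cdot(\partial_x h\,\xi)\big] = (u-t)\,\E\big[\tr\big(\partial_x h\,\xi\xi^\top \partial_x F^{*\top}\big)\big],
\]
using $a\cdot b = \tr(b a^\top)$. Conditioning replaces $\xi\xi^\top$ by $V(t,\mbx_t)$. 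Multiplying by the prefactor $2$ yields \eqref{eq:mf_nosg_variation}, with signs working out since $(-)(-)=+$.

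I expect the main obstacle to be bookkeeping and justification rather than conceptual difficulty. Two points need care. First, I must verify the sign and transpose convention in the trace so that it matches $\partial_x h\,V\,\partial_x F^{*\top}$. Second, I must check that all the expectations exist, which needs $V$ integrable against $\|\partial_x h\|\|\partial_x F^*\|$. I would invoke the finite second moment of $\dot\mbx_t$ together with the linear-growth bounds on $F^*$ from Remark~\ref{rmk:regularity}(5), and restrict to perturbations $h$ (for instance $h \in C^1_c(\Omega)$) for which the resulting products are integrable.
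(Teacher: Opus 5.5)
Your proposal is correct and follows the paper's proof essentially step for step. Like the paper, you expand the square to get $2\,\E[r[f]\cdot\ell[h]]$, split $\dot\mbx_t = v+\xi$ so that the flow-map equation reduces the residual at $f^*$ to $-\partial_x F^*\,\xi$, and kill the cross term via $\E[\xi \mid \mbx_t]=0$; the surviving quadratic term becomes $(u-t)\tr(\partial_x h\,V\,\partial_x F^{*\top})$ by the trace identity. Your remarks on the independence of $u$ from the data (which justifies conditioning on $(t,u,\mbx_t)$) and on integrability add care that the paper leaves implicit.
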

\end{tcolorbox}

\begin{proof}[Proof of Proposition~\ref{prop:mf_nosg_general}]
    Let $f^\epsilon = f + \epsilon h$.
    Since there are no stopgrads, this is the usual first variation:
    every occurrence of $f$ is perturbed.
    The quantity inside the norm is
    \begin{align}
      R(\epsilon) := f^\epsilon - \dot \mbx_t
      - (u-t)\big(\partial_t f^\epsilon
      + \partial_x f^\epsilon\, \dot \mbx_t\big) \;\in\; \R^d.
    \end{align}
    Its $\epsilon$-derivative is
    \begin{align}
      R'(\epsilon) = h - (u-t)\big(\partial_t h + \partial_x h\, \dot \mbx_t\big) \;\in\; \R^d.
    \end{align}
    The first variation is the scalar
    $\delta \cL = 2\,\E[\langle R(0),\, R'(0)\rangle]$.

    \medskip
    \noindent\textbf{Evaluating $R(0)$ at $f^*$.}
    Write $\dot \mbx_t = v + \xi$ where
    $v = v(t, \mbx_t) = \E[\dot \mbx_t \mid \mbx_t]$
    and $\xi = \dot \mbx_t - v$ has $\E[\xi \mid \mbx_t] = 0$
    and $\Cov(\xi \mid \mbx_t) = V(t, \mbx_t)$.
    Then
    \begin{align}
      R(0)
      &= f^* - (v + \xi)
        - (u-t)\big(\partial_t f^*
        + \partial_x f^*\,(v + \xi)\big) \\
      &= \underbrace{\big[f^* - v
        - (u-t)(\partial_t f^* + \partial_x f^*\, v)\big]}_{= 0 \text{ (flow map eqn)}}
        - \big(I + (u-t)\partial_x f^*\big)\xi \\
      &= -\,\partial_x F^*\, \xi,
    \end{align}
    recalling that $\partial_x F^* = I + (u-t)\partial_x f^*$.

    \medskip
    \noindent\textbf{Evaluating $R'(0)$.}
    Similarly decompose:
    \begin{align}
      R'(0) = \underbrace{\big[h - (u-t)(\partial_t h + \partial_x h\, v)\big]}_{=:\, \phi}
      - (u-t)\,\partial_x h\, \xi.
    \end{align}
    Here $\phi \in \R^d$ is deterministic given $(t,u,\mbx_t)$.

    \medskip
    \noindent\textbf{Computing $\E[\langle R(0), R'(0)\rangle]$.}
    Expanding the inner product,
    \begin{align}
      \langle R(0),\, R'(0)\rangle
      = -\,\langle \partial_x F^*\, \xi,\; \phi\rangle
      \;+\; (u-t)\,\langle \partial_x F^*\, \xi,\; \partial_x h\, \xi\rangle.
    \end{align}
    For the first term, condition on $(t,u,\mbx_t)$:
    $\partial_x F^*$ and $\phi$ are deterministic, and
    $\E[\xi \mid \mbx_t] = 0$.
    Hence $\E[\langle \partial_x F^*\, \xi,\, \phi\rangle] = 0$ by the tower property.

    For the second term, we use the identity
    $\langle A\xi,\, B\xi\rangle = \xi^\top A^\top B\, \xi = \tr(A^\top B\, \xi\xi^\top)$.
    Conditioning on $(t,u,\mbx_t)$, the matrices $A = \partial_x F^*$ and
    $B = \partial_x h$ are deterministic and
    $\E[\xi\xi^\top \mid \mbx_t] = V(t, \mbx_t)$, so
    \begin{align}
      \E\big[(u-t)\,\langle \partial_x F^*\, \xi,\; \partial_x h\, \xi\rangle\big]
      &= \E\Big[(u-t)\,\tr\!\big((\partial_x F^*)^\top\, \partial_x h\, \E[\xi\xi^\top \mid \mbx_t]\big)\Big] \\
      &= \E\Big[(u-t)\,\tr\!\big(\partial_x h\, V(t,\mbx_t)\, (\partial_x F^*)^\top\big)\Big],
    \end{align}
    using cyclicity of the trace.
    Hence
    $\delta \cL_{\mathrm{nosg}}[f^*; h]
    = 2\,\E\big[(u-t)\,\tr(\partial_x h\, V\, (\partial_x F^*)^\top)\big]$.
\end{proof}

The key observation is that in standard flow matching, the integrand is linear in $\dot \mbx_t$, so replacing the marginal velocity $v$ with $\dot \mbx_t$ only adds a constant (the conditional variance) and preserves stationary points.
In MeanFlow without stopgrads, the term $\partial_x f\, \dot \mbx_t$ makes the integrand nonlinear in $\dot \mbx_t$: substituting $\dot \mbx_t$ for $v$ implicitly pulls an expectation outside a square and drops a conditional-covariance term, so $\cL_{\mathrm{nosg}}$ is a different objective from the one whose minimizer is $F^*$, and its full gradient faithfully optimizes that different objective (this covariance correction was also noted by \citet{boffi2024flowmapmatching} and derived in \citet{kim2026stabilizing}).
We emphasize that this is a property of the simplified objective, not of full gradients versus semi-gradients: full gradients of correctly specified losses are entirely sound.
The stopgrad happens to repair exactly this defect, because it prevents the interaction between $h$ and $\dot \mbx_t$ through $\partial_x f$ and restores the linearity that makes the replacement free; its stationary-point equation is the transport equation, whose solution is the minimizer of the properly specified objective with $v$ in place of $\dot \mbx_t$.

We now specialize to a 1D Gaussian to show the first variation is
explicitly nonzero. In one dimension, $V(t,\mbx_t)$ is a scalar and the trace
in \eqref{eq:mf_nosg_variation} reduces to ordinary multiplication.

\begin{tcolorbox}[boxrule=0pt, frame empty]
\begin{proposition}
    \label{prop:gaussian}
    Let $\mbx_0 \sim \cN(0, 1)$ and $\mbx_1 \sim \cN(\mu, \lambda^2)$
    with $\lambda > 0$.
    Let $\mbx_t = (1-t)\mbx_0 + t \mbx_1$ and $\dot \mbx_t = \mbx_1 - \mbx_0$.
    Then the true flow map $F^*$ is \emph{not} a stationary point
    of $\cL_{\mathrm{nosg}}$.
\end{proposition}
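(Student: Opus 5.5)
Our approach is to use Proposition~\ref{prop:mf_nosg_general}. It suffices to exhibit one admissible direction $h$ for which the right-hand side of \eqref{eq:mf_nosg_variation} is nonzero. In one dimension the trace becomes a product:
\[
\delta\cL_{\mathrm{nosg}}[f^*;h] = 2\,\E\big[(u-t)\,\partial_x h\,V(t,\mbx_t)\,\partial_x F^*(t,u,\mbx_t)\big].
\]
So we need three things: $V(t,x)$, $\partial_x F^*$, and a choice of $h$ whose sign we control.

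\emph{Step 1: the posterior covariance.} The pair $(\mbx_t,\dot\mbx_t)$ is jointly Gaussian, since both are linear in the independent Gaussians $\mbx_0,\mbx_1$. We have $\Var(\mbx_t)=(1-t)^2+t^2\lambda^2$, $\Var(\dot\mbx_t)=1+\lambda^2$ and $\Cov(\mbx_t,\dot\mbx_t)=t\lambda^2-(1-t)$. Gaussian conditioning then gives a covariance that does not depend on $x$:
\[
V(t)=1+\lambda^2-\frac{(t\lambda^2-(1-t))^2}{(1-t)^2+t^2\lambda^2}=\frac{\lambda^2}{(1-t)^2+t^2\lambda^2},
\]
which is strictly positive for every $t\in[0,1]$. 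The closed form follows from the Cauchy--Schwarz/Lagrange identity $(a^2+b^2)(c^2+d^2)-(ac+bd)^2=(ad-bc)^2$ with $a=1-t$, $b=t\lambda$, $c=1$, $d=\lambda$.

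\emph{Step 2: the true flow map.} The velocity $v(t,x)=\E[\dot\mbx_t\mid\mbx_t=x]$ is affine in $x$, with slope $a(t)=\Cov(\mbx_t,\dot\mbx_t)/\Var(\mbx_t)=\tfrac12\,\tfrac{d}{dt}\log\Var(\mbx_t)$. The flow ODE is therefore linear, and $F^*$ is affine in $x$ with slope
\[
\partial_x F^*(t,u,x)=\exp\Big(\int_t^u a(s)\,ds\Big)=\sqrt{\Var(\mbx_u)/\Var(\mbx_t)}>0.
\]
This is the standard fact that the flow transports the $\mbx_t$ Gaussian to the $\mbx_u$ Gaussian monotonically. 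It also confirms that the regularity conditions of Remark~\ref{rmk:regularity} hold.

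\emph{Step 3: choosing $h$.} Take $h(t,u,x)=x\,\varphi(t,u)$ with $\varphi\ge0$ smooth, compactly supported in $\{0<t<u<1\}$ and not identically zero. If the admissible class is required to be compactly supported in $x$, we instead use a cutoff $x\,\chi(x)$ with $\chi'$ small. Then $\partial_x h=\varphi$, and the integrand equals
\[
(u-t)\,\varphi(t,u)\,V(t)\,\sqrt{\Var(\mbx_u)/\Var(\mbx_t)}\;\ge 0,
\]
with strict positivity on an open set of $(t,u)$ that has positive mass by Remark~\ref{rmk:regularity}(4). Hence $\delta\cL_{\mathrm{nosg}}[f^*;h]>0$, so $F^*$ is not stationary.

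The main obstacle is keeping the sign of the integrand controlled, namely confirming $V>0$ and $\partial_xF^*>0$; both follow from the explicit Gaussian formulas above. If the $x$-cutoff is needed, one further checks that the $x\chi'(x)$ contribution can be made arbitrarily small by dominated convergence, since Gaussian moments are finite.
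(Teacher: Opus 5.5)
Your proposal is correct and follows essentially the same route as the paper: invoke Proposition~\ref{prop:mf_nosg_general}, compute $V(t)=\lambda^2/\sigma_t^2$ and $\partial_x F^*=\sigma_u/\sigma_t>0$, then test against a perturbation linear in $x$. The paper's $h_R = x\,\chi(x/R)$ with $R\to\infty$ by dominated convergence is the precise form of your cutoff, and your extra time factor $\varphi(t,u)$ is a harmless refinement that keeps $h$ compactly supported in the open triangle. One small slip: since $\Cov(\mbx_t,\dot\mbx_t)=t\lambda^2-(1-t)$, the Lagrange identity needs $c=-1$ rather than $c=1$ (with $c=1$ you would get $\lambda^2(1-2t)^2$), though your final formula for $V(t)$ is right.
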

\end{tcolorbox}

The failure is caused by the posterior variance
$V(t) = \lambda^2/\sigma_t^2 > 0$.
When stopgrads are applied (as in the actual MeanFlow method),
the $\partial_x h \cdot \dot \mbx_t$ interaction is severed,
the $\xi^2$ cross-term disappears,
and by Corollary~\ref{cor:meanflow}, the true flow map becomes the unique
stationary point.

\begin{proof}[Proof of Proposition~\ref{prop:gaussian}]
    We recall the closed-form Gaussian quantities.
    The marginal variance is $\sigma_t^2 = (1-t)^2 + t^2 \lambda^2$.
    The velocity is $v(t,x) = \alpha(t)(x - t\mu) + \mu$ where $\alpha(t) = (t\lambda^2 - (1-t))/\sigma_t^2$.
    The flow map is $F^*(t,u,x) = m(t,u)(x - t\mu) + u\mu$ where $m(t,u) = \sigma_u / \sigma_t$.
    The Jacobian is $\partial_x F^*(t,u,x) = m(t,u)$.

    \medskip
    \noindent\textbf{Posterior variance.}
    We compute $V(t) := \Var(\dot \mbx_t \mid \mbx_t)$.
    Since $\dot \mbx_t = \mbx_1 - \mbx_0$ and $\mbx_t = (1-t)\mbx_0 + t\mbx_1$
    with $\mbx_0 \perp \mbx_1$:
    \begin{align}
      \Var(\dot \mbx_t) = 1 + \lambda^2, \qquad
      \Cov(\dot \mbx_t, \mbx_t) = t\lambda^2 - (1-t), \qquad
      \Var(\mbx_t) = \sigma_t^2.
    \end{align}
    By the Gaussian conditional variance formula:
    \begin{align}
      V(t)
      = \Var(\dot \mbx_t) - \frac{\Cov(\dot \mbx_t, \mbx_t)^2}{\Var(\mbx_t)}
      = (1 + \lambda^2) - \frac{(t\lambda^2 - (1-t))^2}{\sigma_t^2}
      = \frac{\lambda^2}{\sigma_t^2}.
    \end{align}
    (The last equality follows from expanding:
    $(1+\lambda^2)\sigma_t^2 - (t\lambda^2 - (1-t))^2 = \lambda^2$.)
    In particular, $V(t) > 0$ for all $t \in [0,1]$ when $\lambda > 0$.

    \medskip
    \noindent\textbf{Choice of admissible perturbation.}
    Admissible perturbations are $C^1_c$ functions, so we cannot use $h(t,u,x)=x$
    directly. Instead, fix a smooth cutoff $\chi \in C^\infty_c(\R)$ with
    $\chi \equiv 1$ on $[-1,1]$, $\chi \equiv 0$ outside $[-2,2]$, and
    $0 \le \chi \le 1$. For $R > 0$, define
    \begin{align}
      h_R(t,u,x) := x\, \chi(x/R).
    \end{align}
    Then $h_R \in C^1_c$ and
    \begin{align}
      \partial_x h_R(x)
      = \chi(x/R) + \tfrac{x}{R}\, \chi'(x/R).
    \end{align}
    Note $0 \le \partial_x h_R(x) \to 1$ pointwise as $R\to\infty$, and
    $|\partial_x h_R(x)| \le 1 + \|s\chi'(s)\|_\infty =: C < \infty$ uniformly in $R$.

    \medskip
    \noindent\textbf{Evaluating the first variation.}
    By Proposition~\ref{prop:mf_nosg_general}, the first variation at $F^*$
    in direction $h_R$ is
    \begin{align}
      \delta \cL_{\mathrm{nosg}}[f^*; h_R]
      = 2\lambda^2\,\E_{t,u,\mbx_t}\!\left[
        \frac{(u-t)\, m(t,u)}{\sigma_t^2}\, \partial_x h_R(\mbx_t)
      \right].
    \end{align}
 The integrand is bounded in absolute value by
$C\,(u-t)\, m(t,u)/\sigma_t^2$, which is bounded on $\{0 \le t \le u \le 1\}$
since $\sigma_t^2 = (1-t)^2 + t^2\lambda^2 \ge \lambda^2/(1+\lambda^2) > 0$.
By dominated convergence,
\begin{align}
  \lim_{R\to\infty} \delta \cL_{\mathrm{nosg}}[f^*; h_R]
  = 2\lambda^2\,\E_{t,u}\!\left[\frac{(u-t)\, m(t,u)}{\sigma_t^2}\right].
\end{align}
Since $\lambda > 0$, $(u-t) \ge 0$ on $\{t \le u\}$ with strict inequality
on the positive-measure set $\{t < u\}$, $m(t,u) = \sigma_u/\sigma_t > 0$,
and $\sigma_t > 0$, the integrand on the right-hand side is nonnegative and
strictly positive on a set of positive measure, so the limit is strictly
positive. Therefore there exists $R$ large enough that
$\delta \cL_{\mathrm{nosg}}[f^*; h_R] > 0$, with $h_R \in C^1_c$ admissible.
The first variation is nonzero on the admissible class, so $F^*$ is
\emph{not} a stationary point of $\cL_{\mathrm{nosg}}$.

\end{proof}

%%%%%%%%%%%%%%%%%%%%%%%%%%%%%%%%%%%%%%%%%%%%%%%%%%%%%%%%%%%%%%%%%%%%%%%%%%%%%%%%%
\section{Proofs for convergence of functional semi-gradient flow}
\label{appsec:convergence}
%%%%%%%%%%%%%%%%%%%%%%%%%%%%%%%%%%%%%%%%%%%%%%%%%%%%%%%%%%%%%%%%%%%%%%%%%%%%%%%%%

For our convergence arguments, we choose for clarity of exposition to work in a functional analytic setting where $g: \mathcal{M} \to \mathcal{M}$, such that semi-gradient flow on $f \in \mathcal{M}$ stays in $\mathcal{M}$.
This is more narrow than the functional analytic setting introduced in \S\ref{sec:stopgrad_formalism}.
Sufficient conditions for $g: \mathcal{M} \to \mathcal{M}$ are: $\mathcal{M} \subset C^{\infty}_{\text{pol}} \cap H$ where $C^{\infty}_{\text{pol}}$ is the class of smooth functions such that every derivative has at most polynomial growth in $x$ uniformly over $t,u$, and $H$ is $L^2(\mu)$; $v \in C^{\infty}_{\text{pol}}$ and globally Lipschitz in $x$ uniformly in time, and $\mu$ has all polynomial moments.

These assumptions cover a broad class of flow maps and models of interest, such as linear interpolants between smooth positive data densities and a Gaussian base, and neural networks with activations with bounded derivatives. Nevertheless, $g: \mathcal{M} \to \mathcal{M}$ is chosen for clarity of exposition: its assumptions are stricter than absolutely necessary for the convergence arguments, and could be weakened.

%%%%%%%%%%%%%%%%%%%%%%%%%%%%%%%%%%%%%%%%%%%%%%%%%%%%%%%%%%%%%%%%%%%%%%%%%%%%%%%%%
\subsection{Form of functional semi-gradient flows}
%%%%%%%%%%%%%%%%%%%%%%%%%%%%%%%%%%%%%%%%%%%%%%%%%%%%%%%%%%%%%%%%%%%%%%%%%%%%%%%%%
\label{appsec:form_of_functional_gradient_flows}

We use the parameterization relationship $F(t,u,x)=x+(u-t)f(t,u,x)$.
The residuals from applying Theorem~\ref{thm:principle} to each objective are:
\begin{align}
    g_{\mathrm{MF}}[f](t,u,x)
        &=
        f(t,u,x)
        -
        v(t,x)
        -
        (u-t) \Bigl(
            \partial_t f(t,u,x)
            +(\partial_x f)(t,u,x)\,v(t,x)
        \Bigr)
        \label{eq:g-mf-setup} \\
    g_{\mathrm{sESD}}[f](t,u,x)
    &=
    f(t,u,x)
    -(u-t)\,\partial_t f(t,u,x)
    -(\partial_x F)(t,u,x)\,f(t,t,x)
    \label{eq:g-esd-setup}
    \\
    g_{\mathrm{sLSD}}[f](t,u,x)
    &=
    f(t,u,x)
    +(u-t)\,\partial_u f(t,u,x)
    -f \bigl(u,u,F(t,u,x)\bigr)
    \label{eq:g-lsd-setup}
\end{align}
and recall that the semi-gradient flows are, for each $g$: $\partial_\tau f_\tau = -g[f_\tau]$.
Consider the Eulerian and Lagrangian residuals, which are the residual errors for the standard backward equation and forward equation for a flow map, respectively.
\begin{tcolorbox}[boxrule=0pt, frame empty]
\begin{align}
    E_v[f](t,u,x)
    &:=
    \partial_t F(t,u,x)+(\partial_x F)(t,u,x)\,v(t,x),
    \tag{Eulerian}
    \label{eq:Ev-def-unified}
    \\
    L_v[f](t,u,x)
    &:=
    \partial_u F(t,u,x)-v \bigl(u,F(t,u,x)\bigr).
    \tag{Lagrangian}
    \label{eq:Rv-def-unified}
\end{align}    
\end{tcolorbox}

We will show that
\begin{equation}
    g_{\mathrm{MF}}[f](t,u,x) = -\,E_v[f](t,u,x).
\end{equation}
Furthermore, assume perfect flow-matching pretraining. At a high level, this will let us replace model instantaneous velocities $f(t,t,x)$ with the true marginal velocity $v$.
Then, when post-training with slim \gls{esd} and \gls{lsd}, we will show that:
\begin{align}
    g_{\mathrm{sESD}}[f](t,u,x) &= -\,E_v[f](t,u,x), \\
    g_{\mathrm{sLSD}}[f](t,u,x) &= L_v[f](t,u,x).
\end{align}

Since $\partial_\tau F_\tau = \partial_\tau  \bigl( x + (u-t) f_\tau \bigr) = (u-t)\,\partial_\tau f_\tau$,
and \(\partial_\tau f_\tau=-g[f_\tau]\), we get $\partial_\tau F_\tau = -(u-t)\,g[f_\tau]$.
The learned flow map $F$ evolves over semi-gradient flow optimization time $\tau$ as:
\begin{tcolorbox}[boxrule=0pt, frame empty]
\begin{align}
    \text{Eulerian (MF, s-\textsc{esd}):}\qquad \partial_\tau F_\tau &= (u-t)\,E_v[f_\tau], \label{eq:mf-f-flow-unified} \\
    \text{Lagrangian (s-\textsc{lsd}):}\qquad \partial_\tau F_\tau &= -(u-t)\,L_v[f_\tau]. \label{eq:lsd-f-flow-unified}
\end{align}
\end{tcolorbox}

%%%%%%%%%%%%%%%%%%%%%%%%%%%%%%%%%%%%%%%%%%%%%%%%%%%%%%%%%%%%%%%%%%%%%%%%%%%
\paragraph{Derivation.}
%%%%%%%%%%%%%%%%%%%%%%%%%%%%%%%%%%%%%%%%%%%%%%%%%%%%%%%%%%%%%%%%%%%%%%%%%%%
To perform this rewriting for slim \gls{esd} and \gls{lsd}, we need to handle the fact that they are distillation-like objectives which use the model's own prediction $f(t,t,x)$ instead of $\dot{\mathbf{x}}_t$.
We will replace these diagonal predictions with $v$ by assuming perfect flow-matching pretraining, such that 
\begin{equation}
    f_0(t,t,x)=v(t,x)
    \qquad\text{for all }(t,x).
    \label{eq:perfect-fm-pretrain-unified}
\end{equation}
By noting a diagonal invariance property of slim \gls{esd} and \gls{lsd} (Lemma \ref{lem:diag-invariance-esd-lsd-unified}), we will obtain that during post-training:
\[
f_\tau(t,t,x)=v(t,x) \qquad\text{for all }(\tau,t,x).
\]
In particular the semi-gradient of the flow matching term, $f_\tau(t,t,\cdot) - v(t,\cdot)$, vanishes identically along the flow, so the semi-gradient flow of the combined objective $\cL_{\mathrm{SD}} + \cL_{\mathrm{FM}}$ coincides with that of the stopgrad term alone; this is why only $g_{\mathrm{sESD}}$ and $g_{\mathrm{sLSD}}$ appear below.
MeanFlow does not require this assumption because it uses $\dot{\mathbf{x}}_t$, so that under Theorem~\ref{thm:principle}, its residual \eqref{eq:g-mf-setup} is already in terms of $v$, as desired.

\begin{tcolorbox}[boxrule=0pt, frame empty]
\begin{lemma}[Diagonal invariance for slim \gls{esd} and slim \gls{lsd}]
    \label{lem:diag-invariance-esd-lsd-unified}
    For every $f \in \mathcal{M}$, the slim residuals \eqref{eq:g-esd-setup} and \eqref{eq:g-lsd-setup} vanish on the diagonal: $g_{\mathrm{sESD}}[f](t,t,x) = g_{\mathrm{sLSD}}[f](t,t,x) = 0$ for all $(t,x)$.
    Consequently, along semi-gradient flows $\partial_\tau f_\tau=-g_{\mathrm{sESD}}[f_\tau]$ or $\partial_\tau f_\tau=-g_{\mathrm{sLSD}}[f_\tau]$,
    the diagonal values do not change: $\partial_\tau f_\tau(t,t,x)=0$.
    Furthermore, Under the perfect-FM-pretraining assumption
    \eqref{eq:perfect-fm-pretrain-unified},
    \begin{equation}
        f_\tau(t,t,x)=v(t,x)
        \qquad\text{for all }(\tau,t,x).
        \label{eq:diag-frozen-v-unified}
    \end{equation}
\end{lemma}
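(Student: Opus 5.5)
The plan is to evaluate each residual directly at $u=t$, using the derivative identities \eqref{eq:F-derivs} and the boundary condition $F(t,t,x)=x$ that the parameterization \eqref{eq:f_param} builds in. Then the diagonal freezing follows by integrating the flow in $\tau$ pointwise.

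\emph{Step 1 (slim ESD).} Set $u=t$ in \eqref{eq:g-esd-setup}. The term $(u-t)\,\partial_t f$ vanishes, since $\partial_t f$ is continuous up to the boundary of the closed triangle by Remark~\ref{rmk:regularity}(1), so it is finite on the diagonal. Also $\partial_x F(t,t,x) = I + 0\cdot\partial_x f = I$. Hence
\[
g_{\mathrm{sESD}}[f](t,t,x) = f(t,t,x) - f(t,t,x) = 0 .
\]

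\emph{Step 2 (slim LSD).} Set $u=t$ in \eqref{eq:g-lsd-setup}. The term $(u-t)\partial_u f$ vanishes by the same boundedness. Also $F(t,t,x)=x$, so $f(u,u,F(t,u,x))$ becomes $f(t,t,x)$. Hence
\[
g_{\mathrm{sLSD}}[f](t,t,x) = f(t,t,x) - f(t,t,x) = 0 .
\]

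\emph{Step 3 (freezing along the flow).} The flow $\partial_\tau f_\tau = -g[f_\tau]$ holds pointwise in $(t,u,x)$, since we work in the setting $g:\mathcal M\to\mathcal M$ with smooth $f_\tau$. Restricting to a diagonal point $(t,t,x)$ and applying Steps 1--2 to $f_\tau\in\mathcal M$ gives $\partial_\tau f_\tau(t,t,x)=0$ for every $\tau$. Then $\tau\mapsto f_\tau(t,t,x)$ is constant, so $f_\tau(t,t,x)=f_0(t,t,x)$. Under \eqref{eq:perfect-fm-pretrain-unified} this equals $v(t,x)$, which is \eqref{eq:diag-frozen-v-unified}.

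\emph{Main obstacle.} The algebra is trivial; the only delicate point is justifying Step 3. We need that the semi-gradient flow is well posed with $f_\tau\in\mathcal M$ for all $\tau$. We also need that the identity $\partial_\tau f_\tau = -g[f_\tau]$ may be evaluated pointwise on the diagonal, rather than only $\mu$-a.e. This matters because the diagonal may be $\mu$-null. I would handle it by the standing convention of this section that the flow is taken in $\mathcal M\subset C^\infty_{\mathrm{pol}}$ as a pointwise (classical) evolution. Alternatively, if it is only defined $\mu$-a.e., I would extend it using continuity of both sides on $\bar\Omega$, as in the extension argument preceding Lemma~\ref{lem:char}.
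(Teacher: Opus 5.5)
Your proposal is correct and follows the paper's proof essentially verbatim: you set $u=t$ in each residual, the $(u-t)$ prefactor kills the derivative terms, and $F(t,t,x)=x$ gives $\partial_x F(t,t,x)=I$ and $f(u,u,F(t,u,x))=f(t,t,x)$; then $\partial_\tau f_\tau(t,t,x)=0$ and you integrate in $\tau$. Your extra remarks (finiteness of $\partial_t f,\partial_u f$ on the diagonal, and evaluating the flow pointwise rather than $\mu$-a.e.) fit the paper's $g:\mathcal{M}\to\mathcal{M}$ setting and only make explicit what the paper leaves implicit.
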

\end{tcolorbox}

\begin{proof}
    For slim \gls{esd}, set \(u=t\) in \eqref{eq:g-esd-setup}. Since the prefactor
    \(u-t\) vanishes and \(F(t,t,x)=x\), we have $(\partial_x F)(t,t,x)=I$,
    because \(F(t,t,x)=x\) identically in \(x\). Therefore $g_{\mathrm{sESD}}[f](t,t,x) = f(t,t,x)-I\,f(t,t,x) = 0$.

    For slim \gls{lsd}, set \(u=t\) in \eqref{eq:g-lsd-setup}. The prefactor
    \(u-t\) vanishes, and since \(F(t,t,x)=x\), $g_{\mathrm{sLSD}}[f](t,t,x)
        = f(t,t,x)-f \bigl(t,t,F(t,t,x)\bigr)
        = f(t,t,x)-f(t,t,x)
        = 0$.
    Therefore, in either post-training flow, $\partial_\tau f_\tau(t,t,x) = -g[f_\tau](t,t,x) = 0$, so the diagonal values are invariant in \(\tau\).
\end{proof}

We now rewrite the post-training residual for each method in terms of the
Eulerian transport residual \(E_v[f]\) or the Lagrangian residual \(L_v[f]\).

\begin{tcolorbox}[boxrule=0pt, frame empty]
\begin{proposition}[Residual forms in \(f\)-coordinates]
    \label{prop:residual-forms-unified}

    For slim \gls{esd} and \gls{lsd}, assume perfect flow-matching pretraining, so that \eqref{eq:diag-frozen-v-unified} holds.
    Then, we have:
    \begin{align}
        g_{\mathrm{MF}}[f](t,u,x) &= -\,E_v[f](t,u,x), \label{eq:mf-equals-minus-Ev} \\
        g_{\mathrm{sESD}}[f](t,u,x) &= -\,E_v[f](t,u,x), \label{eq:esd-equals-minus-Ev} \\
        g_{\mathrm{sLSD}}[f](t,u,x) &= L_v[f](t,u,x). \label{eq:lsd-equals-Rv}
    \end{align}
\end{proposition}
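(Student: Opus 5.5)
The plan is a direct algebraic verification of each identity. Throughout we use the parameterization $F = x + (u-t)f$ and the derivative identities \eqref{eq:F-derivs}, namely $\partial_t F = -f + (u-t)\partial_t f$, $\partial_u F = f + (u-t)\partial_u f$ and $\partial_x F = I + (u-t)\partial_x f$. For the two slim objectives we also use the frozen-diagonal property \eqref{eq:diag-frozen-v-unified} from Lemma~\ref{lem:diag-invariance-esd-lsd-unified}, which lets us replace $f(t,t,x)$ by $v(t,x)$ and $f(u,u,y)$ by $v(u,y)$.

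\emph{MeanFlow.} First expand
\[
E_v[f] = -f + (u-t)\partial_t f + \big(I + (u-t)\partial_x f\big)v = -f + v + (u-t)\big(\partial_t f + (\partial_x f)v\big).
\]
Comparing with \eqref{eq:g-mf-setup}, this is exactly $-g_{\mathrm{MF}}[f]$. No pretraining assumption is needed here, because MeanFlow's residual already contains the marginal velocity $v$ after the tower-property step of Theorem~\ref{thm:principle}.

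\emph{Slim ESD.} Substitute $f(t,t,x) = v(t,x)$ into \eqref{eq:g-esd-setup} to get
\[
g_{\mathrm{sESD}}[f] = f - (u-t)\partial_t f - (\partial_x F)v.
\]
Then use $-f + (u-t)\partial_t f = \partial_t F$ to rewrite this as $-\big(\partial_t F + (\partial_x F)v\big) = -E_v[f]$.

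\emph{Slim LSD.} In \eqref{eq:g-lsd-setup}, the first two terms combine as $f + (u-t)\partial_u f = \partial_u F$. The last term is the diagonal value at the point $F(t,u,x)$, so \eqref{eq:diag-frozen-v-unified} applied with time $u$ and position $F(t,u,x)$ gives $f(u,u,F(t,u,x)) = v(u,F(t,u,x))$. Hence $g_{\mathrm{sLSD}}[f] = \partial_u F - v(u,F) = L_v[f]$.

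The only subtle point is the scope of the pretraining assumption. Proposition~\ref{prop:residual-forms-unified} is stated for $f$ satisfying $f(t,t,\cdot) = v(t,\cdot)$, and we use it only along the semi-gradient flow. There Lemma~\ref{lem:diag-invariance-esd-lsd-unified} guarantees the diagonal stays frozen, and it must hold at every spatial point, since the LSD term evaluates the diagonal at the transported point $F(t,u,x)$, not only at $x$. Granted this, every step is a pointwise identity valid on $\bar\Omega$, and there is no analytic obstacle.
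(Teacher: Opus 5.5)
Your proposal is correct and follows the paper's proof: each identity is checked pointwise from the derivative identities for $F = x + (u-t)f$, and the slim objectives additionally use the frozen diagonal $f(u,u,y) = v(u,y)$, which for LSD is needed at the transported point $y = F(t,u,x)$. The only cosmetic difference is in slim ESD, where the paper expands $(\partial_x F)v$ to reduce $g_{\mathrm{sESD}}$ to $g_{\mathrm{MF}}$ and then invokes part (a), while you fold $f - (u-t)\partial_t f$ into $-\partial_t F$ directly.
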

\end{tcolorbox}

\begin{proof}
    We repeatedly use the parametrization $F(t,u,x)=x+(u-t)f(t,u,x)$
    whose derivatives are
    \begin{align}
        \partial_t F(t,u,x)
        &=
        -f(t,u,x)
        +(u-t)\,\partial_t f(t,u,x),
        \label{eq:ft-deriv-unified}
        \tag{$\partial_t F$}
        \\
        \partial_u F(t,u,x)
        &=
        f(t,u,x)
        +(u-t)\,\partial_u f(t,u,x),
        \label{eq:fu-deriv-unified}
        \tag{$\partial_u F$}
        \\
        \partial_x F(t,u,x)
        &=
        I+(u-t)\,\partial_x f(t,u,x).
        \label{eq:fx-deriv-unified}
        \tag{$\partial_x F$}
    \end{align}

    \noindent
    \textbf{(a) MeanFlow.}
    By definition,
    \begin{align}
        E_v[f](t,u,x)
        &=
        \partial_t F(t,u,x)+(\partial_x F)(t,u,x)\,v(t,x). \notag
    \end{align}
    Substitute \eqref{eq:ft-deriv-unified} and \eqref{eq:fx-deriv-unified}:
    \begin{align}
        E_v[f](t,u,x)
        &=
         \Bigl(
            -f(t,u,x) +(u-t)\,\partial_t f(t,u,x)
        \Bigr)
        +
         \Bigl(
            I+(u-t)\,\partial_x f(t,u,x)
        \Bigr)v(t,x)
        \notag\\
        &=
        -f(t,u,x)
        +(u-t)\,\partial_t f(t,u,x)
        +v(t,x)
        +(u-t)(\partial_x f)(t,u,x)\,v(t,x)
        \notag\\
        &=
        - \Bigl(
            f(t,u,x)
            -v(t,x)
            -(u-t) \bigl(
                \partial_t f(t,u,x)
                +(\partial_x f)(t,u,x)\,v(t,x)
            \bigr)
        \Bigr)
        \notag\\
        &=
        -\,g_{\mathrm{MF}}[f](t,u,x). \notag
    \end{align}

    \noindent
    \textbf{(b) Slim \gls{esd}.}
    By diagonal invariance and perfect FM pretraining, $f(u,u,y)=v(u,y)$ for all $(u,y)$ throughout post-training.
    Therefore
    \begin{align}
        g_{\mathrm{sESD}}[f](t,u,x)
        &=
        f(t,u,x)
        -(u-t)\,\partial_t f(t,u,x)
        -(\partial_x F)(t,u,x)\,v(t,x). \notag
    \end{align}
    Using \eqref{eq:fx-deriv-unified},
    \begin{align}
        (\partial_x F)(t,u,x)\,v(t,x)
        &=
         \Bigl(I+(u-t)\,\partial_x f(t,u,x)\Bigr)v(t,x)
        \notag\\
        &=
        v(t,x)+(u-t)(\partial_x f)(t,u,x)\,v(t,x). \notag
    \end{align}
    Hence
    \begin{align}
        g_{\mathrm{sESD}}[f](t,u,x)
        &=
        f(t,u,x)
        -v(t,x)
        -(u-t) \Bigl(
            \partial_t f(t,u,x)
            +(\partial_x f)(t,u,x)\,v(t,x)
        \Bigr)
        \notag\\
        &=
        g_{\mathrm{MF}}[f](t,u,x) =
        -\,E_v[f](t,u,x).
        \notag
    \end{align}

    \noindent
    \textbf{(c) Slim \gls{lsd}.}
    By diagonal invariance and perfect FM pretraining, $f(u,u,y)=v(u,y)$ for all $(u,y)$ throughout post-training.
    Therefore
    \begin{align}
        g_{\mathrm{sLSD}}[f](t,u,x)
        &=
        f(t,u,x)
        +(u-t)\,\partial_u f(t,u,x)
        -v \bigl(u,F(t,u,x)\bigr). \notag
    \end{align}
    Using \eqref{eq:fu-deriv-unified}, we combine the first two terms on the right-hand side:
    \begin{align}
        g_{\mathrm{sLSD}}[f](t,u,x)
        &=
        \partial_u F(t,u,x)-v \bigl(u,F(t,u,x)\bigr) =
        L_v[f](t,u,x). \notag
    \end{align}
\end{proof}

%%%%%%%%%%%%%%%%%%%%%%%%%%%%%%%%%%%%%%%%%%%%%%%%%%%%%%%%%%%%%%%%%%%%%%%%%%%%%%%%%
\subsection{Proof for proposition~\ref{prop:rectifying-templates-unified}}
\label{appsec:convergence_proof}
%%%%%%%%%%%%%%%%%%%%%%%%%%%%%%%%%%%%%%%%%%%%%%%%%%%%%%%%%%%%%%%%%%%%%%%%%%%%%%%%%
We prove the Eulerian and Lagrangian cases separately, though the arguments are similar.
The key idea is introducing a combined map that composes the learned flow with the true flow map.
The core insight is the combined map obeys a transport equation.
In general, a transport equation says that a function $f(t,x)$ is conserved over time $t$ along trajectories of $x$ following a velocity field $v$.
This transport equation, wherein a certain quantity is conserved over optimization time $\tau$, lets us characterize the semi-gradient flow evolution of the combined map, and then of the learned flow map.

The proof uses backwards evaluations of the true flow map, which exist and are correct under the standing regularity conditions, but does not evaluate $F_\tau$ outside $t \leq u$.

\begin{proof}
    \noindent
    \textbf{(a) Eulerian case.}
    Define a ``double-forward'' combined map
    \[
        a_\tau(t,u,z) := F_\tau \bigl(t,u,F^*(0,t,z)\bigr).    
    \]
    This combined map says: first, starting from $z$, transport it with the true flow map from time $0 \to t$, then take the output and transport it with the learned flow map from $t \to u$.

    Differentiate with respect to \(t\). By the chain rule for the total derivative,
    \begin{align*}
        \partial_t a_\tau(t,u,z)
        &=
        \partial_1 F_\tau \bigl(t,u,F^*(0,t,z)\bigr)
        +
        (\partial_3 F_\tau) \bigl(t,u,F^*(0,t,z)\bigr)\,
        \partial_t F^*(0,t,z).
    \end{align*}
    where we use $\partial_1$ and $\partial_3$ to denote the partial derivative with respect to first, and third, input variables for $F_\tau$.
    
    Since \(F^*(0,t,z)\) is the exact flow of \(v\), we have: $\partial_t F^*(0,t,z)=v \bigl(t,F^*(0,t,z)\bigr)$.
    Using this:
    \begin{align}
        \partial_t a_\tau(t,u,z)
        &=
        \partial_1 F_\tau \bigl(t,u,F^*(0,t,z)\bigr)
        +
        (\partial_3 F_\tau) \bigl(t,u,F^*(0,t,z)\bigr)\,
        v \bigl(t,F^*(0,t,z)\bigr)
        \notag\\
        &=
        E_v[f_\tau] \bigl(t,u,F^*(0,t,z)\bigr).
        \label{eq:dt-a-is-Ev-unified}
    \end{align}
    Next differentiate \(a_\tau\) with respect to \(\tau\). Since \(F^*(0,t,z)\)
    is independent of \(\tau\),
    \begin{align*}
        \partial_\tau a_\tau(t,u,z)
        &=
        \partial_\tau F_\tau \bigl(t,u,F^*(0,t,z)\bigr).
    \end{align*}
    Using $\partial_\tau F_\tau = (u-t)\,E_v[f_\tau]$,
    \begin{align}
        \partial_\tau a_\tau(t,u,z)
        &=
        (u-t)\,E_v[f_\tau] \bigl(t,u,F^*(0,t,z)\bigr)
        \notag\\
        &=
        (u-t)\,\partial_t a_\tau(t,u,z),
        \label{eq:a-linear-pde-unified}
    \end{align}
    where the last step uses \eqref{eq:dt-a-is-Ev-unified}.
    
    Now, we will rewrite the \gls{pde} in the gap variable \(q:=u-t\). Define
    \[
        c_\tau(q,u,z):=a_\tau(u-q,u,z) = a_\tau(t,u,z).
    \]
    Then,
    \[
        \partial_q c_\tau(q,u,z)
        = \partial_q a_\tau(t,u,z) \left( \frac{\partial t}{\partial q} \right)
        = -\partial_t a_\tau(t,u,z).
    \]
    Therefore \eqref{eq:a-linear-pde-unified} becomes
    \begin{align}
        \partial_\tau a_\tau(t,u,z) &= (u-t)\,\partial_t a_\tau(t,u,z) \notag \\
        \partial_\tau c_\tau(q,u,z) -q\,\partial_t a_\tau(t,u,z) &= 0  \notag \\
        \partial_\tau c_\tau(q,u,z) +q\,\partial_q c_\tau(q,u,z) &= 0 \label{eq:c-linear-pde-unified}
    \end{align}

    We recognize \eqref{eq:c-linear-pde-unified} as a transport equation in the
    gap variable \(q=u-t\):
    \[
    \partial_\tau c_\tau(q,u,z)+q\,\partial_q c_\tau(q,u,z)=0.
    \]
    In general, a transport equation $\partial_t F(t,x) + v(t,x) \partial_x F(t,x) = 0$ says that $F(t,x)$ is conserved over time $t$ along trajectories of $x$ following $dx/dt = v$.
    In our case, we have that $c_\tau(q,u,z)$ is conserved over time $\tau$ along trajectories of $q$ following $dq/d\tau = q$. 
    Its forward characteristics satisfy
    \[
    \frac{dq}{d\tau}=q,
    \qquad
    q(\tau)=e^\tau q_0,
    \]
    and \(c_\tau\) is constant along these curves:
    \[
    c_\tau(q(\tau),u,z)=c_0(q_0,u,z).
    \]
    Equivalently, to evaluate the solution at a fixed point \((\tau,q)\), we trace
    the characteristic backward to the initial line \(\tau=0\), which gives
    \[
    q_0=e^{-\tau}q.
    \]
    Hence
    \[
    c_\tau(q,u,z)=c_0(e^{-\tau}q,u,z).
    \]
    Thus, $c_\tau(q,u,z)$ is exponentially compressed toward the value $c_0(q=0, u, z)$ as \(\tau\to\infty\).
    Returning to \((t,u)\),
    \begin{align}
        a_\tau(t,u,z)
        &=
        c_\tau(u-t,u,z)
        \notag\\
        &=
        c_0 \bigl(e^{-\tau}(u-t),u,z\bigr)
        \notag\\
        &=
        a_0 \bigl(u-e^{-\tau}(u-t),u,z\bigr)
        \notag
    \end{align}
    Thus, semi-gradient flow evolves $a$ as:
    \begin{equation}
        a_\tau(t,u,z) = a_0 \bigl(s^\tau_{t{\shortrightarrow}u},u,z\bigr) \label{eq:a-explicit-unified}.
    \end{equation}

    To recover \(F_\tau\), start from the definition of $a_\tau$, then set \(z=F^*(t,0,x)\), equivalently \(x=F^*(0,t,z)\).
    Use the composition property of the true flow map: $F^*(s,t,F^*(0,s,x)) = F^*(0,t,x)$.
    \begin{align}
        F_\tau(t,u, F^*(0,t,z)) &= a_\tau(t,u,z)
        \notag\\
        F_\tau(t,u, F^*(0,t,F^*(t,0,x)))  &= a_\tau(t,u,F^*(t,0,x))
        \notag\\
        F_\tau(t,u,x) &= a_\tau \bigl(t,u,F^*(t,0,x)\bigr)
        \notag\\
        &=
        a_0 \bigl(s^\tau_{t{\shortrightarrow}u},u,F^*(t,0,x)\bigr)
        \notag\\
        &=
        F_0 \Bigl(
            s^\tau_{t{\shortrightarrow}u},
            u,
            F^*(0,s^\tau_{t{\shortrightarrow}u},F^*(t,0,x))
        \Bigr)
        \notag
    \end{align}
    Thus:
    \begin{equation}
        F_\tau(t,u,x) = F_0 \bigl(
            s^\tau_{t{\shortrightarrow}u},
            u,
            F^*(t,s^\tau_{t{\shortrightarrow}u},x)
        \bigr).
    \end{equation}
    This proves \eqref{eq:f-explicit-eulerian-unified}.
    Finally, as \(\tau\to\infty\), \(s^\tau_{t{\shortrightarrow}u}\to u\).
    Therefore
    \[
        F^*(t,s^\tau_{t{\shortrightarrow}u},x)\to F^*(t,u,x).
    \]
    Using continuity of \(F_0\) and the diagonal identity \(F_0(u,u,y)=y\),
    \begin{align}
        F_\tau(t,u,x)
        &=
        F_0 \bigl(
            s^\tau_{t{\shortrightarrow}u},
            u,
            F^*(t,s^\tau_{t{\shortrightarrow}u},x)
        \bigr)
        \to
        F_0 \bigl(u,u,F^*(t,u,x)\bigr)
        \notag =
        F^*(t,u,x).
    \end{align}

    \smallskip
    \noindent
    \textbf{(b) Lagrangian case.}
    Define a ``backtracking'' combined map:
    \[
        b_\tau(t,u,x) := F^*(u,t,F_\tau(t,u,x)).
    \]
    This combined map says: first, transport $x$ using the learned flow map from $t \to u$, then take the output and transport it with the true flow map back from $u \to t$.
    
    We first compute \(\partial_u b_\tau\).
    We use the transport equation obeyed by the true flow map:
    \begin{equation}
        \partial_u F^*(u,t,z) + \partial_z F^*(u,t,z)\,v(u,z) = 0.
        \label{eq:du-backward-flow-unified}
    \end{equation}

    Differentiate \(b_\tau\) with respect to \(u\). By the chain rule for the total derivative, \eqref{eq:du-backward-flow-unified}, and using $\partial_1$ and $\partial_3$ to denote the partial derivative with respect to the first and third input variables:
    \begin{align}
        \partial_u b_\tau(t,u,x)
        &=
         \partial_1 F^* \bigl(u, t, F_\tau(t,u,x) \bigr) 
        +
        \partial_3 F^*(u,t,F_\tau(t,u,x))\,
        \partial_u F_\tau(t,u,x)
        \notag\\
        &=
        - \partial_3 F^*(u,t,F_\tau(t,u,x))\,
        v \bigl(u,F_\tau(t,u,x)\bigr)
        +
        \partial_3 F^* \bigl( u,t,F_\tau(t,u,x) \bigr)\,
        \partial_u F_\tau(t,u,x)
        \notag\\
        &=
        \partial_3 F^*(u,t,F_\tau(t,u,x))\,
         \Bigl(
            \partial_u F_\tau(t,u,x)-v \bigl(u,F_\tau(t,u,x)\bigr)
        \Bigr)
        \notag\\
        &=
        \partial_3 F^*(u,t,F_\tau(t,u,x))\,
        L_v[f_\tau](t,u,x).
        \label{eq:du-b-is-Rv-unified}
    \end{align}

    Differentiate \(b_\tau\) with respect to \(\tau\). Since \(F^*(u,t,\cdot)\) is
    independent of \(\tau\),
    \begin{align}
        \partial_\tau b_\tau(t,u,x)
        &=
        \partial_3 F^*(u,t,F_\tau(t,u,x))\,
        \partial_\tau F_\tau(t,u,x).
    \end{align}
    Using $\partial_\tau F_\tau = -(u-t)\,L_v[f_\tau]$, 
    \begin{align}
        \partial_\tau b_\tau(t,u,x)
        &=
        -(u-t)\,
        \partial_3 F^*(u,t,F_\tau(t,u,x))\,
        L_v[f_\tau](t,u,x)
        \notag\\
        &=
        -(u-t)\,\partial_u b_\tau(t,u,x). \label{eq:b-linear-pde-unified}
    \end{align}
    where the last step uses \eqref{eq:du-b-is-Rv-unified}.

    To rewrite the \gls{pde} in the gap variable \(q=u-t\), define
    \[
        d_\tau(t,q,x):= b_\tau(t,t+q,x) = b_\tau(t,u,x).
    \]
    Then
    \[
        \partial_q d_\tau(t,q,x)
        = \partial_u b_\tau(t,t+q,x) \left( \frac{\partial q}{\partial u} \right)
        = 
        \partial_u b_\tau(t,t+q,x),
    \]
    so \eqref{eq:b-linear-pde-unified} becomes
    \begin{align}
        \partial_\tau b_\tau(t,u,x) &= -(u-t)\,\partial_u b_\tau(t,u,x)
        \notag \\
        \partial_\tau b_\tau(t,u,x) + q\,\partial_u b_\tau(t,u,x) &= 0
        \notag \\
        \partial_\tau d_\tau(t,q,x)+q\,\partial_q d_\tau(t,q,x) &= 0 \label{eq:d-linear-pde-unified}.        
    \end{align}

    We recognize this as a transport equation.
    In general, a transport equation $\partial_t F(t,x) + v(t,x) \partial_x F(t,x) = 0$ says that $F(t,x)$ is conserved over time $t$ along trajectories of $x$ following $dx/dt = v$.
    In our case, we have that $d_\tau(t,q,x)$ is conserved over time $\tau$ along trajectories of $q$ following $dq/d\tau = q$. 

    Solve this exactly as in the Eulerian case:
    \[
        d_\tau(t,q,x)=d_0 \bigl(t,e^{-\tau}q,x\bigr).
    \]
    Therefore
    \begin{align}
        b_\tau(t,u,x)
        &=
        d_\tau(t,u-t,x)
        \notag\\
        &=
        d_0 \bigl(t,e^{-\tau}(u-t),x\bigr)
        \notag\\
        &=
        b_0 \bigl(t,t+e^{-\tau}(u-t),x\bigr)
        \notag
    \end{align}
    Thus, semi-gradient flow evolves $b$ as:
    \begin{equation}
        b_\tau(t,u,x) = b_0 \bigl(t,s^\tau_{u{\shortrightarrow}t},x\bigr) \label{eq:b-explicit-unified}.
    \end{equation}

    To recover \(F_\tau\), apply \(F^*(t,u,\cdot)\) to both sides:
    \begin{align}
        F^*(u, t, F_\tau(t,u,x)) &= b_\tau(t,u,x)
        \notag\\
        F_\tau(t,u,x) &= F^*(t,u,b_\tau(t,u,x))
        \notag\\
        &=
        F^*(t,u,b_0(t,s^\tau_{u{\shortrightarrow}t},x))
        \notag\\
        &=
        F^*(t,u,F^*(s^\tau_{u{\shortrightarrow}t}, t, F_0(t,s^\tau_{u{\shortrightarrow}t},x)))
        \notag
    \end{align}
    Thus:
    \begin{equation}
        F_\tau(t,u,x) = F^*(s^\tau_{u{\shortrightarrow}t}, u, F_0(t,s^\tau_{u{\shortrightarrow}t},x)),
    \end{equation}
    by the semigroup property \(F^*(t,u,\cdot)\circ F^*(s,t,\cdot) = F^*(s,u,\cdot)\).
    This proves \eqref{eq:f-explicit-lagrangian-unified}.

    Finally, as \(\tau\to\infty\), \(s^\tau_{u{\shortrightarrow}t}\to t\). By continuity of
    \(F_0\) and the diagonal identity \(F_0(t,t,x)=x\),
    \begin{align}
        F_\tau(t,u,x)
        &=
        F^*(s^\tau_{u{\shortrightarrow}t}, u, F_0(t,s^\tau_{u{\shortrightarrow}t},x))
        \to
        F^*(t,u,F_0(t,t,x)) = F^*(t,u,x).
    \end{align}
\end{proof}

%%%%%%%%%%%%%%%%%%%%%%%%%%%%%%%%%%%%%%%%%%%%%%%%%%%%%%%%%%%%%%%%%%%%%%%%%%%%%%%%%
\subsection{Exponential convergence rates under bounded derivatives.}
%%%%%%%%%%%%%%%%%%%%%%%%%%%%%%%%%%%%%%%%%%%%%%%%%%%%%%%%%%%%%%%%%%%%%%%%%%%%%%%%%
\label{appsec:exponential_convergence}

In this section, we prove that convergence occurs exponentially fast when the true velocity field has bounded derivatives.
Before presenting the main proposition for exponential convergence, we first provide a lemma that we will use in the Lagrangian case.

\begin{tcolorbox}[boxrule=0pt, frame empty]
\begin{lemma}
    \label{lem:flowmap-lipschitz}
    Suppose that the true velocity field \(v\) is globally \(L\)-Lipschitz in \(x\), uniformly in time (e.g., by the standing regularity conditions, Remark~\ref{rmk:regularity}):
    \begin{equation}
        \|v(s,x)-v(s,y)\|
        \le
        L\,\|x-y\|
        \qquad
        \text{for all } s\in[0,1],\ x,y\in\bbR^d.
        \label{eq:v-global-Lipschitz-rate-prop}
    \end{equation}
    Then the exact flow map \(F^*(t,u,\cdot)\) is \(e^{L(u-t)}\)-Lipschitz in its
    initial condition.
\end{lemma}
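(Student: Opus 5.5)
The plan is the standard Grönwall argument on the difference of two trajectories. Fix $t \le u$ in $[0,1]$ and two initial conditions $x, y \in \bbR^d$. Write $X_s := F^*(t,s,x)$ and $Y_s := F^*(t,s,y)$ for $s \in [t,u]$. Under the standing regularity conditions these are the unique $C^1$ solutions of $\dot X_s = v(s,X_s)$ with $X_t = x$, and $\dot Y_s = v(s,Y_s)$ with $Y_t = y$ (Picard--Lindel\"of, as used in Lemma~\ref{lem:char}). The goal is
\begin{align*}
\|X_u - Y_u\| \le e^{L(u-t)}\|x-y\|.
\end{align*}

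First, I will pass to the integral form
\begin{align*}
X_s - Y_s = x - y + \int_t^s \big(v(r,X_r) - v(r,Y_r)\big)\,dr.
\end{align*}
Taking norms and applying the Lipschitz bound \eqref{eq:v-global-Lipschitz-rate-prop} inside the integral gives
\begin{align*}
\phi(s) := \|X_s - Y_s\| \le \|x-y\| + L\int_t^s \phi(r)\,dr.
\end{align*}
The function $\phi$ is continuous because $X$ and $Y$ are continuous.

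Second, I will apply the integral form of Gr\"onwall's inequality on $[t,u]$, which yields $\phi(s) \le \|x-y\|\,e^{L(s-t)}$. To keep the argument self-contained, I can use the auxiliary function $\psi(s) := e^{-L(s-t)}\int_t^s \phi(r)\,dr$. Its derivative is
\begin{align*}
\psi'(s) = e^{-L(s-t)}\Big(\phi(s) - L\int_t^s\phi(r)\,dr\Big) \le e^{-L(s-t)}\|x-y\|.
\end{align*}
Integrating this bound from $t$ to $s$ and substituting back into the inequality for $\phi$ gives the same bound. Evaluating at $s = u$ then gives exactly the claimed Lipschitz constant $e^{L(u-t)}$.

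Note that the integral-form Gr\"onwall avoids differentiating $\|X_s - Y_s\|$, which is not differentiable where the two trajectories coincide. Beyond that, the only point needing care is that global existence of both trajectories on $[t,u]$ is guaranteed. This follows from the global Lipschitz condition together with the continuity of $v$ in time, so there is no blow-up and both trajectories are defined on all of $[0,1]$. I do not expect any genuine obstacle. If backward-time maps ($u < t$) were needed, the identical argument run in reversed time would give the constant $e^{L|u-t|}$.
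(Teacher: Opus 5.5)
Your proof is correct and follows the same argument as the paper: Gr\"onwall's inequality applied to the difference of two trajectories of the true flow. The only difference is that you use the integral form of Gr\"onwall, which avoids the paper's implicit assumption that $s \mapsto \|Z_1(s)-Z_2(s)\|$ is differentiable.
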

\end{tcolorbox}

\begin{proof}
    Let \(y_1,y_2\in\bbR^d\), and consider two trajectories $Z_i(s):= F^*(t, s,y_i)$ for $i=1,2$.
    By definition of the exact flow, each \(Z_i\) solves $\frac{d}{ds}Z_i(s)=v\bigl(s,Z_i(s)\bigr)$ with $Z_i(t)=y_i$.
    Subtract the two ODEs:
    \[
        \frac{d}{ds}\bigl(Z_1(s)-Z_2(s)\bigr)
        =
        v\bigl(s,Z_1(s)\bigr)-v\bigl(s,Z_2(s)\bigr).
    \]
    Taking norms and using the global Lipschitz bound
    \eqref{eq:v-global-Lipschitz-rate-prop},
    \[
        \frac{d}{ds}\|Z_1(s)-Z_2(s)\|
        \le
        L\,\|Z_1(s)-Z_2(s)\|.
    \]
    Gr\"onwall's inequality then yields
    \[
        \|Z_1(u)-Z_2(u)\|
        \le
        e^{L(u-t)}\|Z_1(t)-Z_2(t)\|
        =
        e^{L(u-t)}\|y_1-y_2\|.
    \]
    Since \(Z_i(u) =F^*(t,u,y_i)\), we conclude that
    \begin{equation}
        \|F^*(t,u,y_1)-F^*(t,u,y_2)\|
        \le
        e^{L(u-t)}\|y_1-y_2\|.
        \label{eq:fstar-Lipschitz-explicit}
    \end{equation}
\end{proof}

\begin{tcolorbox}[boxrule=0pt, frame empty]
\begin{proposition}[Exponential convergence rates]
    \label{prop:exp-rates-unified}
    Assume the hypotheses of
    Proposition~\ref{prop:rectifying-templates-unified}.
    Assume in addition that the true velocity field \(v\) is globally
    \(L\)-Lipschitz in \(x\), uniformly in time, such that the exact flow map \(F^*(t,u,\cdot)\) is \(e^{L(u-t)}\)-Lipschitz in its
    initial condition.

    \begin{enumerate}
        \item[\textbf{(a)}] \textbf{Eulerian case} (MeanFlow and simplified \gls{esd}).
        Assume
        \begin{equation}
            M_a
            :=
            \sup_{0\le t\le u\le 1}\sup_{z\in\bbR^d}
            \bigl\|\partial_t a_0(t,u,z)\bigr\|
            <\infty.
            \label{eq:Ma-def-unified}
        \end{equation}
        Then
        \begin{equation}
            \|F_\tau(t,u,x)-F^*(t,u,x)\|
            \le
            M_a\,e^{-\tau}(u-t).
            \label{eq:eulerian-rate-map-unified}
        \end{equation}

        \item[\textbf{(b)}] \textbf{Lagrangian case} (simplified \gls{lsd}).
        Assume
        \begin{equation}
            M_b
            :=
            \sup_{0\le t\le u\le 1}\sup_{x\in\bbR^d}
            \bigl\|\partial_u b_0(t,u,x)\bigr\|
            <\infty.
            \label{eq:Mb-def-unified}
        \end{equation}
        Then
        \begin{equation}
            \|F_\tau(t,u,x)-F^*(t,u,x)\|
            \le
            e^{L(u-t)}\,M_b\,e^{-\tau}(u-t).
            \label{eq:lagrangian-rate-map-unified}
        \end{equation}
    \end{enumerate}
\end{proposition}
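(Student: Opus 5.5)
The plan is to reuse the closed-form representations of the combined maps from the proof of Proposition~\ref{prop:rectifying-templates-unified}, namely \eqref{eq:a-explicit-unified}, $a_\tau(t,u,z)=a_0(s^\tau_{t\to u},u,z)$, and \eqref{eq:b-explicit-unified}, $b_\tau(t,u,x)=b_0(t,s^\tau_{u\to t},x)$. I will compare each against its value at the endpoint of the hand-off interval, where it reduces to the true flow map, and then bound the difference by the mean value inequality along the segment between the hand-off time and that endpoint. The segment has length exactly $e^{-\tau}(u-t)$.

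\textbf{Eulerian case.} First I identify the limit. We have $a_0(u,u,z)=F_0(u,u,F^*(0,u,z))=F^*(0,u,z)$. Setting $z=F^*(t,0,x)$ and using the semigroup property gives $a_0(u,u,F^*(t,0,x))=F^*(t,u,x)$. Also, $F_\tau(t,u,x)=a_0(s^\tau_{t\to u},u,F^*(t,0,x))$, as derived in the proof of Proposition~\ref{prop:rectifying-templates-unified}. Hence
\[
F_\tau(t,u,x)-F^*(t,u,x)=a_0(s,u,z)-a_0(u,u,z),\qquad s=s^\tau_{t\to u},\ z=F^*(t,0,x).
\]
Since $a_0$ is $C^1$ in its first argument and $s\le u$, the segment $[s,u]$ lies in the triangle $\{s'\le u\}$. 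The vector-valued mean value inequality, $\|a_0(s,u,z)-a_0(u,u,z)\|\le \int_s^u\|\partial_t a_0(r,u,z)\|\,dr$, then yields the bound $M_a(u-s)=M_a e^{-\tau}(u-t)$, which is \eqref{eq:eulerian-rate-map-unified}.

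\textbf{Lagrangian case.} Here the endpoint is $b_0(t,t,x)=F^*(t,t,F_0(t,t,x))=x$. From the proof of Proposition~\ref{prop:rectifying-templates-unified}, $F_\tau(t,u,x)=F^*(t,u,b_\tau(t,u,x))$, and trivially $F^*(t,u,x)=F^*(t,u,b_0(t,t,x))$. I apply Lemma~\ref{lem:flowmap-lipschitz} to the outer $F^*(t,u,\cdot)$ to get
\[
\|F_\tau-F^*\|\le e^{L(u-t)}\,\|b_0(t,s^\tau_{u\to t},x)-b_0(t,t,x)\|.
\]
I then bound the remaining difference by integrating $\partial_u b_0$ over $[t,s^\tau_{u\to t}]$. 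That interval has length $e^{-\tau}(u-t)$, which gives \eqref{eq:lagrangian-rate-map-unified}.

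\textbf{Main obstacle.} There is no real analytic difficulty. The main point of care is bookkeeping: the correct reference point ($s=u$ in the Eulerian case, $s=t$ in the Lagrangian case) must be the one at which the combined map collapses to the true flow map, and the integration segments must stay inside $\{t\le u\}$, where $F_0$ and $M_a,M_b$ are defined. One also has to check that the backward evaluations $F^*(t,0,\cdot)$ and $F^*(u,t,\cdot)$ are legitimate; they are, by the standing regularity conditions, exactly as in the convergence proof. Finally, the Lagrangian bound needs the Lipschitz constant in the initial condition rather than any derivative bound. This is why the factor $e^{L(u-t)}$ appears there but not in the Eulerian estimate.
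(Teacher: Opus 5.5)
Your proposal is correct and is essentially the paper's proof. Both use the closed forms $a_\tau(t,u,z)=a_0(s^\tau_{t{\shortrightarrow}u},u,z)$ and $b_\tau(t,u,x)=b_0(t,s^\tau_{u{\shortrightarrow}t},x)$, bound the distance to the diagonal value ($a_0(u,u,z)=F^*(0,u,z)$, respectively $b_0(t,t,x)=x$) by integrating the assumed derivative bound over an interval of length $e^{-\tau}(u-t)$, and return to $(t,u,x)$ via $z=F^*(t,0,x)$ with the semigroup property (Eulerian) or via Lemma~\ref{lem:flowmap-lipschitz} (Lagrangian); the only difference is that you substitute or apply the Lipschitz estimate before integrating rather than after, which is immaterial.
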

\end{tcolorbox}

\begin{proof}
    We again treat the Eulerian and Lagrangian cases separately.

    \medskip
    \noindent
    \textbf{(a) Eulerian case.}

    By \eqref{eq:a-explicit-unified}, $a_\tau(t,u,z)=a_0\bigl(s^\tau_{t{\shortrightarrow}u},u,z\bigr)$.
    Also, $a_0(u,u,z)
        =
        F_0\bigl(u,u,F^*(0,u,z)\bigr)
        =
        F^*(0,u,z)$,
    because every admissible flow-map parametrization satisfies \(F_0(u,u,y)=y\).

    We now compare \(a_\tau(t,u,z)\) and \(a_0(u,u,z)\). The point is that, for
    fixed \(u\) and \(z\), the map $\sigma \mapsto a_0(\sigma,u,z)$ is a \(C^1\) function of one real variable on the interval \([0,u]\).
    Therefore the fundamental theorem of calculus gives
    \begin{align}
        a_0\bigl(s^\tau_{t{\shortrightarrow}u},u,z\bigr)-a_0(u,u,z)
        &=
        -\int_{s^\tau_{t{\shortrightarrow}u}}^{u}
        \partial_1 a_0(\sigma,u,z) \,d\sigma.
        \label{eq:FTC-eulerian-step}
    \end{align}
    Taking norms and using the uniform bound
    \(\|\partial_t a_0(t,u,z)\|\le M_a\),
    \begin{align}
        \|a_\tau(t,u,z)-F^*(0,u,z)\|
        &=
        \bigl\|
            a_0\bigl(s^\tau_{t{\shortrightarrow}u},u,z\bigr)-a_0(u,u,z)
        \bigr\|
        \notag\\
        &\le
        \int_{s^\tau_{t{\shortrightarrow}u}}^u
        \|\partial_1 a_0(\sigma,u,z)\|\,d\sigma
        \notag\\
        &\le
        \int_{s^\tau_{t{\shortrightarrow}u}}^u M_a\,d\sigma
        \notag\\
        &=
        M_a\,\bigl(u-s^\tau_{t{\shortrightarrow}u}\bigr).
        \label{eq:eulerian-bound-before-gap}
    \end{align}
    Using $s^\tau_{t{\shortrightarrow}u} = u-e^{-\tau}(u-t)$,
    so $u-s^\tau_{t{\shortrightarrow}u} = e^{-\tau}(u-t)$, we get:
    \begin{equation}
        \|a_\tau(t,u,z)-F^*(0,u,z)\|
        \le
        M_a\,e^{-\tau}(u-t).
        \label{eq:a-rate-step-unified}
    \end{equation}
    We now convert this bound back to the original \((t,u,x)\)-variables, by setting $z=F^*(t,0,x)$.
    Since \(a_\tau\) was defined by $a_\tau(t,u,z)=F_\tau\bigl(t,u,F^*(0,t,z)\bigr),$
    and since \(F^*(0,t,F^*(t,0,x))=x\), we obtain $a_\tau\bigl(t,u,F^*(t,0,x)\bigr)=F_\tau(t,u,x)$.
    Similarly, $F^*(0,u,F^*(t,0,x))=F^*(t,u,x)$
    by the semigroup property of the true flow. Therefore,
    \eqref{eq:a-rate-step-unified} becomes
    \[
        \|F_\tau(t,u,x)-F^*(t,u,x)\|
        \le
        M_a\,e^{-\tau}(u-t),
    \]
    which is exactly \eqref{eq:eulerian-rate-map-unified}.

    \medskip
    \noindent
    \textbf{(b) Lagrangian case.}

    By \eqref{eq:b-explicit-unified}, $b_\tau(t,u,x)=b_0\bigl(t,s^\tau_{u{\shortrightarrow}t},x\bigr).$
    Also, $b_0(t,t,x)=F^*(t,t,F_0(t,t,x))=x$.
    As in the Eulerian case, we now make the one-variable estimate explicit.
    For fixed \(t\) and \(x\), the map $\sigma \mapsto b_0(t,\sigma,x)$ is a \(C^1\) function of one real variable on the interval \([t,1]\).
    Therefore the fundamental theorem of calculus gives
    \begin{align}
        b_0\bigl(t,s^\tau_{u{\shortrightarrow}t},x\bigr)-b_0(t,t,x)
        &=
        \int_t^{s^\tau_{u{\shortrightarrow}t}}
        \partial_2 b_0(t,\sigma,x) \,d\sigma.
        \label{eq:FTC-lagrangian-step}
    \end{align}
    Taking norms and using the uniform bound \(\|\partial_u b_0(t,u,x)\|\le M_b\),
    % \begin{align}
    %     \|b_\tau(t,u,x)-x\|
    %     &=
    %     \bigl\|
    %         b_0\bigl(t,s^\tau_{u{\shortrightarrow}t},x\bigr)-b_0(t,t,x)
    %     \bigr\|
    %     \notag\\
    %     &\le
    %     \int_t^{s^\tau_{u{\shortrightarrow}t}}
    %     \|\partial_2 b_0(t,\sigma,x)\|\,d\sigma
    %     \notag\\
    %     &\le
    %     \int_t^{s^\tau_{u{\shortrightarrow}t}} M_b\,d\sigma
    %     \notag\\
    %     &=
    %     M_b\,\bigl(s^\tau_{u{\shortrightarrow}t}-t\bigr).
    %     \label{eq:lagrangian-bound-before-gap}
    % \end{align}
    \begin{align}
        \|b_\tau(t,u,x)-x\|
        &=
        \bigl\|
            b_0\bigl(t,s^\tau_{u{\shortrightarrow}t},x\bigr)-b_0(t,t,x)
        \bigr\|
        \notag 
        \le
        \int_t^{s^\tau_{u{\shortrightarrow}t}}
        \|\partial_2 b_0(t,\sigma,x)\|\,d\sigma
        \notag \\
        &\le
        \int_t^{s^\tau_{u{\shortrightarrow}t}} M_b\,d\sigma
        \notag
        =
        M_b\,\bigl(s^\tau_{u{\shortrightarrow}t}-t\bigr).
        \label{eq:lagrangian-bound-before-gap}
    \end{align}
    Using $s^\tau_{u{\shortrightarrow}t} = t+e^{-\tau}(u-t)$, we get:
    \begin{equation}
        \|b_\tau(t,u,x)-x\|
        \le
        M_b\,e^{-\tau}(u-t).
        \label{eq:b-rate-step-unified}
    \end{equation}
    Since $F_\tau(t,u,x)=F^*(t,u,b_\tau(t,u,x))$,
    we may apply the Lipschitz estimate \eqref{eq:fstar-Lipschitz-explicit}
    with initial points \(b_\tau(t,u,x)\) and \(x\):
    \begin{align}
        \|F_\tau(t,u,x)-F^*(t,u,x)\|
        &=
        \bigl\|
            F^*(t,u,b_\tau(t,u,x))-F^*(t,u,x)
        \bigr\|
        \notag\\
        &\le
        e^{L(u-t)}\,\|b_\tau(t,u,x)-x\|
        \notag\\
        &\le
        e^{L(u-t)}\,M_b\,e^{-\tau}(u-t),
    \end{align}
    which proves \eqref{eq:lagrangian-rate-map-unified}.
\end{proof}

\section{Stationary points and convergence of Improved MeanFlow}
\label{appsec:imf_convergence}

Improved MeanFlow (iMF) \citep{geng2025improved} is the objective \eqref{eq:imf_func}.
Its stationary points are characterized in Corollary~\ref{cor:imf}, proved in \S\ref{app:corollaries}: the true flow map is the unique stationary point in $\mathcal{M}$.
Since $Q \equiv 0$, the label noise is additive and Corollary~\ref{cor:uniqueQ} applies: the true flow map is the unique point of $\mathcal{M}$ attaining the conditional-variance floor.

For convergence, the semi-gradient given by Theorem~\ref{thm:principle} is
\begin{align}
    g_{\mathrm{iMF}}[f](t,u,x)
    = f(t,u,x) - v(t,x) - (u-t)\Big(\partial_t f(t,u,x) + (\partial_x f)(t,u,x)\, f(t,t,x)\Big).
    \label{eq:g-imf-setup}
\end{align}
On the diagonal, $g_{\mathrm{iMF}}[f](t,t,x) = f(t,t,x) - v(t,x)$.
Hence along the semi-gradient flow $\partial_\tau f_\tau = -g_{\mathrm{iMF}}[f_\tau]$ the diagonal obeys the linear equation $\partial_\tau\big(f_\tau(t,t,x) - v(t,x)\big) = -\big(f_\tau(t,t,x) - v(t,x)\big)$, so
\begin{align}
    f_\tau(t,t,x) - v(t,x) = e^{-\tau}\big(f_0(t,t,x) - v(t,x)\big).
\end{align}
Under perfect flow-matching pretraining \eqref{eq:perfect-fm-pretrain-unified}, $f_\tau(t,t,\cdot) = v(t,\cdot)$ for all $\tau$, the analogue of Lemma~\ref{lem:diag-invariance-esd-lsd-unified}.
Substituting into \eqref{eq:g-imf-setup},
\begin{align}
    g_{\mathrm{iMF}}[f_\tau] = g_{\mathrm{MF}}[f_\tau] = -\,E_v[f_\tau]
\end{align}
by \eqref{eq:mf-equals-minus-Ev}, so $F_\tau$ obeys the Eulerian flow \eqref{eq:mf-f-flow-unified}.
Proposition~\ref{prop:rectifying-templates-unified}(a) and Proposition~\ref{prop:exp-rates-unified} therefore apply verbatim: $F_\tau$ is the closed-form composition \eqref{eq:f-explicit-eulerian-unified} of $F_0$ and $F^*$, and converges to $F^*$.

\section{Numerical experiments on convergence}
\label{appsec:convergence-numerical-experiments}

We numerically simulate functional semi-gradient flow from an initial $F_0$ over optimization time $\tau$ according to equations \ref{eq:eulerian-gradient-flow}, \ref{eq:lagrangian-gradient-flow}.
We compare the numerically simulated $F_\tau$ to its theoretical value from equations \ref{eq:f-explicit-eulerian-unified} and \ref{eq:f-explicit-lagrangian-unified}.
We model $F$ on a discretized tensor grid for $(t,u,x)$, and compute $\partial_t F, \partial_u F, \partial_x F$ using finite differences. These enable computing $\partial_\tau F_\tau$.
We use fourth-order Runge-Kutta to simulate the semi-gradient flow \gls{pde} in $\tau$.
At each step, we enforce the diagonal identity $F_\tau(t,t,x)=x$ exactly by overwriting elements in the tensor grid where $t=u$.
We perform experiments in 1 to 5 spatial dimensions, for 3 to 7 total dimensions, as grid discretization makes compute cost scale exponentially with the number of dimensions.
Our experiments took seconds to minutes to run on one A100.

\paragraph{Velocity field.}
We use an affine velocity field: $v(x) = Ax +b$, which allows the exact flow map $F^*(t,u,x)$ to be computed efficiently in closed form via a matrix exponential.

\paragraph{Initial flow maps.} We construct $F_0(t,u,x) = F^*(t,u,x) + (u-t) \delta(t,u,x)$ so that $F_0(t,t,x)=x$ automatically. We choose $\delta$ to be smooth trigonometric factors in $t,u,$ and $x$. 

%%%%%%%%%%%%%%%%%%%%%%%%%%%%%%%%%%%%%%%%%%%%%%%%%%%%%%%%%%%%%%%%%%%%%%%%%%%%%%%%%
\section{Differentiating w.r.t.\ $t$ versus $u$}
\label{appsec:t_vs_u}

In this work, $F(t,u,x)$ maps forward in time from $X_t=x$ to $X_u$ along the flow $dX_s = v(s,X_s)$. In Meanflow \citep{geng2025meanflow},
the map is written in reverse time. The transport \gls{pde} can be obtained by differentiating the forward-time flow map with respect to $t$, or the reverse-time flow map with respect to $u$. In both cases, let $t \leq u$:
\begin{align}
    F_{\text{fwd}}(t,u,x) = x + \int_t^u v(s, X_s)
      = x + \int_t^u v(s, F_{\text{fwd}}(t,s,x))ds
\end{align}
On the other hand, if moving in reverse time:
\begin{align}
    F_{\text{rev}}(t,u,x) = x + \int_u^t v(s, X_s) 
        = x - \int_t^u v(s, F_{\text{rev}}(s,u,x)) ds
\end{align}
Differentiating via Leibniz rule yields:
\begin{align}
    \partial_t F_{\text{fwd}}(t,u,x)
    +
    (\partial_x F_{\text{fwd}})(t,u,x)
    v(t,x) &= 0\\
    \partial_u F_{\text{rev}}(t,u,x)
    +
    (\partial_x F_{\text{rev}})(t,u,x)
    v(u,x) &= 0
\end{align}
In both cases, the PDE is backward transport in the time variable carrying the spatial argument $x$; the other time index is a parameter.

\section{Experimental Methods}
\label{appsec:cifar_experiments}
\label{appsec:experiment_details}

\subsection{CIFAR-10 Experiments}

\paragraph{Data.} We use the CIFAR-10 dataset \citep{krizhevsky2009learning}.

\paragraph{Architecture.}
We use the Song U-Net used in \cite{geng2025meanflow}.
We use image resolution 32, in channels 3, out channels 3, model channels 18, num blocks 4, channel mult noise 2, resample filter [1,3,3,1], channel mult [2,2,2], encoder type standard, and decoder type standard.

\paragraph{Compute.} We ran each experiment (a given method) for 200k gradient steps. To accomplish this, for each experiment we used 8x A100 GPUs. Each method was able to reach 200k steps with 96 hours of training.

\paragraph{Training Details.}

\begin{itemize}

\item \textit{Dataloading.} We use pin-mem = True and 4 workers.

\item \textit{Time sampling.} We use min times 1e-4 and max times 1 - 1e-4 for both training and sampling. We use probability t equals u set to 0.25.

\item \textit{EMA.} We use EMA decay 0.9999. We start the EMA model at the 40k training steps model, and update once per gradient step thereafter.

\item \textit{Batch Size.} We use batch size 64 per GPU and use 8 GPUs, meaning the effective batch size is 64*8=512.

\item \textit{Optimization.} We use the AdamW optimizer with betas (0.9, 0.999). We use base learning rate 2e-4. We use linear warmup from 1e-6 up to the base learning rate for the first 20,000 steps. After warmup, the learning rate is constant at the base learning rate.

\item \textit{Seed.} We use seed 0.

\item \textit{Velocity Warmup.} We do not use any velocity warmup steps, similar to \cite{geng2025meanflow} and unlike \cite{boffi2025build}.

\item \textit{Dropout.} We use dropout 0.20

\item \textit{Adaptive Loss Weight.} We use the adaptive loss reweighting from \cite{geng2025meanflow}
with norm-p set to 0.75 and norm-eps set to 1e-3.

\end{itemize}

\paragraph{Evaluation Details.}

We evaluate FID from the non-EMA and EMA models every 25,000 steps. We evaluate this FID for several different sampling step counts per checkpoint, with number of function evaluations in \{1,10,50,100\}. We compute FID with the standard 50,000 samples using batch size 128.
To compute FID, we use \texttt{torchmetrics.image.fid.FrechetInceptionDistance}.

\subsection{ImageNet Experiments}
We use the B4 model from the meanflow-jax github repository which is a 90M diffusion transformer.
We adopt their default settings, which uses patch size 4, learning rate of 0.0001, batch size 256, adam b2 of 0.95, and adaptive L2 weighting norm power of 1.0.

We run flow-matching warmup for 10 epochs, and train for a total of 40 epochs.
We use gradient surgery, prioritizing the diagonal gradient. 
Each run trained on 4x B200s, and training took about 8 hours.

\section{Table of all Flow Map Objectives}

In Table \ref{tab:flowmap_objective_comparison}, we provide a table of all flow map objectives discussed in this work, along with derivative cost (first order or second order), whether stationary points were previously characterized, unknown, or characterized in this work, and whether the objective was introduced in prior work or this work.

\begin{table}[H]
\centering
\scriptsize
\setlength{\tabcolsep}{4pt}
\renewcommand{\arraystretch}{1.3}

\begin{tabularx}{\textwidth}{
    @{}
    >{\RaggedRight\arraybackslash}p{1.0cm}
    >{\RaggedRight\arraybackslash}p{9cm}
    >{\Centering\arraybackslash}p{1.0cm}
    >{\Centering\arraybackslash}p{1.0cm}
    >{\RaggedRight\arraybackslash}p{1.0cm}
}
    \toprule
    \textbf{Method}
    &
    \textbf{Objective}
    &
    \textbf{Derivative cost order}
    &
    \textbf{Stationary point}
    &
    \textbf{Novelty}
    \\
    \midrule
    
    MeanFlow
    &
    \(\displaystyle
    \E\left\|
    f(t,u,x)-\dot x_t
    -(u-t)\sgb{
    \partial_t f(t,u,x)
    +(\partial_xf)(t,u,x)\dot x_t
    }
    \right\|^2
    \)
    &
    First
    &
    This work
    &
    \citep{geng2025meanflow}
    \\
    \midrule
    
    Improved MeanFlow
    &
    \(\displaystyle
    \E\left\|
    f(t,u,x)
    -\dot x_t
    -(u-t)\sgb{
    \partial_t f(t,u,x)
    +(\partial_x f)(t,u,x)f(t,t,x)
    }
    \right\|^2
    \)
    &
    First
    &
    This work
    &
    \citep{geng2025improved}
    \\
    \midrule
    
    ESD
    &
    \(\displaystyle
    \E\left\|
    \partial_tF(t,u,x)
    +(\partial_xF)(t,u,x)f(t,t,x)
    \right\|^2
    +\mathcal L_{\mathrm{FM}}
    \)
    &
    Second
    &
    Known
    &
    \citep{boffi2025build}
    \\
    \midrule
    
    ESD original SG
    &
    \(\displaystyle
    \E\left\|
    \partial_tF(t,u,x)
    +\sgb{
    (\partial_xF)(t,u,x)f(t,t,x)
    }
    \right\|^2
    +\mathcal L_{\mathrm{FM}}
    \)
    &
    Second
    &
    Unknown
    &
    \citep{boffi2025build}
    \\
    \midrule
    
    Slim ESD
    &
    \(\displaystyle
    \begin{aligned}
        &\E\left\|
        f(t,u,x)
        -\sgb{f(t,t,x)
        + (u-t)(
        \partial_t f(t,u,x)
        +(\partial_x f)(t,u,x)f(t,t,x))
        }
        \right\|^2 \\
        &+\mathcal L_{\mathrm{FM}}
    \end{aligned}
    \)
    &
    First
    &
    This work
    &
    This work
    \\
    \midrule
    
    LSD
    &
    \(\displaystyle
    \E\left\|
    \partial_u F(t,u,x)
    -
    f\!\left(u,u,F(t,u,x)\right)
    \right\|^2
    +\mathcal L_{\mathrm{FM}}
    \)
    &
    Second
    &
    Known
    &
    \citep{boffi2025build}
    \\
    \midrule
    
    LSD original SG
    &
    \(\displaystyle
    \E\left\|
    f(t,u,x)
    +(u-t)\partial_u f(t,u,x)
    -\sgb{
    f\!\left(u,u,F(t,u,x)\right)
    }
    \right\|^2
    +\mathcal L_{\mathrm{FM}}
    \)
    &
    Second
    &
    Unknown
    &
    \citep{boffi2025build}
    \\
    \midrule
    
    Slim LSD
    &
    \(\displaystyle
    \E\left\|
    f(t,u,x)
    +(u-t)\sgb{\partial_u f(t,u,x)}
    -\sgb{
    f\!\left(u,u,F(t,u,x)\right)
    }
    \right\|^2
    +\mathcal L_{\mathrm{FM}}
    \)
    &
    First
    &
    This work
    &
    This work
    \\
    \bottomrule
\end{tabularx}

\caption{
Comparison of flow-map objectives, using
\(F(t,u,x)=x+(u-t)f(t,u,x)\) and the interpolation convention
that \(t=1\) is the data endpoint.
}
\label{tab:flowmap_objective_comparison}
\end{table}

% arXiv draft: NeurIPS checklist omitted
% \newpage
% \input{checklist.tex}

\end{document}